\documentclass[twoside]{article}





\usepackage[accepted]{aistats2021}
\usepackage{natbib}
\usepackage[utf8]{inputenc} 
\usepackage[T1]{fontenc}    
\usepackage{hyperref}       
\hypersetup{
    colorlinks=true,
    linkcolor=blue,
    urlcolor=blue,
    linktoc=all,
    citecolor=blue}
\usepackage{url}            
\usepackage{booktabs}       
\usepackage{amsfonts}       
\usepackage{nicefrac}       
\usepackage{microtype}      
\usepackage{todonotes}
\usepackage{multirow}
\usepackage{amsmath,amsthm,amsfonts,amssymb}
\usepackage{enumerate,color,xcolor}
\usepackage{graphicx}

\usepackage{subfig}
\usepackage{caption}

\usepackage{url}

\numberwithin{equation}{section}
\theoremstyle{plain}

\newtheorem{theorem}{Theorem}

\newtheorem{corollary}[theorem]{Corollary}

\newtheorem{lemma}[theorem]{Lemma}

\newtheorem{proposition}[theorem]{Proposition}

\newtheorem{remark}[theorem]{Remark}

\newcommand{\mg}[1]{\textcolor{purple}{#1}}

\usepackage{comment}
\usepackage{wrapfig}


\usepackage{dsfont}
\numberwithin{equation}{section}

\newcommand{\Yuanhan}[1]{\textcolor{red}{#1}}
\newcommand{\beq}{\begin{eqnarray}}
\newcommand{\eeq}{\end{eqnarray}}
\newcommand{\beqs}{\begin{eqnarray*}}
\newcommand{\eeqs}{\end{eqnarray*}}

\DeclareUnicodeCharacter{FB01}{fi}

\begin{document}

\runningauthor{Mert G\"urb\"uzbalaban and Yuanhan Hu}

\twocolumn[
\aistatstitle{Fractional moment-preserving initialization schemes
for training deep neural networks}

\aistatsauthor{ Mert G\"urb\"uzbalaban\\  \And Yuanhan Hu\\ }

\aistatsaddress{\texttt{mg1366@rutgers.edu}\\Department of Management\\ Science and Information Systems\\ 
  Rutgers Business School\\ 
  Piscataway, NJ 08550 \And \texttt{yuanhan.hu@rutgers.edu}\\Department of Management\\ Science and Information Systems\\ 
  Rutgers Business School\\ 
  Piscataway, NJ 08550} 
]

%

 


\begin{abstract}
\vspace{-7pt}
A traditional approach to initialization in deep neural networks (DNNs) is to sample the network weights randomly for preserving the second moment of layer outputs.  On the other hand, recent results show that training with SGD can result in heavy-tailedness in the distribution of the network weights with a potentially infinite variance. This suggests that the traditional approach to initialization may be restrictive as SGD updates do not necessarily preserve the finiteness of the variance of layer outputs. Motivated by this, we develop initialization schemes for fully connected feed-forward networks that can provably preserve any given moment of order $s\in (0,2]$ over the layers for a class of activations including ReLU, Leaky ReLU, Randomized Leaky ReLU and linear activations. These generalized schemes recover traditional initialization schemes in the limit $s \to 2$ and serve as part of a principled theory for initialization. For all these schemes, we show that the network output admits a ﬁnite almost sure limit as the number of layers grows, and the limit is heavy-tailed in some settings. We also prove that the logarithm of the norm of the network outputs, if properly scaled, will converge to a Gaussian distribution with an explicit mean and variance we can compute depending on the activation used, the value of $s$ chosen and the network width, where log-normality serves as a further justiﬁcation of why the norm of the network output can be heavy-tailed in DNNs. We also prove that our initialization scheme avoids small network output values more frequently compared to traditional approaches. Our results extend if dropout is used and the proposed initialization strategy does not have an extra cost during the training procedure. We show through numerical experiments that our initialization can improve the initial stages of training.
\end{abstract}
\vspace{-7pt}
\section{Introduction}
\vspace{-7pt}
Initialization of the weights of a deep neural network (DNN) plays a crucial role on the training and test performance 
\citep{daniely2016toward,hanin2018start,sutskever2013importance} where random weight initialization often yields
a favorable starting point for optimization \citep{daniely2016toward}. A common traditional approach to initialization that goes back to 1990's is to initialize the weights randomly in a way to preserve the variance of the output of each network layer \citep{lecun1998efficient,bottou-88b} which avoids the network to reduce or magnify the norm of the input signal exponentially. For fully-connected networks with a fixed number of neurons $d$ at each layer with linear activations, this can be achieved by setting the bias vectors to zero and sampling the weights in an independent and identically distributed (i.i.d.) fashion from a Gaussian or uniform distribution with mean zero and variance $\sigma^2 = 1/d$ \citep{lecun1998efficient}, proposed originally for tanh activations \citep{kalman1992tanh}. This initialization is referred to as ``Lecun initialization" in the literature. More recently, \cite{he2015delving} showed that the choice of $\sigma^2 = \frac{2}{d}$ keeps the variance constant if ReLU activation is used instead; where the extra factor of 2 is to account for the fact that ReLU output is zero with probability $1/2$ when input is a mean-zero symmetric distribution without an atom at zero. This initialization is sometimes referred to as ``Kaiming initialization" in the literature \citep{luther2019variance}. A similar initialization rule that can preserve the variance for parametric ReLU and Leaky ReLU activations are also developed in \cite{he2015delving}, where parametric ReLU and Leaky ReLU are variants of ReLU, proposed to improve the performance of ReLU \citep{maas2013rectifier}. 

On the other hand, recent research shows that during the training process with stochastic gradient descent (SGD), the distribution of stochastic gradients can become heavy-tailed over time with a non-Gaussian behavior even though in the initial iterations, stochastic gradients may have a Gaussian-like behavior with a finite variance \citep{simsekli2019tail,csimcsekli2019heavy,gurbuzbalaban2020heavy}, where heavy tailedness of a distribution refers to the fact that its tail is heavier than an exponential distribution \citep{foss2011introduction}. 
In this setting, based on the empirical distribution of stochastic gradients, modelling stochastic gradients with an $\alpha$-stable distribution has been proposed \citep{simsekli2019tail,csimcsekli2019heavy}, which is a distribution that does not have a finite variance but rather has a (fractional) moment of order $s$ satisfying $s<\alpha<2$. This heavy-tailed behavior in the stochastic gradients is also naturally inherited by the network weights due to SGD updates and the amount of heavy tail is also related to the batchsize \citep{panigrahi2019non,gurbuzbalaban2020heavy}. Heavy tails for SGD have also been observed in \cite{zhang2019adam} and have been associated with better generalization \citep{martin2019traditional,csimcsekli2020hausdorff}. In \cite{martin2019traditional}, modelling weights of a well-optimized neural network with a ``Pareto distribution" with shape paremeter $\lambda$ is proposed, which is another heavy-tailed distribution with a power law tail \citep{resnick2007heavy} and an infinite variance when $\lambda<2$ but with a finite $s$-th moment for any $s \in (0,\lambda)$. These results regarding the heavy-tailedness of the network weights with a potentially infinite variance suggest that 
the traditional approach of preserving the second moment and variance of layer outputs at the initialization level may be restrictive as SGD updates do not necessarily preserve the finiteness of the variance after all. This raises the question whether more general initialization schemes that can preserve the $s$-th moment can be developed for a given $s \in (0,2]$ rather than the traditional case which covers only $s=2$.\looseness=-1 

\textbf{Contributions.} 
In this paper, we develop a novel class of initialization schemes that can preserve a fractional moment of order $s\in (0,2]$ over the layer outputs during the forward pass. The schemes are applicable to ReLU, parameteric ReLU, Leaky ReLU, Randomized Leaky ReLU and linear activations for fully-connected deep neural networks. 
We then provide experiments to show that our schemes acts as a warm start for SGD in the sense that it improves the training and test accuracy in the initial stages of training over the MNIST and CIFAR-10 datasets compared to traditional initializations. 
The main idea behind our initialization is to initialize the network weights as i.i.d. Gaussian variables  $\sim \mathcal{N}(0,\sigma^2)$ but adjust the variance $\sigma^2$ in a special way as a function of $s$ to keep the $s$-th moment invariant during the forward pass. To our knowledge, the choice of $\sigma^2$ that can preserve the $s$-th moment of layer output vectors has not been studied in the literature before our work. For this purpose, first we develop analytical formulas that express the $s$-th moment of the $k$-th layer output for any $s\in(0,2)$ and in any dimension $d$ for the ReLU, Leaky ReLU, Randomized Leaky ReLU and linear activations (Theorems \ref{thm-moment-relu}, \ref{thm-leaky-relu}). Our proof relies on adapting the techniques of \cite{cohen1984stability} developed for the products of random matrices with i.i.d. Gaussian entries to nonlinear stochastic recursions arising in forward propagation with nonlinear activations and exploiting the piecewise linear structure of ReLU and parametric ReLU activations. This yields explicit formulas regarding how to choose the initialization weight variance $\sigma^2$ to preserve the $s$-th moment (Corollary \ref{coro-crit-sigma-asymp}, \ref{coro-sigma-lin}). Our initialization scheme allows to choose a larger $\sigma^2$ compared to Kaiming initialization, and is the main reason why with our initialization scheme, network outputs small values relatively less frequently so that small gradients occur less frequently at the initialization. In fact, we show that the logarithm of the norm of the network outputs, if properly scaled, will converge to a Gaussian distribution with an explicit mean and variance we can compute as the number of layers grows (Theorem \ref{thm-relu-log-out}, \ref{thm-log-leaky-relu}), where log-normality serves as a further theoretical justiﬁcation of why the norm of the network output can be heavy-tailed in DNNs. Such a log-normality result was previously shown in \cite{hanin2019products} (see also \citep{hanin2018neural}) for ReLU and linear activations in the regime where the width and depth of the network simultaneously tend to infinity, when the weights are initialized from an arbitrary symmetric distribution with fourth moments; however explicit formulas for the asymptotic mean and variance were not given for the finite width regime. Our results are explicit for finite width and are also applicable to parametric ReLU and Leaky ReLU activations, enabling us to show that if the number of layers is sufficiently large, our scheme will have a first-order stochastic dominance property over the traditional Kaiming initialization in the sense of \cite{hadar1969rules} (see Remarks \ref{remark-stoc-dominance} and \ref{remark-non-asymptotic}). Intuitively speaking, the cumulative distribution function (cdf) of the norm of the network output with our initialization will be strictly shifted to the right compared to the cdf of Kaiming initialization (see Figure \ref{fig:relu-log-out}) and therefore will avoid taking smaller values more often. With zero bias vectors and fixed width over layers, we show that $L_p$ and almost sure limits of network outputs can be only zero or infinity depending on whether $\sigma$ exceeds an explicit threshold we provide (Theorem \ref{thm-almost-sure}). If additive noise is added to post-activations, we show that the almost sure limit of output layers is heavy-tailed for linear activations. The results show that forward pass can make the network output and (hence the gradient of the training cost) heavy-tailed if the variance of network weights exceed a certain threshold, even if the weights are i.i.d. Gaussian (Theorem \ref{thm-heavy-tail}), shedding further light into the origins of heavy tails during signal propagation in DNNs. 
Our results extend if dropout \citep{srivastava2014dropout} is used (Remark \ref{remark-dropout}). 
Also, our framework recovers a number of traditional initialization schemes such as Lecun initialization and Kaiming initialization in the limit as $s\to 2$, and therefore serves as a principled theory for initialization. Furthermore, our results extend naturally to convolutional neural networks, which we discuss in the appendix due to space considerations.

\textbf{Related literature.}  There are alternative approaches to initialization based on taking an average of the width of input and output layers to balance off efficient forward propagation with backward propagation \citep{glorot2010understanding, defazio2019scaling}. In this paper, we consider forward propagation, but backward propagation analysis is almost the same for ReLU and Leaky  ReLU activations by simply replacing the number of input layers with number of output layers in the analysis (see e.g. \citep{he2015delving,glorot2010understanding,defazio2019scaling}) and our initialization schemes can in principle be combined with such averaging strategies. There are also many other strategies that enhance signal propagation in deep networks such as orthogonal matrix initialization \citep{saxe2013exact}, random walk initialization \citep{sussillo2014random}, edge of chaos initialization \citep{yang2017mean,hayou2018selection,schoenholz2016deep} and mean field theory based approaches \citep{xiao2018dynamical,quantization}, batch normalization \citep{ioffe2015batch}, composition kernels \citep{daniely2016toward}, approaches for residual networks \citep{yang2017mean,hanin2018start,ling2019spectrum} as well as development of alternative activation functions \citep{klambauer2017self,clevert2015fast,hayou2018selection} and automating the search for good initializations \citep{dauphin2019metainit}.
\textbf{Notation.} We use standard notation, common in the machine learning literature; however we provide a detailed  discussion of the notation used in our paper in the supplementary material (Appendix \ref{sec-notation}). 
\vspace{-7pt}
\section{Preliminaries and Setting}
\vspace{-7pt}
\textbf{Fully connected feed-forward networks and activation functions.} We consider a fully connected feed-forward deep neural network. Given input data $x^{(0)}\in \mathbb{R}^d$, these networks consist of multiple layers. 
The output of the $k$-th layer which we denote by $x^{(k)}$ follows the following recursion:
\begin{eqnarray*}
&x^{(k+1)}= F^{(k+1)}(x^{(k)}), \\
&F^{(k+1)}(x) := \phi_a(W^{(k+1)}x + b^{(k+1)}),
\end{eqnarray*}
where $W^{(k+1)}\in \mathbb{R}^{d\times d}$ and $b^{(k+1)}\in\mathbb{R}$ are the weight matrix and the bias of the $(k+1)$-st layer respectively and the function $\phi_a$ denotes the parametric ReLU activation function \citep{he2015delving} applied component-wise to a vector, defined for a scalar input $z\in\mathbb{R}$ as
\beq \phi_a(z) = \begin{cases} 
    z & \mbox{if } z>0, \\
   az & \mbox{if } z \leq 0,
  \end{cases} 
\eeq  
where $a\in[0,1]$ is a parameter. The parameter $a$ can also be learned from data during training \citep{he2015delving}, but in this paper we are interested in the case where the choice of $a$ will be given and fixed. Depending on the choice of $a$, this class recovers a number of  activation functions of interest:
\begin{enumerate}  
  \item For $a = 0$, $\phi_0(x) = \max(0,x)$ is the rectified linear unit (ReLU) which is widely used in practice \citep{maas2013rectifier}.
  \item 
  For $a=0.01$, this corresponds to Leaky ReLU activation \citep{maas2013rectifier}. More recently, some other choices of $a \in (0,1)$ has also been considered \citep{he2015delving}. If $a$ is chosen randomly, this is referred to as Randomized Leaky ReLU \citep{xu2015empirical}.
  \item For $a=1$, $\phi_1(x) = x$ is the linear activation function.
\end{enumerate}  


\textbf{Gaussian initialization techniques.} We consider Gaussian initialization where the network weights are independent and identically distributed (i.i.d) following a Gaussian distribution with constant variance $\sigma^2$ and mean zero and biases are set to zero, i.e. we assume:
\begin{itemize}
\item [\textbf{(A1)}] All the weights are independent and identically distributed (i.i.d.) with a centered Gaussian distribution satisfying ${W^{(k)}}\in \mathbb{R}^{d\times d}$,  $W^{(k)}_{ij}\sim \mathcal{N}(0,\sigma^2)$ for every $i$,$j$ and $k$ where $\sigma^2>0$ is the variance of the $k$-th layer with width $d$. 
\item [\textbf{(A2)}] The biases are initialized to zero, i.e. $b^{(k)}=0$ for every $k\geq 1$. 
\end{itemize}
For simplicity of the presentation, above we assume that the width of the network is equal to $d$ and is constant over different layers. However, our results naturally extends to the case if each layer $k$ has a different width $d_k$ (see Remark \ref{remark-variable-width}). The popular Kaiming initialization corresponds to the choice of $\sigma^2 = \frac{2}{d(1+a^2)}$ which preserves the second moment of the layer outputs, we will next show that there exists a \emph{critical variance} level $\bar{\sigma}^2_a(s,d)$ that we can compute explicitly, so that the choice of $\sigma^2 = \bar{\sigma}^2_a(s,d)$ will preserve the $s$-th moment of the output over the layers in any dimension $d$ for any $s\in (0,2]$ given. We start with the ReLU case which corresponds to $a=0$. 

\vspace{-7pt}
\section{ReLU Activation}
\vspace{-7pt}
In the next result, we characterize arbitrary moments of the output of the $k$-th layer, i.e. we provide an explicit formula for $\mathbb{E}(\|x^{(k)}\|^s)$ where $s>0$ can be any real scalar where (throughout this paper) and $\|\cdot\|$ denotes the Euclidean $(L_2)$ norm. Our result identifies three regimes: For given width $d$ and moment $s>0$, there exists a threshold $\bar{\sigma}_0(s,d)$ for choosing the standard deviation $\sigma$ of the initialization: If we choose $\sigma = \bar{\sigma}_0(s,d)$, then the network with ReLU activation will preserve the $s$-th moment. The choice of $\sigma$ below (resp. above) this threshold, will lead to $s$-th moment to decay (resp. grow) exponentially fast. The result relies on expressing the output of the layers as a mixture of chi-square distributions with binomial mixture weights based on adaptations of the techniques from \cite{cohen1984stability} from linear stochastic recursions to the  nonlinear case. 
The proof of this result, and the proof of all the other results, can be found in the supplementary material. 

\begin{theorem}\label{thm-moment-relu} \textbf{(Explicit characterization of the critical variance $\bar{\sigma}_0^2(s,d)$)} Consider a fully connected network with an input $x^{(0)}\in \mathbb{R}^d$ and Gaussian initialization satisfying \textbf{(A1)}-\textbf{(A2)} with ReLU activation function $\phi_0(x)=\max(x,0)$. Let $s>0$ be a given real scalar. The $s$-th moment of the output of the $k$-th layer is given by
\begin{eqnarray}\label{def-sigma-relu}
    \mathbb{E} \left[ {\| x^{(k)}\|^s} \right]  = {\|x^{(0)}\|^s} (\sigma^s I_0(s,d))^k,\\ 
    I_0(s,d) = 2^{s/2}\sum_{n=0}^d {d \choose n} \frac{1}{2^d}  \frac{\Gamma(n/2 + s/2)}{\Gamma(n/2)},
\end{eqnarray}
where $\Gamma$ denotes Euler's Gamma function. Then, it follows that we have three possible cases: 
\begin{itemize}
    \item [$(i)$] If $\sigma =\bar{\sigma}_0(s,d)$ where 
    $\bar{\sigma}_0(s,d):=\frac{1}{\sqrt[s]{I_0(s,d)}}
    $, then the network preserves the $s$-th moment of the layer outputs, i.e. for every $k \geq 1$, $\mathbb{E} \left[ {\| x^{(k)}\|^s} \right] = \|x^{(0)}\|^s,$ 
    whereas for any $p>s$, $\mathbb{E} \|x^{(k)}\|^{p} \to \infty$ exponentially fast in $k$.
    \item [$(ii)$] If $\sigma < \bar{\sigma}_0(s,d)$, then
        $\mathbb{E} \left[ {\| x^{(k)}\|^s} \right] \to 0$ exponentially fast in $k$. 
    \item [$(iii)$] If $\sigma >\bar{\sigma}_0(s,d)$, then $\mathbb{E} \left[ {\| x^{(k)}\|^s} \right] \to \infty$ exponentially fast in $k$.
\end{itemize}
\end{theorem}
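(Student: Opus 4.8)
The plan is to exploit two structural features of the ReLU recursion: the conditional Gaussianity of $W^{(k+1)}x^{(k)}$ given $x^{(k)}$, and the positive homogeneity of $\phi_0$. First I would condition on $x^{(k)}$ and note that, since the rows of $W^{(k+1)}$ are independent with $\mathcal{N}(0,\sigma^2)$ entries, the pre-activation $y := W^{(k+1)}x^{(k)}$ satisfies $y \mid x^{(k)} \sim \mathcal{N}(0,\sigma^2\|x^{(k)}\|^2 I_d)$. Writing $y = \sigma\|x^{(k)}\|\,z$ with $z\sim\mathcal{N}(0,I_d)$ independent of $x^{(k)}$, homogeneity of ReLU gives $\|x^{(k+1)}\|^2 = \|\phi_0(y)\|^2 = \sigma^2\|x^{(k)}\|^2\sum_{i:\,z_i>0} z_i^2$. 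Raising to the power $s/2$ and taking conditional expectation factorizes the randomness:
\[
\mathbb{E}\big[\|x^{(k+1)}\|^s \mid x^{(k)}\big] = \sigma^s\|x^{(k)}\|^s\,\mathbb{E}\Big[\big(\textstyle\sum_{i:\,z_i>0} z_i^2\big)^{s/2}\Big] =: \sigma^s I_0(s,d)\,\|x^{(k)}\|^s.
\]
The tower property then yields the one-step multiplicative identity $\mathbb{E}\|x^{(k+1)}\|^s = \sigma^s I_0(s,d)\,\mathbb{E}\|x^{(k)}\|^s$, and iterating from the deterministic value $\|x^{(0)}\|^s$ proves the closed form $\mathbb{E}\|x^{(k)}\|^s = \|x^{(0)}\|^s(\sigma^s I_0(s,d))^k$.

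The remaining work is to evaluate $I_0(s,d) = \mathbb{E}[(\sum_{i:\,z_i>0}z_i^2)^{s/2}]$ as a binomial mixture of chi-square moments. The key observation — exactly where sign/magnitude independence enters — is that for a symmetric coordinate $z_i$ the squared magnitude $z_i^2$ is independent of $\operatorname{sign}(z_i)$, so conditionally on the number $N$ of positive coordinates equaling $n$, the surviving sum $\sum_{i:\,z_i>0}z_i^2$ is distributed as $\chi^2_n$. Since $N\sim\mathrm{Binomial}(d,\tfrac12)$, I would condition on $N$ and apply the Gamma moment formula $\mathbb{E}[(\chi^2_n)^{s/2}] = 2^{s/2}\Gamma(n/2+s/2)/\Gamma(n/2)$ (with the $n=0$ term vanishing) to obtain $I_0(s,d) = 2^{s/2}\sum_{n=0}^d \binom{d}{n}2^{-d}\Gamma(n/2+s/2)/\Gamma(n/2)$, matching the stated expression. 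This decomposition into a chi-square mixture with binomial weights is the nonlinear analogue of the $\chi^2$ structure in the linear random-matrix recursion of \cite{cohen1984stability}.

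With the closed form in hand, the trichotomy is immediate by setting $\rho := \sigma^s I_0(s,d)$: the choice $\sigma = \bar\sigma_0(s,d) = I_0(s,d)^{-1/s}$ makes $\rho = 1$, so $\mathbb{E}\|x^{(k)}\|^s = \|x^{(0)}\|^s$ for all $k$, whereas $\sigma<\bar\sigma_0(s,d)$ and $\sigma>\bar\sigma_0(s,d)$ force $\rho<1$ and $\rho>1$, giving exponential decay and blow-up respectively.

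The subtler assertion is that at $\sigma=\bar\sigma_0(s,d)$ one still has $\mathbb{E}\|x^{(k)}\|^p\to\infty$ for every $p>s$; equivalently $\sigma^p I_0(p,d)>1$. Writing $R:=\sum_{i:\,z_i>0}z_i^2$ and $g(s):=\tfrac{1}{s}\log I_0(s,d)=\tfrac{1}{s}\log\mathbb{E}[R^{s/2}]$, the inequality reduces to $g(p)>g(s)$, i.e.\ strict monotonicity of $g$. I would obtain this from log-convexity of moments: the map $u\mapsto\log\mathbb{E}[R^{u}]$ is strictly convex since $R$ is non-degenerate, so its chord slope from the origin is increasing; after accounting for the atom $\{R=0\}$ of mass $2^{-d}$ (which pushes the $u\to0^+$ intercept strictly below zero), this shows $u\mapsto\tfrac{1}{u}\log\mathbb{E}[R^u]$ is strictly increasing, hence so is $g$. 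The main obstacle I anticipate is the clean identification of the chi-square mixture via sign/magnitude independence, and secondarily this monotonicity argument establishing higher-moment blow-up precisely at criticality.
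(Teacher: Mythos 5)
Your proposal is correct and, for the closed-form identity, follows essentially the same route as the paper: positive homogeneity of ReLU plus rotational invariance of the Gaussian rows reduces everything to $\mathbb{E}\left\|\phi_0(z)\right\|^s$ for standard normal $z$, which both you and the paper evaluate as a binomial mixture of chi-square moments (your sign/magnitude-independence phrasing and the paper's orthant decomposition are the same computation, with the same Gamma moment formula). Two points of genuine difference are worth noting. First, your one-step recursion via conditioning and the tower property, $\mathbb{E}\left[\|x^{(k+1)}\|^s \mid x^{(k)}\right] = \sigma^s I_0(s,d)\,\|x^{(k)}\|^s$, is slightly cleaner than the paper's product of i.i.d.\ ratios $\|x^{(j+1)}\|/\|x^{(j)}\|$: under ReLU the event $x^{(k)}=0$ has positive probability, so the ratio argument implicitly restricts to $x^{(k)}\neq 0$, whereas your conditional identity holds on that event trivially (both sides vanish). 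Second, for the assertion that $\mathbb{E}\|x^{(k)}\|^p\to\infty$ for every $p>s$ at criticality, the paper routes through Corollary \ref{coro-crit-leaky} (strict monotonicity of $\bar{\sigma}_a(s,d)$ in $s$, proved via an implicit-function-theorem computation combined with tangent lines to the log-convex function $\kappa(\tilde{s})=\sigma^{\tilde{s}}I_a(\tilde{s},d)$ and a contradiction step to obtain strictness), whereas you give a direct chord-slope argument: with $R=\sum_{i:\,z_i>0}z_i^2$ and $h(u)=\log\mathbb{E}[R^u]$ convex, the atom of $R$ at zero gives $h(0^+)=\log(1-2^{-d})<0$, and writing $h(u)/u = \left(h(u)-h(0^+)\right)/u + h(0^+)/u$ makes the first term non-decreasing by convexity and the second strictly increasing, so $u\mapsto h(u)/u$ is strictly increasing and $\sigma^p I_0(p,d)>1$ follows. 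The two arguments rest on the same log-convexity fact, but yours is self-contained within the theorem, avoids the implicit function theorem, and extracts strictness for free from the ReLU atom — a device that would not transfer verbatim to the parametric ReLU case $a\in(0,1]$ (no atom there), where strict convexity of $h$ must carry the strictness, as in the paper's corollary; since the present theorem concerns ReLU only, your shortcut is valid as stated.
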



\begin{remark} \label{coro-crit-relu} \textbf{($s=2$ case)} In the special case of $s=2$, Theorem \ref{thm-moment-relu} yields $I_0(2,d)=d/2$ and $\bar{\sigma}_0^2(2,d) = 2/d$ which corresponds to Kaiming initialization, details of this derivation is in Remark \ref{coro-crit-relu-appendix} in the supplementary material. 
\end{remark}
\begin{remark}\label{remark-variable-width} \textbf{(Variable width $d_k$)} If the width $d_k$ of layer $k$ is not a constant equal to $d$ but instead varying over $k$, then our analysis extends to this case naturally where it would suffice to replace the formula \eqref{def-sigma-relu} with $\mathbb{E} \left[ {\| x^{(k)}\|^s} \right]  = {\|x^{(0)}\|^s} (\sigma^s \prod_{j=1}^k I_0(s,d_j))$.
\end{remark}
A natural question that arises is how does the critical variance depend on $d$ and $s$ when $d$ is large. 
The next result gives precise asymptotics for $\bar{\sigma}_0(s,d)$ in the large $d$ regime. The result relies on careful asymptotics for the Gamma functions and binomial coefficients arising in Theorem \ref{thm-moment-relu}.
\begin{corollary}\label{coro-crit-sigma-asymp} \textbf{(Critical variance $\bar{\sigma}_0(d,s)$ when $d$ is large)} For fixed width $d$ and $s\in (0,2]$, we have 
\begin{eqnarray*}
&\bar{\sigma}_0^2(s,d) = \frac{2}{d} + \frac{5(2-s)}{2d^2}  + o(\frac{1}{d^2}),\\
&\bar{\sigma}_0(s,d) = 
\frac{\sqrt{2}}{\sqrt{d}} + \frac{5\sqrt{2}(2-s)}{8d\sqrt{d}} + 
o(\frac{1}{d\sqrt{d}}).
\end{eqnarray*}
Therefore, it follows from  Theorem \ref{thm-moment-relu} that if $ \sigma^2  = \frac{2}{d} + \frac{5(2-s)}{2d^2},
$
then the network will preserve the moment of order $s+o(\frac{1}{d})$ of the network output.
\end{corollary}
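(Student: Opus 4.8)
The plan is to extract the large-$d$ asymptotics of the quantity $I_0(s,d)$ from Theorem~\ref{thm-moment-relu} and then invert the relation $\bar{\sigma}_0(s,d) = I_0(s,d)^{-1/s}$. The key starting observation is that the binomial weights $\binom{d}{n}2^{-d}$ in the definition of $I_0(s,d)$ identify the sum as an expectation: writing $N \sim \mathrm{Binomial}(d,1/2)$,
\begin{equation*}
I_0(s,d) = 2^{s/2}\,\mathbb{E}\!\left[\frac{\Gamma(N/2 + s/2)}{\Gamma(N/2)}\right],
\end{equation*}
where the $n=0$ term contributes nothing since $\Gamma(0)^{-1}=0$. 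Because $N$ concentrates around its mean $d/2$ with fluctuations of order $\sqrt{d}$, I would expand the summand for large argument and then average.

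First I would invoke the standard ratio asymptotic $\Gamma(x+a)/\Gamma(x) = x^{a}\big(1 + \tfrac{a(a-1)}{2x} + O(x^{-2})\big)$ with $a=s/2$ and $x=N/2$, which after the cancellation $2^{s/2}(N/2)^{s/2}=N^{s/2}$ gives
\begin{equation*}
I_0(s,d) = \mathbb{E}\big[N^{s/2}\big] + \frac{s(s-2)}{4}\,\mathbb{E}\big[N^{s/2-1}\big] + O\!\big(\mathbb{E}\big[N^{s/2-2}\big]\big).
\end{equation*}
The remaining task is to compute the binomial moments $\mathbb{E}[N^\beta]$ for real exponents $\beta$ up to order $1/d^2$. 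I would Taylor expand $N^\beta$ about the mean $\mu=d/2$ and take expectations; all odd central moments of $N$ vanish because $\mathrm{Binomial}(d,1/2)$ is symmetric about $d/2$, so using $\mathbb{E}[(N-\mu)^2]=d/4$ one obtains $\mathbb{E}[N^\beta] = (d/2)^\beta\big(1 + \tfrac{\beta(\beta-1)}{2d} + O(d^{-2})\big)$.

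Combining the two moments with $\beta=s/2$ and $\beta=s/2-1$ (noting $(d/2)^{s/2-1}=\tfrac{2}{d}(d/2)^{s/2}$) collapses the correction into a single coefficient,
\begin{equation*}
I_0(s,d) = (d/2)^{s/2}\left(1 + \frac{5\,s(s-2)}{8d} + O(d^{-2})\right),
\end{equation*}
which at $s=2$ reduces to $I_0(2,d)=d/2$, consistent with Remark~\ref{coro-crit-relu}. Finally I would raise this to the power $-2/s$ to get $\bar{\sigma}_0^2=I_0^{-2/s}$: the prefactor contributes $(d/2)^{-1}=2/d$, while the bracket contributes $1 - \tfrac{2}{s}\cdot\tfrac{5s(s-2)}{8d} + O(d^{-2}) = 1 - \tfrac{5(s-2)}{4d} + O(d^{-2})$. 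Notably, the $O(d^{-2})$ term of $I_0$ only feeds into the $d^{-3}$ term and may be dropped, so the first-order correction $c_1 = 5s(s-2)/8$ already determines the $d^{-2}$ coefficient of $\bar{\sigma}_0^2$. This yields $\bar{\sigma}_0^2(s,d)=\frac{2}{d}+\frac{5(2-s)}{2d^2}+o(d^{-2})$, and the further expansion $\bar{\sigma}_0 = \sqrt{2/d}\,\big(1+\tfrac{5(2-s)}{8d}+\cdots\big)$ gives the stated formula for $\bar{\sigma}_0(s,d)$.

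The main obstacle is rigor rather than algebra: both the $\Gamma$-ratio asymptotic and the Taylor expansion of $N^\beta$ are valid only for large $N$, whereas $N$ can be as small as $1$, and the exponent $s/2-1$ is negative for $s<2$. I would control this by splitting the expectation at, say, $N < d/4$ and invoking a Chernoff/Hoeffding bound to show $\mathbb{P}(N<d/4)$ is exponentially small in $d$; since the summand $\Gamma(n/2+s/2)/\Gamma(n/2)$ grows only polynomially in $n\le d$, the small-$N$ contribution is exponentially negligible and does not affect the $d^{-2}$ expansion, justifying termwise averaging of the asymptotic expansion over the concentrated bulk $N\approx d/2$.
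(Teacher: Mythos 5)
Your proposal is correct and follows essentially the same route as the paper's proof: both represent $I_0(s,d)$ as an expectation over a $\mathrm{Binomial}(d,1/2)$ variable, apply the Tricomi ratio expansion $\Gamma(z+\alpha)/\Gamma(z)=z^\alpha\bigl(1+\tfrac{\alpha(\alpha-1)}{2z}+\mathcal{O}(z^{-2})\bigr)$, and neutralize the small-$n$ region by an exponentially small tail bound (the paper does this via the log-convexity bound $0\le S(n/2,s/2)\le 1$ together with $\mathbb{P}(Z_d\ge\sqrt{d})=\mathcal{O}(e^{-d/2})$, matching your Chernoff splitting). The only difference is technical rather than structural: where you Taylor-expand the fractional moments $\mathbb{E}[N^\beta]$ about the mean using the exact symmetry and variance of $\mathrm{Binomial}(d,1/2)$, the paper expands $(1+Z_d/\sqrt{d})^{s/2}S\bigl(\tfrac{d}{4}+\tfrac{\sqrt{d}}{4}Z_d,\tfrac{s}{2}\bigr)$ and invokes convergence of the moments of the normalized binomial to Gaussian moments — both yield the identical correction $1+\tfrac{5s(s-2)}{8d}+o(\tfrac1d)$ to $(d/2)^{s/2}$ and hence the stated expansions of $\bar{\sigma}_0^2(s,d)$ and $\bar{\sigma}_0(s,d)$.
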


\begin{figure*}[!htbp]
\centering
\begin{minipage}[t]{0.46\textwidth}
\centering
    \subfloat{
    \includegraphics[width=.46\columnwidth]{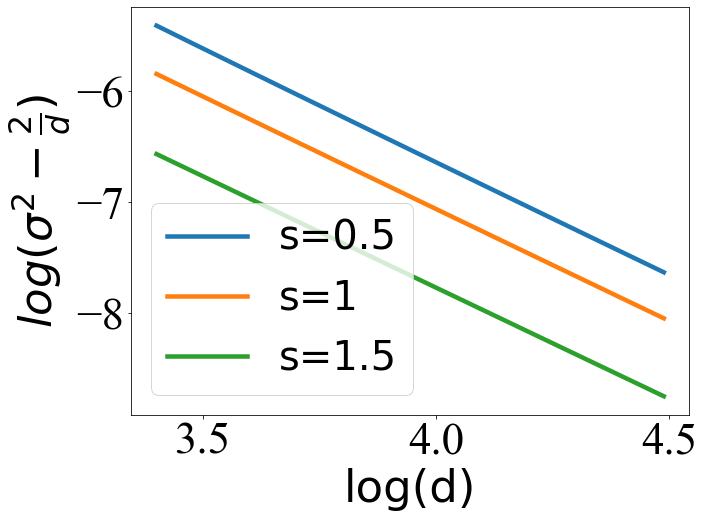}
    \label{fig:sigma-d2}
    }
    \subfloat{
    \includegraphics[width=0.46\columnwidth]{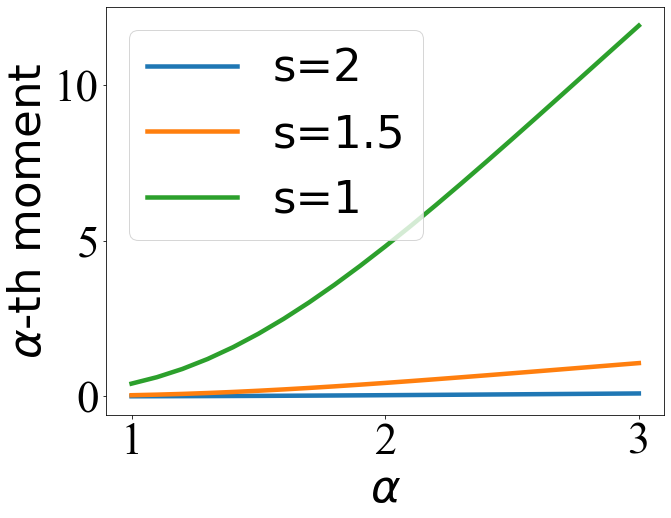}
    }
    \caption{\emph{Left}: Plot of $\log(\bar{\sigma}_0^2(s,d)-\frac{2}{d})$ vs. $\log(d)$ for ReLU. \emph{Right}: Growth of the $\alpha$-th moment of $x^{(k)}$ for $k=500$ and $\sigma = \bar{\sigma}_0(s,d)$ with different $s$ for $d=64$. 
    When $s$ gets smaller, moments grow faster.}
    \label{fig:kmoment-first}
\end{minipage}
\quad \quad
\begin{minipage}[t]{0.46\textwidth}
\centering
    \subfloat{
    \includegraphics[width=0.46\columnwidth]{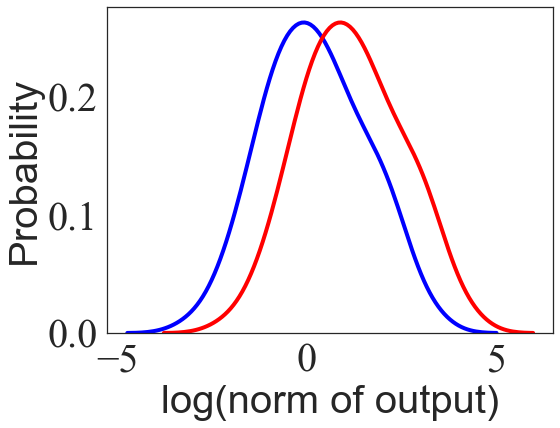}
    }
    \subfloat{
    \includegraphics[width=0.46\columnwidth]{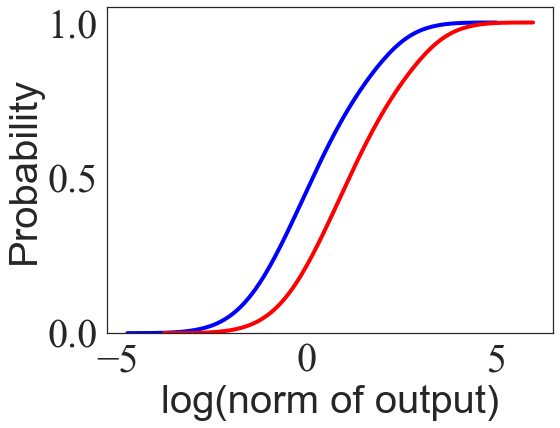}
    }
    \hfill
    \caption{\emph{Left:} Probability density $f_k(r)$ of $R_{{k,0}}$. \emph{Right:} Cumulative density function (cdf) of $R_{{k,0}}$. The blue line in both figures is the result of Kaiming's method, which corresponds to $s=2$. The red line in both figures is the result of our method with $s\approx 1$.}
    \label{fig:relu-log-out}
\end{minipage}
\end{figure*}

According to Corollary \ref{coro-crit-sigma-asymp},  $\log(\bar{\sigma}_0^2(s,d)-\frac{2}{d}) \approx -2\log(d)+ \log(\frac{5(2-s)}{2})$ for large $d$. This is illustrated on the left panel of Figure \ref{fig:kmoment-first} where we plot $\log(\bar{\sigma}_0^2(s,d))$ vs. $\log(d)$ based on the formula \eqref{def-sigma-relu} where we observe the relationships is a straight line with slope approximately $-2$ as predicted by our theory. 
The right panel of Figure \ref{fig:kmoment-first} illustrates part $(iii)$ of Theorem \ref{thm-moment-relu} about how the moments can grow if we choose $\sigma=\bar{\sigma}_0(s,d)$ depending on the value of $s$.

It is not hard to show that under Gaussian initialization with ReLU activation, the network output can be zero with a non-zero probability (see Lemma \ref{lem-zero-proba} in the appendix), which is related to the known "dying neuron" problem associated with ReLU activations \citep{lu2019dying} about the fact that ReLU networks may output zero frequently. The choice of $\sigma$ will clearly affect the variance of $x^{(k)}$ (see Theorem \ref{thm-moment-relu}), however it won't affect the probability that the $k$-th layer output $x^{(k)}=0$. A natural question that arises is what is the effect of $\sigma$ on the growth rate of $\|x^{(k)}\|$ conditional on the event that $x^{(k)}\neq 0$. For this purpose, given an initial point $x^{(0)}\in\mathbb{R}^d$ fixed, we consider the conditional probability density function of $\log(\|x^{(k)}\|)$ given that $x^{(k)}\neq 0$, i.e.
\beq f_k(r) := \mathbb{P}\left(\log(\|x^{(k)}\|) = dr ~\big |~ x^{(k)} \neq 0, a=0 \right). 
\label{def-fky}
\eeq
Let $R_{{k,0}}$ be the random variable corresponding to the density $f_k(r)$. The quantity
\beq \mu_0(\sigma) := \lim_{k\to\infty} \frac{R_{{k,0}}}{k}
\label{eq-def-mu}
\eeq
is a measure of how fast the norm of the output of the layers of the network would grow if we would allow infinitely many layers. It is closely related to the \emph{top Lyapunov exponent} in the probability and dynamical systems literature \citep{arnold1986lyapunov,cohen1984stability} which arises in the study of random Lipschitz maps, see e.g. \citep{ELTON199039}. In the next result, we will obtain an explicit formula for $\mu_0(\sigma)$ (that depends on $\sigma$ and dimension $d$), showing that $\mu_0(\sigma)$ is deterministic and does not depend on the initial point $x^{(0)}$. Furthermore, we show that a properly scaled $R_{{k,0}}$ converges to a Gaussian random variable in distribution, with an explicit mean and variance we can characterize.  
\begin{theorem}\label{thm-relu-log-out}\textbf{(Asymptotic normality of the log. of the norm of the network output)} Consider a fully connected network with an input $x^{(0)}\in \mathbb{R}^d$ and Gaussian initialization satisfying \textbf{(A1)}-\textbf{(A2)} with ReLU activation function $\phi_0(x)=\max(x,0)$. Let $f_k(r)$ be 
the conditional probability density function of $\log(\|x^{(k)}\|)$ given that $x^{(k)}\neq 0$, defined formally by \eqref{def-fky}. 
Let $R_{{k,0}}$ be the random variable corresponding to the density $f_k(r)$. Then, the limit $\mu_0(\sigma)$ defined in \eqref{eq-def-mu} exists, it is deterministic and independent of $x^{(0)}$, satisfying the following formula:
\beq 
\mu_0(\sigma) = \log(\sigma)  +\frac{1}{2} \sum_{n=1}^d \pi_d(n)  
\left[
\log(2) + \psi_0\left(\frac{n}{2}\right)
\right].
\eeq
Furthermore, 
${\frac{R_{{k,0}} - \mu_0(\sigma) k}{\sqrt{k}}}  
\Rightarrow
\mathcal{N}(0, s_0^2)$ in distribution as $k\to\infty$
 with
\beqs 
s_0^2 &=& \frac{1}{4}\sum_{n=1}^d \pi_d(n) \left[\psi_1(\frac{n}{2}) +\left[
\log(2) + \psi_0\left(\frac{n}{2}\right)
\right]^2
\right]\\
& & -\frac{1}{4} \left( \sum_{n=1}^d \pi_d(n) \left[
\log(2) + \psi_0\left(\frac{n}{2}\right)
\right]\right)^2 , 
\eeqs
where $\psi_0$ is the di-gamma function, $\psi_1$ is the tri-gamma function and 
$\pi_d(n) = {d \choose n} \frac{1}{2^d-1}.
$
\end{theorem}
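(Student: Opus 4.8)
The plan is to reduce the layerwise recursion to a sum of i.i.d.\ random variables and then invoke the strong law of large numbers and the classical central limit theorem. The key observation is that, because each $W^{(k)}$ has i.i.d.\ $\mathcal{N}(0,\sigma^2)$ entries and is independent of $x^{(k-1)}$, conditionally on $x^{(k-1)}$ the vector $W^{(k)}x^{(k-1)}$ has i.i.d.\ $\mathcal{N}(0,\sigma^2\|x^{(k-1)}\|^2)$ coordinates. Hence I can write $(W^{(k)}x^{(k-1)})_i = \sigma\|x^{(k-1)}\| Z_i^{(k)}$ with $Z_i^{(k)}$ i.i.d.\ standard normal, and applying ReLU coordinatewise gives the exact one-step identity $\|x^{(k)}\|^2 = \sigma^2\|x^{(k-1)}\|^2\,\xi_k$, where $\xi_k := \sum_{i=1}^d \max(0,Z_i^{(k)})^2$. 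Crucially, $\xi_k$ depends only on the fresh randomness $W^{(k)}$, so the $\xi_k$ are i.i.d.\ and independent of $x^{(k-1)}$. Iterating and taking logarithms yields $\log\|x^{(k)}\| = \log\|x^{(0)}\| + k\log\sigma + \tfrac12\sum_{j=1}^k \log\xi_j$.

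Next I would identify the conditional law. Since $x^{(0)}\neq 0$, the event $\{x^{(k)}\neq0\}$ coincides with $\bigcap_{j=1}^k\{\xi_j>0\}$, and by independence of the $\xi_j$ this conditioning factorizes into independent per-layer conditionings; thus, conditionally on $x^{(k)}\neq0$, the increments $\log\xi_j$ become i.i.d.\ copies of $\log\xi$ given $\xi>0$. Writing $N_k$ for the number of positive coordinates $Z_i^{(k)}$, one has $N_k\sim\mathrm{Binomial}(d,1/2)$, and conditioning on $\xi>0$ is conditioning on $N_k\ge1$, which produces exactly the weights $\pi_d(n)=\binom{d}{n}/(2^d-1)$. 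Because $Z_i^2$ is independent of $\sign(Z_i)$, conditioning each positive coordinate on being positive leaves its square distributed as $\chi^2_1$, so given $N_k=n$ we get $\xi \stackrel{d}{=}\chi^2_n$. This is the same chi-square mixture structure underlying Theorem \ref{thm-moment-relu}.

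I would then compute the first two moments of $\log\xi$ under this mixture. Using the standard identity that if $G\sim\mathrm{Gamma}(n/2)$ then $\mathbb{E}[\log G]=\psi_0(n/2)$ and $\mathrm{Var}(\log G)=\psi_1(n/2)$ (read off from the cumulant generating function $t\mapsto\log\Gamma(n/2+t)-\log\Gamma(n/2)$), and writing $\chi^2_n=2G$, I obtain $\mathbb{E}[\log\chi^2_n]=\log2+\psi_0(n/2)$ and $\mathrm{Var}(\log\chi^2_n)=\psi_1(n/2)$. Averaging over the mixture weights $\pi_d(n)$ gives $\mathbb{E}[\tfrac12\log\xi\mid\xi>0]=\tfrac12\sum_{n=1}^d\pi_d(n)[\log2+\psi_0(n/2)]$, which combined with the deterministic term $\log\sigma$ is precisely $\mu_0(\sigma)$; the variance $s_0^2$ follows from the law of total variance across $N_k$, yielding the displayed difference between the averaged second moments and the squared averaged mean.

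Finally, on the probability space carrying the i.i.d.\ sequence $\log\xi_j$ (conditioned positive), the strong law of large numbers gives $\tfrac1k\sum_{j=1}^k\log\xi_j\to\mathbb{E}[\log\xi\mid\xi>0]$ almost surely; since $\log\|x^{(0)}\|/k\to0$, this establishes existence of the deterministic limit $\mu_0(\sigma)$ and its independence from $x^{(0)}$. The central limit theorem applied to the same i.i.d.\ increments, together with Slutsky's lemma to discard the $O(1/\sqrt{k})$ contribution of the fixed term $\log\|x^{(0)}\|$, gives asymptotic normality with variance $s_0^2$. The main subtlety to handle with care is the conditioning on $\{x^{(k)}\neq0\}$: one must justify that it factorizes into the per-layer events $\{\xi_j>0\}$ so that the conditioned increments are genuinely i.i.d., which is where the independence of the rows of $W^{(k)}$ and the sign/square independence of the Gaussians are both used.
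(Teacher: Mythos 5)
Your proposal is correct and follows essentially the same route as the paper's proof: both reduce $\log\|x^{(k)}\|$ to a sum of i.i.d.\ increments (you via the conditional Gaussianity $(W^{(k)}x^{(k-1)})_i \sim \mathcal{N}(0,\sigma^2\|x^{(k-1)}\|^2)$, the paper via the equivalent spherical-symmetry identity $W x/\|x\| \sim W e_1$), exploit the factorization $\{x^{(k)}\neq 0\}=\cap_j\{\xi_j>0\}$, represent the conditioned increment as a $\chi^2_n$ mixture with weights $\pi_d(n)$, and use the log-moments $\log 2+\psi_0(n/2)$, $\psi_1(n/2)$ together with the LLN and CLT. Your law-of-total-variance step is identical in content to the paper's mixture-variance lemma (Lemma \ref{lemma-var-mix}), so there is no substantive difference.
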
 
\begin{remark}\label{remark-stoc-dominance}\textbf{(First-order stochastic dominance property compared to Kaiming's method)}
Theorem \ref{thm-relu-log-out} shows that the logarithm of the norm of the $k$-th layer output $R_{{k,0}}$ will be asymptotically normal as $k\to\infty$ if $R_{{k,0}}$ is properly scaled, where the choice of $\sigma$ will only affect the mean (but not the variance) of the asymptotic normal distribution. This is illustrated in Figure \ref{fig:relu-log-out} where we plot the probability density function (pdf) on the left panel and the cumulative density function (cdf) of $R_{{k,0}}$ on the right panel where the pdf of $R_{k,0}$ has a Gaussian shape. We compare two initializations $\sigma^2 = \frac{2}{d}$ (Kaiming initialization which preserves variances) and our initialization technique $\sigma^2 = \frac{2}{d} + \frac{5}{2d^2}$ which preserves the moment of order $s=1+o(\frac{1}{d})$. We used $k=100$ layers and dimension $d=64$. We observe from the cdf's of network outputs on the right panel of Figure \ref{fig:relu-log-out} that with our choice of $\sigma$, the norm of the network output is larger in the sense that it has first-order stochastic dominance \citep{hadar1969rules} relative to Kaiming initialization. Since our results also admit non-asymptotic versions (Remark \ref{remark-non-asymptotic}), this dominance property will hold provably for large enough but finite $k$ as well (due to the fact that our initialization 
results in a larger mean value $\mu_a(\sigma)$ in the setting of Theorem \ref{thm-moment-relu}).
\end{remark}
\vspace{-7pt}
\section{Parametric ReLU, Randomized Leaky ReLU and Linear Activations}\label{sec-param-relu}
\vspace{-7pt}
For parametric ReLU activations with $a>0$, we develop an analogous result to Theorem \ref{thm-moment-relu} which characterize $s$-th moments of the $k$-th layer output $x^{(k)}$ for $s\in (0,2]$. 
\begin{theorem}\label{thm-leaky-relu}\textbf{(Explicit characterization of the critical variance $\bar{\sigma}_a^2(s,d)$)} Consider a fully connected network with an input $x^{(0)}\in \mathbb{R}^d$ and Gaussian initialization satisfying \textbf{(A1)}--\textbf{(A2)} with activation function $\phi_a(x)$ for any choice of $a\in (0,1]$ fixed. Then, for any $s\in (0,2]$, the output of the $k$-th layer satisfies
\begin{eqnarray}\mathbb{E} \left[ {\| x^{(k)}\|^s} \right]  = {\|x^{(0)}\|^s} (\sigma^s I_a(s,d))^k
\label{eq-leaky-relu-moments}
\end{eqnarray} 
with 
\begin{equation}
\begin{split}
I_a(s,d)=& 2^{s/2} \frac{1}{\Gamma(1-s/2)}\\
& \sum_{n=0}^d {{d \choose n} \frac{1}{2^d}} 
 \sum_{k=0}^\infty w_{k,n}    \mathrm {B}(k+1-\frac{s}{2},\frac{d}{2}+\frac{s}{2})
 \end{split}
 \label{def-iasd-theorem}
 \end{equation}
with the convention that $I_a(2,d)  = (1+a^2) \frac{d}{2}$,
where $\mathrm {B(\cdot,\cdot)}$ is the Beta function and
\begin{equation}
\begin{split}
w_{k,n} =& \frac{1}{2}(1-a^2)^k \\
& \left[ 
{\frac{d-n}{2}+k-1 \choose k} n + a^2(d-n){\frac{d-n}{2}+k \choose k} 
\right].
\end{split}
\end{equation}
Let {$\bar{\sigma}_a(s,d)=\frac{1}{\sqrt[s]{I_a(s,d)}}$}. We have three possible cases: 
\begin{itemize}
    \item [$(i)$] If $\sigma =\bar{\sigma}_a(s,d)$, 
    then the network preserves the $s$-th moment of the layer outputs, i.e. for every $k \geq 1$, $\mathbb{E} \left[ {\| x^{(k)}\|^s} \right] = \|x^{(0)}\|^s,$ 
    whereas for any $p>s$, $\mathbb{E} \|x^{(k)}\|^{p} \to \infty$ exponentially fast in $k$.
    \item [$(ii)$] If $\sigma < \bar{\sigma}_a(s,d)$, then
        $\mathbb{E} \left[ {\| x^{(k)}\|^s} \right] \to 0$ exponentially fast in $k$. 
    \item [$(iii)$] If $\sigma > \bar{\sigma}_a(s,d)$, then $\mathbb{E} \left[ {\| x^{(k)}\|^s} \right] \to \infty$ exponentially fast in $k$.
\end{itemize}
\end{theorem}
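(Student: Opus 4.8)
The plan is to mirror the proof of Theorem \ref{thm-moment-relu}, reducing everything to a one-step recursion and then computing the single-layer moment multiplier $I_a(s,d)=\mathbb{E}\big[(\sum_{i=1}^d \phi_a(\zeta_i)^2)^{s/2}\big]$, where $\zeta_i\sim\mathcal{N}(0,1)$ are i.i.d. First I would use the rotational invariance of the centered Gaussian weight matrix together with the positive homogeneity $\phi_a(cz)=c\,\phi_a(z)$ for $c>0$: given $x^{(k)}$ with $\|x^{(k)}\|=r$, the coordinates of $W^{(k+1)}x^{(k)}$ are i.i.d. $\mathcal{N}(0,\sigma^2 r^2)$, so $\|x^{(k+1)}\|^2=\sigma^2 r^2\sum_i \phi_a(\zeta_i)^2$. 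Writing $N$ for the number of positive $\zeta_i$ (so $N\sim\mathrm{Binomial}(d,1/2)$), conditionally on $N=n$ we have $\sum_i\phi_a(\zeta_i)^2 = U + a^2 V$ with $U\sim\chi^2_n$, $V\sim\chi^2_{d-n}$ independent. This gives $\mathbb{E}[\|x^{(k+1)}\|^s\mid x^{(k)}]=\sigma^s\|x^{(k)}\|^s I_a(s,d)$, and induction yields \eqref{eq-leaky-relu-moments}. The three cases $(i)$--$(iii)$ then follow immediately from the sign of $\sigma^s I_a(s,d)-1$, exactly as for ReLU.

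The crux is evaluating $I_a(s,d)=\mathbb{E}[(U+a^2V)^{s/2}]$ for $s\in(0,2)$ and matching it to the stated series. I would use the fractional-moment identity $\mathbb{E}[T^{s/2}]=\frac{s/2}{\Gamma(1-s/2)}\int_0^\infty u^{-s/2-1}\big(1-\mathbb{E}[e^{-uT}]\big)\,du$, valid for $s/2\in(0,1)$, applied to $T=U+a^2V$, whose Laplace transform is the product of chi-square transforms $g(u):=\mathbb{E}[e^{-uT}]=(1+2u)^{-n/2}(1+2a^2u)^{-(d-n)/2}$. Integrating by parts (the boundary terms vanish since $1-g(u)\sim \mathbb{E}[T]\,u$ near $0$ and $g\to 0$ at $\infty$) turns this into $\mathbb{E}[T^{s/2}]=\frac{1}{\Gamma(1-s/2)}\int_0^\infty u^{-s/2}\,(-g'(u))\,du$, which already produces the prefactor $1/\Gamma(1-s/2)$. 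The decisive point is that $-g'(u)$ factors as $\big[n(1+2a^2u)+a^2(d-n)(1+2u)\big](1+2u)^{-n/2-1}(1+2a^2u)^{-(d-n)/2-1}$, and splitting the bracket into its $n$-part and its $a^2(d-n)$-part is precisely what generates the two terms inside $w_{k,n}$.

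For each of the two pieces I would expand $(1+2a^2u)^{-m}=(1+2u)^{-m}\big(1-\tfrac{2u(1-a^2)}{1+2u}\big)^{-m}$ via the generalized binomial series $\big(1-w\big)^{-m}=\sum_{k\ge 0}\binom{m+k-1}{k}w^k$, taking $m=(d-n)/2$ for the first piece and $m=(d-n)/2+1$ for the second; this yields the coefficients $(1-a^2)^k\binom{(d-n)/2+k-1}{k}$ and $(1-a^2)^k\binom{(d-n)/2+k}{k}$ respectively. Since $a\in(0,1]$ forces $1-a^2\ge 0$, every term is nonnegative and Tonelli justifies termwise integration. Each resulting integral is a Beta integral, $\int_0^\infty u^{k-s/2}(1+2u)^{-d/2-1-k}\,du = 2^{-(k+1-s/2)}\,\mathrm{B}(k+1-s/2,\tfrac{d}{2}+\tfrac{s}{2})$, and collecting the constant $2^k\cdot 2^{-(k+1-s/2)}=2^{s/2-1}$ together with the factor $\tfrac12(1-a^2)^k$ reproduces exactly $2^{s/2}w_{k,n}$, matching \eqref{def-iasd-theorem} after averaging over $N$ with weights $\binom{d}{n}2^{-d}$. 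The boundary case $s=2$ is handled separately because $\Gamma(1-s/2)=\Gamma(0)$ diverges: there $\mathbb{E}[U+a^2V]=n+a^2(d-n)$ and averaging over $N$ gives the stated convention $I_a(2,d)=(1+a^2)d/2$.

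Finally, for the claim that $\mathbb{E}\|x^{(k)}\|^{p}\to\infty$ exponentially for $p>s$ when $\sigma=\bar\sigma_a(s,d)$, I would invoke strict monotonicity of the quasi-norm $t\mapsto\big(\mathbb{E}[Z^{t/2}]\big)^{1/t}$ (Lyapunov's inequality) with $Z=\sum_i\phi_a(\zeta_i)^2$; since $Z$ is nondegenerate this is strictly increasing, so $\big(I_a(p,d)\big)^{1/p}>\big(I_a(s,d)\big)^{1/s}$, whence $\bar\sigma_a(s,d)^{p}I_a(p,d)=\big(I_a(p,d)^{1/p}/I_a(s,d)^{1/s}\big)^{p}>1$ and the $p$-th moment grows geometrically. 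I expect the main obstacle to be the second and third steps: arranging the integration by parts so that the two-term structure of $w_{k,n}$ emerges cleanly, and verifying the exact bookkeeping of powers of $2$ and of the Beta/binomial parameters so that the series matches \eqref{def-iasd-theorem} term by term.
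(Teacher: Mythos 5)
Your proposal is correct and follows essentially the same route as the paper's proof: positive homogeneity plus rotational invariance reduces everything to the one-step multiplier $\mathbb{E}\|\phi_a(z)\|^s$, the orthant decomposition yields the binomial mixture of $\chi^2_n + a^2\chi^2_{d-n}$, and the fractional moment is evaluated from the derivative of the moment generating function expanded by the generalized binomial series into Beta integrals whose two-term split produces $w_{k,n}$ exactly as in \eqref{def-iasd-theorem} --- the identity $\mathbb{E}[T^{s/2}]=\frac{1}{\Gamma(1-s/2)}\int_0^\infty u^{-s/2}(-g'(u))\,du$ that you derive by integration by parts is precisely the Riemann--Liouville fractional-derivative formula of Cressie that the paper cites, and your direct expansion of $(1+2a^2u)^{-m}$ in the variable $w=\tfrac{2u(1-a^2)}{1+2u}$ is the same computation as the paper's substitution $u=1-\tfrac{1}{2z+1}$ in its integrals $J_{m,\ell}(\alpha)$. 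The only genuine (and valid) deviation is your handling of the $p>s$ divergence in part $(i)$, where you invoke the strict Lyapunov inequality for the nondegenerate variable $Z=\sum_i\phi_a(\zeta_i)^2$ to get $I_a(p,d)^{1/p}>I_a(s,d)^{1/s}$ directly, whereas the paper routes the same fact through the strict monotonicity of $s\mapsto\bar{\sigma}_a(s,d)$ (Corollary \ref{coro-crit-leaky}, proved via log-convexity of $\kappa(\tilde{s})=\sigma^{\tilde{s}}I_a(\tilde{s},d)$ and tangent-line arguments) --- your argument is the more elementary of the two.
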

\begin{remark}{\textbf{(Extension to Randomized Leaky ReLU)}} For Randomized Leaky ReLU activation, $a$ is chosen randomly. Theorem \ref{thm-leaky-relu} extends simply by replacing $w_{k,n}$ with $\mathbb{E}[w_{k,n}]$ where the expectation is taken with respect to the distribution of $a$. For instance, with a uniform distribution over an interval $[\ell,u]$ with default values of $\ell=\frac{1}{3}$ and $u=\frac{1}{8}$ \citep{xu2015empirical}, $\mathbb{E}[w_{k,n}]$ can be expressed with a closed-form formula as all the moments of the uniform distribution is explicitly known \citep{walck1996hand}.  
\end{remark}


We can also show that $\bar{\sigma}_a(s,d)$ possesses some monotonicity properties.
\begin{corollary} \textbf{(Monotonicity properties of $\bar{\sigma}_a(s,d)$)} \label{coro-crit-leaky} In the setting of Theorem \ref{thm-leaky-relu}, for  $(a,s,d)\in [0,1]\times (0,\infty)\times \mathbb{Z}_+$, the function $(a,s,d) \mapsto \bar{\sigma}_a(s,d)$ is a monotonically (strictly) decreasing function of $a,d$ and $s$.
\end{corollary}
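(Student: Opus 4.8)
The plan is to sidestep the unwieldy series in \eqref{def-iasd-theorem} by recovering a transparent probabilistic representation of $I_a(s,d)$ directly from Theorem~\ref{thm-leaky-relu}. Specializing \eqref{eq-leaky-relu-moments} to $k=1$, $\sigma=1$, and a unit input $\|x^{(0)}\|=1$, and using that the coordinates of $W^{(1)}x^{(0)}$ are i.i.d.\ standard Gaussians, one obtains
\[
 I_a(s,d) \;=\; \mathbb{E}\Big[\Big(\textstyle\sum_{i=1}^d \phi_a(Z_i)^2\Big)^{s/2}\Big] \;=:\; \mathbb{E}\big[S_d(a)^{s/2}\big],
\]
where $Z_1,\dots,Z_d$ are i.i.d.\ $\mathcal{N}(0,1)$ and $\phi_a(z)^2$ equals $z^2$ for $z>0$ and $a^2z^2$ for $z\le 0$. (This expectation is finite for every $s>0$, so it serves as the natural definition of $I_a(s,d)$ on the full stated domain $s\in(0,\infty)$.) Since $\bar\sigma_a(s,d)=I_a(s,d)^{-1/s}$, each of the three claims reduces to a monotonicity property of the single functional $\mathbb{E}[S_d(a)^{s/2}]$, and the key structural fact I would record at the outset is that $S_d(a)$ is nondegenerate (not almost surely constant) for every $a\in[0,1]$ and $d\ge 1$.

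For the parameters $a$ and $d$ I would argue by a pathwise coupling that uses one common family of Gaussians. For $a$: for fixed $z$ the map $a\mapsto \phi_a(z)^2$ is nondecreasing on $[0,1]$ and strictly increasing when $z<0$, so $S_d(a)$ is nondecreasing in $a$ along every sample path and strictly increasing on the positive-probability event $\{\min_i Z_i<0\}$; composing with the strictly increasing map $t\mapsto t^{s/2}$ and taking expectations gives that $a\mapsto I_a(s,d)$ is strictly increasing, hence $a\mapsto\bar\sigma_a(s,d)$ is strictly decreasing. For $d$: writing $S_{d+1}(a)=S_d(a)+\phi_a(Z_{d+1})^2$ with the extra term nonnegative and strictly positive with positive probability yields $I_a(s,d+1)>I_a(s,d)$ and therefore $\bar\sigma_a(s,d+1)<\bar\sigma_a(s,d)$.

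For the moment order $s$ the cleanest route is Lyapunov's (power-mean) inequality. With the notation $\|X\|_p:=(\mathbb{E}|X|^p)^{1/p}$ one has the algebraic identity $\bar\sigma_a(s,d)=I_a(s,d)^{-1/s}=\|S_d(a)\|_{s/2}^{-1/2}$, so it suffices to show $p\mapsto\|S_d(a)\|_p$ is strictly increasing; this is exactly Lyapunov's inequality, whose equality case forces $S_d(a)$ to be almost surely constant and is therefore excluded by nondegeneracy, giving strict monotonicity in $s$. I expect the only genuinely delicate point to be the first step: confirming that the series expression \eqref{def-iasd-theorem} does coincide with $\mathbb{E}[S_d(a)^{s/2}]$ and establishing nondegeneracy of $S_d(a)$. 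Once the representation is secured, all three monotonicities follow from elementary coupling and convexity arguments and never require manipulating the explicit weights $w_{k,n}$.
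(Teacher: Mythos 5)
Your proof is correct, and it diverges from the paper's own argument in an instructive way. The representation you build on, $I_a(s,d)=\mathbb{E}\big[\big(\sum_{i=1}^d\phi_a(Z_i)^2\big)^{s/2}\big]$, does not need to be re-derived: it is already established in the proof of Theorem \ref{thm-leaky-relu} (see \eqref{eq-Iasd-as-expectation}), and the paper's proof of this corollary works from exactly this form; your finiteness remark (e.g.\ via the pathwise bound $S_d(a)\le\chi^2(d)$, which has all moments) also legitimizes the extension to the full stated domain $s\in(0,\infty)$, beyond the range $s\in(0,2]$ covered by Theorem \ref{thm-leaky-relu}, just as the paper tacitly assumes. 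For $d$ your argument is identical to the paper's (append the nonnegative term $\phi_a^2(z_{d+1})$, strict on a positive-probability event). For $a$ the paper differentiates $I_a(s,d)$ in $a$ and argues the derivative is positive; your coupling along a common Gaussian sample reaches the same strict inequality without having to justify differentiation under the expectation (and incidentally sidesteps a notational slip in the paper's displayed derivative, which suppresses the chain-rule factor $\frac{s}{2}(\cdot)^{s/2-1}$). The genuine difference is the $s$-monotonicity: the paper views $\bar{\sigma}_a(s,d)$ as the root of the implicit equation $\sigma^s I_a(s,d)=1$ in \eqref{eq-nonlinear-eqn}, invokes log-convexity of the moment function $\kappa(\tilde{s})=\mathbb{E}\|\phi_a(\sigma W e_1)\|^{\tilde{s}}$, exploits the double root $\kappa(0)=\kappa(s)=1$ via a tangent-line argument, rules out the degenerate case $\kappa'(s)=0$ by contradiction, and concludes through implicit differentiation; you instead write $\bar{\sigma}_a(s,d)=\|S_d(a)\|_{s/2}^{-1/2}$ and apply Lyapunov's inequality, with strictness delivered uniformly by nondegeneracy of $S_d(a)$ — which indeed holds for every $a\in[0,1]$ and $d\geq 1$, since $\mathbb{P}(S_d=0)=2^{-d}$ with positive values otherwise when $a=0$, and $S_d$ has a continuous law when $a>0$. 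Both routes ultimately rest on the same convexity of moment functions, but yours buys a shorter, more elementary proof that dispatches the equality case once and for all; the paper's implicit-equation viewpoint has the compensating benefit of introducing the function $\kappa$ and its convexity properties, which the paper reuses verbatim in the proof of Theorem \ref{thm-almost-sure}.
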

Figures~\ref{fig:sigma_1}--\ref{fig:sigma_2} illustrate $\bar{\sigma}_0(s,d)$ as a function of $d$ when $s$ is fixed where we see a monotonic behavior as proven in Corollary \ref{coro-crit-leaky}. We also observe in the figures that it is a monotonically decreasing function of $s$ when $d$ is fixed. Next, we characterize how $\bar{\sigma}_a(d,s)$ behaves for large $d$.
\begin{figure}[h!]
\centering
    \subfloat[$\log(\bar{\sigma}_0(s,d))$ versus $d$]{
    \includegraphics[width=0.45\columnwidth]{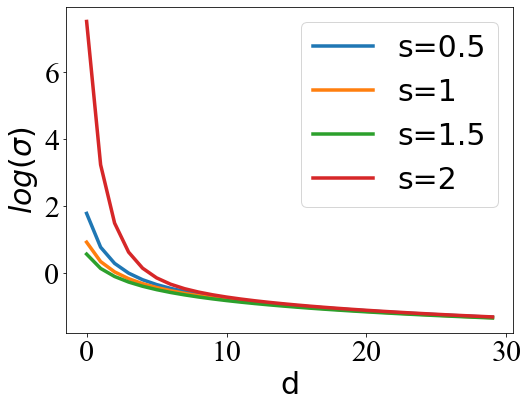}
    \label{fig:sigma_1}
    }
    \hfill
    \subfloat[$\bar{\sigma}_0(s,d)$ versus $s$]{
    \includegraphics[width=0.45\columnwidth]{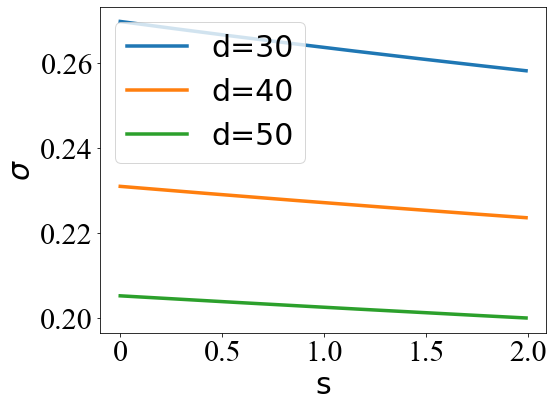}
    \label{fig:sigma_2}
    }
\caption{Dependency of $\bar{\sigma}_0(s,d)$ to parameters $s$ and $d$.}
\label{fig:sigma}
\end{figure}

\begin{corollary}\textbf{(Critical variance $\bar{\sigma}_a(d,s)$ when $d$ is large)}\label{coro-sigma-lin} For fixed width $d$ and $s\in (0,2]$, we have 
$ \bar{\sigma}_1^2(s,d) = \frac{1}{2}\left(\frac{\Gamma(\frac{d}{2})}{\Gamma(\frac{d}{2}+\frac{s}{2})}\right)^{2/s} =  \frac{1}{d} + \frac{(2-s)}{2d^2}  + o(\frac{1}{d^2})
$ with $\bar{\sigma}_1^2(2,d)=\frac{1}{d}$ in the special case $s=2$ which corresponds to Lecun initialization. Therefore, it follows from  Theorem \ref{thm-leaky-relu} that if 
$ \sigma^2  = \frac{1}{d} + \frac{(2-s)}{2d^2},
$
then the network with linear activation will preserve the moment of order $s+o(\frac{1}{d})$ of the network output. More generally, for $a>0$ small, we have 
\begin{equation*}
    \begin{split}
\bar{\sigma}_a^2(s,d) =& \left(\frac{2}{1+a^2}\right)\frac{1}{d} +  \left(\frac{5 -(12-\frac{5}{2}s)a^2}{2+(s+2)a^2}\right)\frac{(2-s)}{d^2}\\
&+ \mathcal{O}(\frac{a^4}{d}) + o(\frac{1}{d{^2}}). 
\end{split}
\end{equation*}

\end{corollary}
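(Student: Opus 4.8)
The plan is to work directly from the identity $\bar\sigma_a^2(s,d)=I_a(s,d)^{-2/s}$ supplied by Theorem~\ref{thm-leaky-relu}, so that every asymptotic claim reduces to an expansion of $I_a(s,d)$ as $d\to\infty$ (and, for the final display, jointly as $a\to 0$). It is convenient to record the probabilistic form behind the series in \eqref{def-iasd-theorem}: conditioning on the number $N\sim\mathrm{Binomial}(d,1/2)$ of positive pre-activation coordinates — the same conditioning used to prove Theorem~\ref{thm-leaky-relu} — one has $\|x^{(k+1)}\|^2/(\sigma^2\|x^{(k)}\|^2)\stackrel{d}{=}\chi^2_N+a^2\tilde\chi^2_{d-N}$ with independent chi-square variables, whence $I_a(s,d)=\sum_{n=0}^d\binom{d}{n}2^{-d}\,\mathbb{E}\big[(\chi^2_n+a^2\tilde\chi^2_{d-n})^{s/2}\big]$. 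This is the representation I would expand.

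First I would treat the linear case $a=1$, where $\phi_1$ is the identity, the recursion collapses to $\|x^{(k+1)}\|^2=\sigma^2\|x^{(k)}\|^2\chi^2_d$, and $I_1(s,d)=2^{s/2}\Gamma(d/2+s/2)/\Gamma(d/2)$; inverting yields the closed form $\bar\sigma_1^2(s,d)=\tfrac12\big(\Gamma(d/2)/\Gamma(d/2+s/2)\big)^{2/s}$, and $s=2$ gives $\bar\sigma_1^2(2,d)=1/d$ exactly. For the large-$d$ expansion I would apply the standard ratio asymptotic $\Gamma(z)/\Gamma(z+s/2)=z^{-s/2}\big(1+\tfrac{s(2-s)}{8z}+O(z^{-2})\big)$ with $z=d/2$, raise to the power $2/s$ using $(1+x)^{2/s}=1+\tfrac2s x+O(x^2)$, and multiply by $1/2$ to get $\bar\sigma_1^2(s,d)=\tfrac1d+\tfrac{(2-s)}{2d^2}+o(1/d^2)$. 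The ``moment of order $s+o(1/d)$'' consequence then follows by inversion: since $\bar\sigma_1^2(\cdot,d)$ is strictly monotone in $s$ (Corollary~\ref{coro-crit-leaky}), matching a prescribed $\sigma^2=\tfrac1d+\tfrac{(2-s)}{2d^2}$ to the expansion and applying the mean value theorem with $\partial_s\bar\sigma_1^2\sim-\tfrac1{2d^2}$ recovers the preserved moment to the claimed order.

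For the general small-$a$ statement I would expand $I_a(s,d)$ around its ReLU value. Writing $(\chi^2_n+a^2\tilde\chi^2_{d-n})^{s/2}=(\chi^2_n)^{s/2}\big(1+\tfrac{s}{2}a^2\,\tilde\chi^2_{d-n}/\chi^2_n+O(a^4)\big)$ in the bulk $n\approx d/2$ (the small-$n$ terms, where this expansion fails because $\chi^2_n$ can be near zero and the $n=0$ term is only $O(a^s)$, are suppressed by the exponentially small weights $\binom{d}{n}2^{-d}$ and do not affect the $O(1/d^2)$ coefficients), I would collect $I_a(s,d)=I_0(s,d)+a^2 J(s,d)+O(a^4)$, where $J$ is the binomial-weighted sum of $\tfrac{s}{2}\mathbb{E}[(\chi^2_n)^{s/2-1}]\,\mathbb{E}[\tilde\chi^2_{d-n}]=\tfrac{s}{2}(d-n)\,\mathbb{E}[(\chi^2_n)^{s/2-1}]$, using independence. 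Then, via $\bar\sigma_a^2=I_a^{-2/s}=\bar\sigma_0^2\big(1-\tfrac2s a^2 J/I_0+O(a^4)\big)$, I would substitute the large-$d$ asymptotics of $\bar\sigma_0^2$ from Corollary~\ref{coro-crit-sigma-asymp} together with the large-$d$ behaviour of $J/I_0$, re-expand jointly in $1/d$ and $a^2$, and match to the coefficients $2/(1+a^2)$ and $(5-(12-\tfrac52 s)a^2)/(2+(s+2)a^2)$ after Taylor-expanding those rational functions of $a^2$ to order $a^2$, consistent with the $O(a^4/d)$ remainder.

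The main obstacle is this last joint expansion: extracting the $1/d^2$ coefficient of $J(s,d)$ requires careful second-order Gamma-function asymptotics for the partial moments $\mathbb{E}[(\chi^2_n)^{s/2-1}]=2^{s/2-1}\Gamma(n/2+s/2-1)/\Gamma(n/2)$ summed against binomial weights concentrated near $n=d/2$, and one must verify that the small-$n$ tail, in particular the $O(a^s)$ contribution of the $n=0$ term, is negligible to the claimed orders so that the naive $a^2$-expansion of the integrand is legitimate term by term. Everything else is a routine if lengthy manipulation of Gamma ratios and binomial sums, anchored by the consistency checks that the $a=0$ specialization reproduces Corollary~\ref{coro-crit-sigma-asymp} and the $a\to1$ leading order reproduces $1/d$.
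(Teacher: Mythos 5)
Your proposal is correct, and for the linear case it coincides with the paper's proof: the collapse of \eqref{def-iasd-theorem} to $I_1(s,d)=2^{s/2}\Gamma(\tfrac d2+\tfrac s2)/\Gamma(\tfrac d2)$, the Gamma-ratio asymptotic $\Gamma(z)/\Gamma(z+\tfrac s2)=z^{-s/2}\bigl(1+\tfrac{s(2-s)}{8z}+\mathcal{O}(z^{-2})\bigr)$, and the inversion are exactly the paper's steps, and your monotonicity-plus-mean-value argument for the ``moment of order $s+o(1/d)$'' claim is a reasonable way to make explicit what the paper leaves implicit. Where you genuinely diverge is the small-$a$ part. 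The paper expands the Beta-integral representation $J_{m,\ell}(\alpha)$ from the proof of Theorem \ref{thm-leaky-relu} to \emph{second} order in $a^2$, carrying both $T_1$ and $T_2$ and the factor $K=1+\tfrac{a^2s}{2}+\tfrac{a^4s(s-2)}{8}$ through the binomial-normal approximation before truncating at the end; you instead expand the chi-square mixture representation to \emph{first} order, $I_a=I_0+a^2J+\mathcal{O}(a^4 d^{s/2})$, where your $J=\sum_n p_d(n)\tfrac s2(d-n)\mathbb{E}[(\chi^2_n)^{s/2-1}]$ is precisely the paper's $\mathbb{E}[T_1(B_d,s/2)]/a^2$. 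Your key observation — that the stated remainder $\mathcal{O}(a^4/d)$ absorbs all $a^4$ corrections to both displayed coefficients (since $a^4/d^2=\mathcal{O}(a^4/d)$), so the rational functions $\tfrac{2}{1+a^2}$ and $\tfrac{5-(12-\frac52 s)a^2}{2+(s+2)a^2}$ need only be matched at order $a^2$ — is valid, and one can verify it closes: with $I_0=(\tfrac d2)^{s/2}\bigl(1-\tfrac{5s(2-s)}{8d}+o(\tfrac1d)\bigr)$ one gets $a^2J/I_0=\tfrac{a^2s}{2}\bigl(1+\tfrac{3(2-s)}{d}\bigr)+o(\tfrac{a^2}{d})$, hence $\bar\sigma_a^2=\bar\sigma_0^2\bigl(1-a^2(1+\tfrac{3(2-s)}{d})\bigr)+\mathcal{O}(\tfrac{a^4}{d})=\tfrac{2(1-a^2)}{d}+\tfrac{(2-s)}{d^2}\bigl(\tfrac52-\tfrac{17}{2}a^2\bigr)+\mathcal{O}(\tfrac{a^4}{d})+o(\tfrac1{d^2})$, which agrees with the order-$a^2$ Taylor truncation of the stated formula. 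So your route buys a shorter computation (no $T_2$, no $a^4$ part of $K$) at the price of having to justify the truncation, which you do. One more point in your favor: you explicitly flag that $\mathbb{E}[(\chi^2_n)^{s/2-1}]$ diverges for small $n$ when $s\leq 1$ (equivalently, that $\mathrm{B}(1-\alpha,\tfrac n2+\alpha-1)$ is undefined there) and that the $n=0$ term contributes only $\mathcal{O}(a^s)$, both killed by the $2^{-d}$ binomial weights; the paper's formulas share this issue but pass over it silently, so your version is, on this point, more careful than the original.
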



Similar to Corollary \ref{coro-crit-sigma-asymp} for the ReLU case, we can express $\bar{\sigma}_a^2(s,d)$ as a function of $s$ for large $d$.
Thanks to Corollary \ref{coro-sigma-lin}, we can approximate $\bar{\sigma}_a^2(s,d)$ explicitly for Leaky ReLU with $a=0.01$ without evaluating the double sums in \eqref{def-iasd-theorem}. Leaky ReLU and linear activations do not output zero unless their input is zero; due to their piecewise linear structure. This is why, they can solve the ``dying neuron" problem of ReLU activations to a certain extent \citep{lu2019dying}. Consequently, under Gaussian initialization $\textbf{(A1)}$--$\textbf{(A2)}$ with for Leaky ReLU and linear activations, i.e. when $a \in (0,1]$, for any $\sigma>0$ given, it is straightforward to show that $ \mathbb{P}(x^{(k)}= 0)= 0$. Similar to our discussion for ReLU activations, we introduce
\begin{equation*}
\begin{split}
 f_{k,a}(r) :=& \mathbb{P}\left(\log(\|x^{(k)}\|)= dr ~\big |~ x^{(k)} \neq 0\right)\\
 =&\mathbb{P}\left(\log(\|x^{(k)}\|) = dr \right),
\end{split}
\label{def-fky-leaky}
\end{equation*}
where we used $\mathbb{P}(x^{(k)}= 0)= 0$ for $a\in (0,1]$. Let $R_{k,a}$ be the random variable corresponding to the density $f_{k,a}(r)$. The quantity
\beq \mu_a(\sigma) := \lim_{k\to\infty} \frac{R_{k,a}}{k} \quad \mbox{for} \quad a \in (0,1],
\label{eq-Lyap-exp}
\eeq
is called the \emph{top Lyapunov exponent} for the random Lipschitz map $x^{k+1}=\phi_a(W^{k+1}x^k)$ where $\sigma$ scales the $W^{k+1}$ term. The following theorem derives an explicit formula for $\mu_a(\sigma)$ and shows that $R_{k,a}$ is asymptotically normal if it is properly scaled for parametric ReLU.

 
\begin{theorem}\label{thm-log-leaky-relu} \textbf{(Asymptotic normality of the log. of the norm of the network output)} Consider a fully connected network with an input $x^{(0)}\in \mathbb{R}^d$ and Gaussian initialization satisfying \textbf{(A1)}-\textbf{(A2)} with Leaky ReLU activation function $\phi_a(x)$ with $a\in(0,1]$. Let $f_k(r)$ be 
the conditional probability density function of $\log(\|x^{(k)}\|)$ given that $x^{(k)}\neq 0$, defined formally by \eqref{def-fky-leaky}. 
Let $R_{{k,a}}$ be the random variable corresponding to the density $f_{{k,a}}(r)$. Then, the limit $\mu_a(\sigma)$ defined in \eqref{eq-Lyap-exp} exists, it is deterministic and independent of $x^{(0)}$, explicitly given by the formula \eqref{def-mu-a-sigma}
in the supplementary material. Let $R_{k,a}$ be the random variable corresponding to the density $f_{k,a}(r)$. Then, 
${\frac{R_{k,a} - \mu_a(\sigma) k}{\sqrt{k}}}  
\Rightarrow
\mathcal{N}(0, s_a^2)$ in distribution as $k\to\infty$
where $s_a^2$ is defined by \eqref{eq-def-sa} in the supplementary material.
\end{theorem}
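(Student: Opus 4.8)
The plan is to reduce the study of $R_{k,a}=\log\|x^{(k)}\|$ to a sum of \emph{independent and identically distributed} increments, after which the two assertions become the classical law of large numbers and the classical central limit theorem. First I would exploit the positive homogeneity of $\phi_a$: writing $\theta^{(k)}=x^{(k)}/\|x^{(k)}\|$, the recursion factorizes as $\|x^{(k+1)}\|=\|x^{(k)}\|\cdot\|\phi_a(W^{(k+1)}\theta^{(k)})\|$, so that
\begin{equation*}
\log\|x^{(k)}\|=\log\|x^{(0)}\|+\sum_{j=0}^{k-1}Y_j,\qquad Y_j:=\log\bigl\|\phi_a\bigl(W^{(j+1)}\theta^{(j)}\bigr)\bigr\|.
\end{equation*}
Since $a\in(0,1]$ makes $\phi_a$ a bijection, we have $x^{(k)}\neq 0$ almost surely and no conditioning is required, which is precisely why this case is cleaner than the ReLU case $a=0$.

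The key step is to show that $(Y_j)_{j\geq 0}$ are i.i.d., following the rotation-invariance idea of \cite{cohen1984stability}. Conditionally on $\mathcal{F}_j:=\sigma(W^{(1)},\dots,W^{(j)})$, which determines $\theta^{(j)}$, the vector $W^{(j+1)}\theta^{(j)}$ is $\mathcal{N}(0,\sigma^2 I_d)$ for \emph{every} unit vector $\theta^{(j)}$, because each coordinate $\sum_\ell W^{(j+1)}_{i\ell}\theta^{(j)}_\ell\sim\mathcal{N}(0,\sigma^2)$ independently. Hence the conditional law of $Y_j$ given $\mathcal{F}_j$ equals a fixed law $\nu$ that does not depend on $\theta^{(j)}$; since $Y_0,\dots,Y_{j-1}$ are $\mathcal{F}_j$-measurable, this gives $Y_j\perp(Y_0,\dots,Y_{j-1})$ with $Y_j\sim\nu$, so the increments are i.i.d. In particular the limiting slope is deterministic and independent of $x^{(0)}$ (the term $\log\|x^{(0)}\|$ is washed out after dividing by $k$).

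To pin down the explicit constants I would identify $\nu$ with the law of $\log\|\phi_a(\xi)\|$ for $\xi\sim\mathcal{N}(0,\sigma^2 I_d)$ and connect its cumulants to Theorem \ref{thm-leaky-relu}. Since $e^{sY_j}\stackrel{d}{=}\|\phi_a(\xi)\|^s$, the identity \eqref{eq-leaky-relu-moments} evaluated at $k=1$ yields the moment generating function $M_Y(s)=\mathbb{E}\,e^{sY}=\sigma^s I_a(s,d)$, whose logarithm is the cumulant generating function $K_Y(s)=s\log\sigma+\log I_a(s,d)$. Differentiating at $s=0$ and using $I_a(0,d)=1$ then produces $\mu_a(\sigma)=K_Y'(0)=\log\sigma+\partial_s\log I_a(s,d)\big|_{s=0}$ and $s_a^2=K_Y''(0)=\partial_s^2\log I_a(s,d)\big|_{s=0}$, which I would unwind into the closed forms \eqref{def-mu-a-sigma} and \eqref{eq-def-sa} via the chi-square mixture representation $\|\phi_a(\xi)\|^2\stackrel{d}{=}\sigma^2(\chi^2_N+a^2\chi^2_{d-N})$ with $N\sim\mathrm{Binomial}(d,1/2)$. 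With $\mu_a$ and $s_a^2$ identified, the strong law gives $R_{k,a}/k\to\mu_a$ almost surely, and the Lindeberg--L\'evy theorem gives $(R_{k,a}-\mu_a k)/\sqrt{k}\Rightarrow\mathcal{N}(0,s_a^2)$.

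The main obstacle I anticipate is the integrability bookkeeping behind the final step: the central limit theorem needs $\mathbb{E}[Y^2]<\infty$, equivalently that $M_Y$ be finite and smooth on an open interval around $s=0$, including small \emph{negative} $s$. Finiteness for $s\geq 0$ is immediate from Gaussian tails, whereas for $s\in(-\varepsilon,0)$ it amounts to a \emph{negative} moment of the output norm and is the delicate point --- exactly the issue that fails for ReLU and forces the separate conditional treatment there. For $a>0$ it is controlled by the lower bound $\|\phi_a(\xi)\|^2\geq a^2\sigma^2\chi^2_d$, which gives $M_Y(s)<\infty$ for all $s>-d$ and makes $I_a(s,d)$ analytic near $s=0$, legitimizing the differentiation above; equivalently one checks directly that $(\log\|\phi_a(\xi)\|)^2$ is integrable because $(\log r)^2 r^{d-1}$ is integrable near $r=0$ for $d\geq 1$.
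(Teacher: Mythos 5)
Your proposal is correct and follows essentially the same route as the paper's proof: the same rotational-invariance reduction of $\log\|x^{(k)}\|$ to a sum of i.i.d.\ increments distributed as $\log\|\phi_a(\xi)\|$ with $\xi\sim\mathcal{N}(0,\sigma^2 I_d)$, followed by the law of large numbers and the central limit theorem, with the constants obtained by differentiating the moment formula at order zero over the same binomial chi-square mixture (the paper organizes this per mixture component, via fractional derivatives of $\mathbb{E}(X_n^\alpha)$ at $\alpha=0$ combined through Lemma \ref{lemma-var-mix}, which is equivalent to your differentiation of the global cumulant generating function $s\log\sigma+\log I_a(s,d)$). Your explicit integrability check --- finiteness of $\mathbb{E}\|\phi_a(\xi)\|^{s}$ for all $s>-d$ via the bound $\|\phi_a(\xi)\|^{2}\geq a^{2}\sigma^{2}\chi^{2}_{d}$, which legitimizes the differentiation at $s=0$ and the applicability of the CLT --- is a point the paper leaves implicit ("provided that the expectations are finite"), so it is a welcome tightening rather than a deviation.
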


\begin{figure}[h!]
\centering
    \subfloat[Linear]{
    \includegraphics[width=0.45\columnwidth]{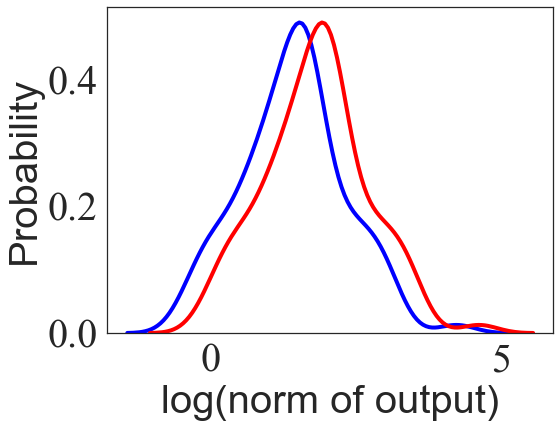}
    \label{fig:logoutput1b}
    }
    \hfill
    \subfloat[Linear]{
    \includegraphics[width=0.45\columnwidth]{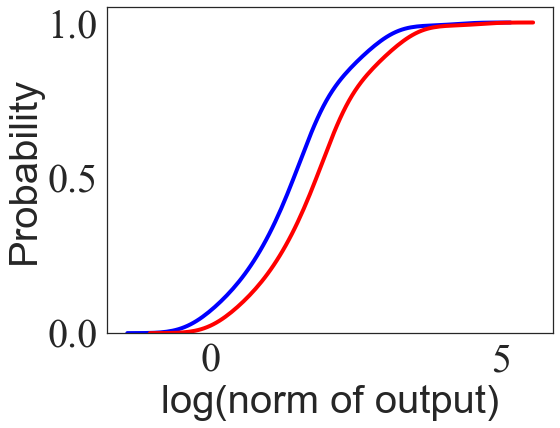}
    \label{fig:cdfoutput1}
    }
    \hfill
    \subfloat[Leaky-ReLU]{
    \includegraphics[width=0.45\columnwidth]{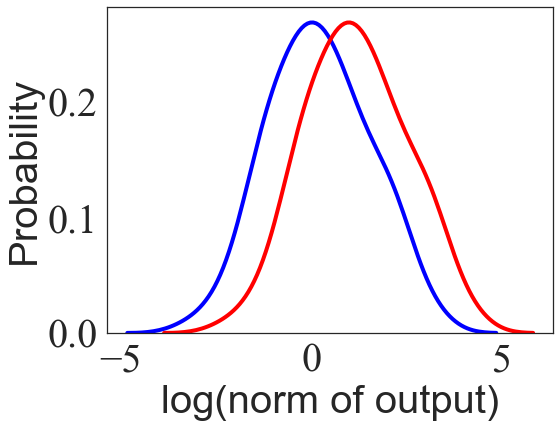}
    \label{fig:logoutput3b}
    }
   \hfill
    \subfloat[Leaky-ReLU]{
    \includegraphics[width=0.45\columnwidth]{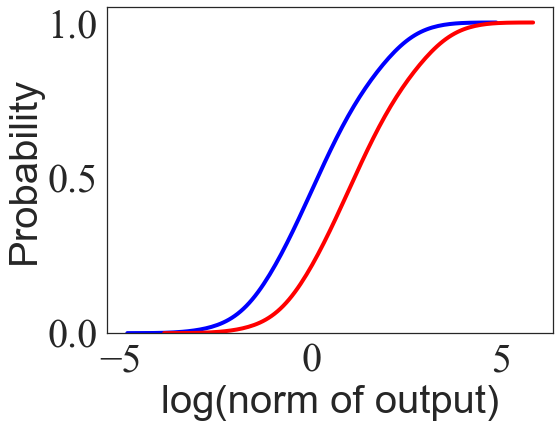}
    \label{fig:cdfoutput3}
    }
    \hfill
\caption{Distribution of the natural logarithm of the norm of the output $R_{k,a}$ through 100 layers with Linear $(a=1)$ and Leaky ReLU activation with $a=0.01$. The blue line in all figures is the result of Kaiming's method, the red line is the result of our initialization. (a): Probability density of $R_{k,1}$ where choose $\sigma=\bar{\sigma}_1(1,d)$.  (b): Cumulative density function of $R_{k,1}$. (c): Probability density of $R_{k,a}$ for $a=0.01$ where we choose $\sigma=\bar{\sigma}_a(1,d)$. 
(d): Cumulative density function of $R_{k,a}$ for $a=0.01$.}
\label{fig:logoutputb}
\end{figure}
\begin{remark}\label{remark-non-asymptotic}
\textbf{(Non-asymptotic version of Theorems \ref{thm-relu-log-out} and \ref{thm-log-leaky-relu} and stochastic dominance)} Theorems \ref{thm-relu-log-out} and \ref{thm-log-leaky-relu} are based on invoking the central limit theorem (CLT) in its proof. If we use  a non-asymptotic version of the CLT instead such as the Berry–Esseen theorem \citep{berry1941accuracy}, the results extend to finite $k$ in a straightforward fashion. In Figure \ref{fig:logoutputb}, we illustrate Theorem \ref{thm-log-leaky-relu} where we plot the distribution of the natural logarithm of the norm of the output $R_{k,a}$ and observe a Gaussian behavior. In the supplementary material (Remark \ref{remark-stoc-dominance-2}), we also discuss the stochastic dominance properties of $s<2$ with respect to $s=2$.
\end{remark}

\begin{remark}\label{remark-dropout} \textbf{(Extension of results to dropout)} Dropout is a popular technique that randomly removes some neurons to prevent overfitting \citep{srivastava2014dropout}. In this case, with zero bias, the layer recursion becomes $x^{(k+1)}:= \phi_a(W^{(k+1)}(x^{(k)} \odot \varepsilon^{(k+1)})$ where $\odot$ denotes component-wise multiplication and $\varepsilon^{(k+1)}$ is a scaled Bernouilli random variable with i.i.d. components satisfying  $\mathbb{P}(\varepsilon^{(k+1)}_i = 0) = 1-q$ and $\mathbb{P}(\varepsilon^{(k+1)}_i = \frac{1}{q}) = q$ where $q$ is the probability to keep a neuron with $q\in (0,1]$ (see e.g. \citep{pretorius2018critical}). All the results in this paper generalize naturally with minor modifications (such as scaling with $q$) in the results if dropout is used (see Appendix \ref{sec-dropout}). For example, for any $s\in (0,2]$ and $q\in (0,1]$, the critical threshold for ReLU with dropout becomes 
$\sigma_{0,q}^2(s,d) = \frac{2q}{d} + \frac{2-s}{2d^2}(6-q) + \mathcal{O}(\frac{1}{d^2})$
where our analysis recovers the results of Corollary \ref{coro-crit-sigma-asymp} in the special case when $q=1$ and results of \cite{pretorius2018critical} when $s=2$.
\end{remark}
The following theorem shows that if bias vectors are zero, then with both Kaiming initialization and our initialization, the network outputs will converge to an almost sure limit of zero, even if the network outputs preserve moments of order $s$ for every layer $k$. Roughly speaking, the reason this happens is that the network preserves the moments in a highly anisotropic manner, output is often zero but also can occassionally take large values so that $s$-th moment is preserved. This supports empirical results of \cite[Sec. 3]{saxe2013exact} that observed this anisotropic behavior for linear activations. Our result shows that similar behavior happens with non-linear activations. We also show that depending on the sign of $\mu_a(\sigma)$, both $L_p$ limit and a.s. limit can be only zero or infinity. In the special case of $s=2$ (for Kaiming initialization), such a convergence result in $L_2$ was previously proven in \cite[Thm. 5]{hanin2018start} where layer widths can take arbitrary values.
\begin{theorem}\label{thm-almost-sure} \textbf{($\sigma$ determines the almost sure (a.s.) and $L_p$ limit)} Consider Gaussian initialization $\textbf{(A1)}$--$\textbf{(A2)}$ with activation function $\phi_a(x)$ with $a\in[0,1]$ and input $x^{(0)}\neq 0$. For ReLU, i.e. when $a=0$, regardless of the choice of $\sigma$, the network output $x^{(k)}$ converges to zero a.s as $k\to\infty$. For parametric ReLU or for linear activations, i.e. when $a\in (0,1]$, then $\mu_a(\sigma) < 0$ if and only if $\sigma = {\bar{\sigma}_{a}(s,d)}$ for some $s>0$ and in this case $x^{(k)}$ converges to zero a.s. for $s\geq 1$ and for $s<1$, $x^{(k)}$ converges in $L_p$ for any $p \in (0,s)$ and has a subsequence that converges to zero almost surely, where $\mu_a(\sigma)$ is as in Theorem \ref{thm-log-leaky-relu}. On the other hand, if $\mu_a(\sigma) > 0$, then the sequence $x^{(k)}$ converges to infinity in $L_p$ for any $p>0$ and $x^{(k)}$ has a subsequence that converges to infinity a.s.

\end{theorem}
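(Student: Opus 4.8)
The plan is to treat the ReLU case ($a=0$) separately and to organize the positive-slope case ($a\in(0,1]$) around the convex \emph{moment Lyapunov function} $\Lambda(s):=\lim_{k\to\infty}\frac1k\log\mathbb{E}[\|x^{(k)}\|^s]$, which by Theorems \ref{thm-moment-relu} and \ref{thm-leaky-relu} equals $s\log\sigma+\log I_a(s,d)$. For ReLU the key structural fact is that $x^{(k)}=0$ is absorbing (with zero bias, $\phi_0(W^{(k+1)}\cdot 0)=0$) and that, conditionally on $x^{(k)}=v\neq 0$, the event $x^{(k+1)}=0$ requires every coordinate $\langle W^{(k+1)}_i,v\rangle\le 0$; since these are independent $\mathcal N(0,\sigma^2\|v\|^2)$ variables, this has probability exactly $2^{-d}>0$. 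Hence $\mathbb P(x^{(k)}\neq 0)\le(1-2^{-d})^k\to 0$, and since $\{x^{(k)}\neq 0\}$ is decreasing, $x^{(k)}=0$ eventually almost surely, giving $x^{(k)}\to 0$ a.s.\ irrespective of $\sigma$.

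For $a\in(0,1]$ I first establish the equivalence $\mu_a(\sigma)<0\iff \sigma=\bar\sigma_a(s,d)$ for some $s>0$. Writing $\Lambda(s;\sigma)=s\log\sigma+\log I_a(s,d)$, I note $\Lambda(0;\sigma)=0$ (since $I_a(0,d)=1$), and that $\Lambda(s;\sigma)=0$ is equivalent, after dividing by $s$, to $\sigma=\bar\sigma_a(s,d)$; thus existence of a moment-preserving $s$ is exactly the statement that $\sigma$ lies in the range of $s\mapsto\bar\sigma_a(s,d)$. By Corollary \ref{coro-crit-leaky} this map is continuous and strictly decreasing, so its range is $(\sigma_{\min},\sigma_{\max})$ with $\sigma_{\max}=\lim_{s\to 0^+}\bar\sigma_a(s,d)=\exp(-\partial_s\log I_a(s,d)|_{s=0})$ (by L'Hôpital, using $I_a(0,d)=1$) and $\sigma_{\min}=\lim_{s\to\infty}\bar\sigma_a(s,d)$; since the factor $\Gamma(d/2+s/2)$ in $I_a(s,d)$ forces $\tfrac1s\log I_a(s,d)\to\infty$, one gets $\sigma_{\min}=0$. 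On the other hand $\mu_a(\sigma)=\Lambda'(0;\sigma)=\log\sigma-\log\sigma_{\max}$, which can also be read off directly from the explicit formulas for $\mu_a$ in Theorems \ref{thm-relu-log-out} and \ref{thm-log-leaky-relu}. Hence $\mu_a(\sigma)<0\iff\sigma<\sigma_{\max}\iff\sigma\in(\sigma_{\min},\sigma_{\max})$, as desired.

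Given $\mu_a(\sigma)<0$, i.e.\ $\sigma=\bar\sigma_a(s,d)$, the $s$-th moment is exactly preserved ($\Lambda(s)=0$); since $\Lambda$ is convex with $\Lambda(0)=\Lambda(s)=0$ and strictly convex by Lyapunov's moment inequality (the log-norm increments are genuinely random), one gets $\Lambda(p)<0$ for every $p\in(0,s)$. Thus $\mathbb{E}\|x^{(k)}\|^p=\|x^{(0)}\|^p(\sigma^p I_a(p,d))^k\to 0$ for all $p\in(0,s)$, giving $L_p$ convergence to $0$ (and $L_1$ convergence when $s\ge 1$). Fixing any such $p$, Markov's inequality gives $\mathbb P(\|x^{(k)}\|>\varepsilon)\le\varepsilon^{-p}\mathbb{E}\|x^{(k)}\|^p$, which is summable in $k$, so Borel–Cantelli yields $\|x^{(k)}\|\to 0$ almost surely (in particular the stated subsequential a.s.\ convergence for $s<1$). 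For $\mu_a(\sigma)>0$, convexity with $\Lambda(0)=0$ and $\Lambda'(0)>0$ forces $\Lambda(p)\ge\mu_a(\sigma)\,p>0$ for all $p>0$, so $\mathbb{E}\|x^{(k)}\|^p\to\infty$ for every $p>0$, giving the $L_p$ divergence. The a.s.\ (subsequential) divergence does \emph{not} follow from mean divergence, and here I would invoke the almost sure convergence $\tfrac1k\log\|x^{(k)}\|\to\mu_a(\sigma)$ from the top-Lyapunov-exponent theory for the random Lipschitz maps $x\mapsto\phi_a(Wx)$ underlying Theorems \ref{thm-relu-log-out}--\ref{thm-log-leaky-relu} (cf.\ \citep{cohen1984stability,ELTON199039}), which with $\mu_a(\sigma)>0$ gives $\|x^{(k)}\|\to\infty$ a.s.

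The main obstacle is the honest treatment of the modes of convergence, and in particular the $\mu_a(\sigma)>0$ almost-sure claim: mean divergence $\mathbb{E}\|x^{(k)}\|^p\to\infty$ is compatible with an almost surely bounded sequence whose expectation is inflated by rare large excursions, so moment estimates alone cannot deliver it and one must use the almost sure Lyapunov limit (a subadditive/ergodic argument with the required $\log^+$-integrability of the Gaussian operator norm). A secondary technical point is the identification $\mu_a(\sigma)=\Lambda'(0)$ together with the endpoint limits $\sigma_{\min}=0$ and $\sigma_{\max}=\exp(-\partial_s\log I_a|_{s=0})$, which pin down the range of $\bar\sigma_a(\cdot,d)$ and thereby convert the Lyapunov-sign condition into the moment-preservation condition; this needs the asymptotics of $I_a(s,d)$ as $s\to 0$ and $s\to\infty$ and the monotonicity from Corollary \ref{coro-crit-leaky}.
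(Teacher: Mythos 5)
Your proof is correct, and in the two almost-sure steps it takes a genuinely different --- and in fact stronger --- route than the paper's. The ReLU absorption argument and the equivalence $\mu_a(\sigma)<0 \iff \sigma=\bar{\sigma}_a(s,d)$ for some $s>0$ match the paper in substance: the paper works with $\kappa(s)=\sigma^s I_a(s,d)$, uses $\kappa(0)=1$, $\mu_a(\sigma)=\kappa'(0)$, convexity, and $\kappa(s)\to\infty$ to produce a root $s_*$ with $\kappa(s_*)=1$, which is your range-of-$\bar{\sigma}_a(\cdot,d)$ argument in different packaging; one caveat is that the explicit formula \eqref{def-iasd-theorem} for $I_a(s,d)$ is stated only for $s\in(0,2]$ (the prefactor $1/\Gamma(1-s/2)$ has poles at even $s$), so your endpoint claim $\sigma_{\min}=\lim_{s\to\infty}\bar{\sigma}_a(s,d)=0$ should be argued from the underlying expectation, e.g.\ $\mathbb{E}\|\phi_a(z)\|^s \geq a^s\,\mathbb{E}\|z\|^s$ together with Stirling, rather than from that formula. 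Where you genuinely depart is in the convergence arguments. For $\mu_a(\sigma)<0$, the paper extracts a subsequence via the CLT of Theorem \ref{thm-log-leaky-relu} (the probability limit \eqref{eq-prob-limit}) plus Borel--Cantelli, and upgrades to full a.s.\ convergence only when $s_*>1$, via a telescoping $L_1$ bound and the lemma of Steinsaltz; your argument --- exact geometric decay of $\mathbb{E}\|x^{(k)}\|^p$ for any fixed $p\in\big(0,\min(s,2)\big)$, Markov's inequality, and Borel--Cantelli over $\varepsilon=1/m$ --- is simpler and gives full-sequence a.s.\ convergence for \emph{every} $s>0$, strictly more than the theorem asserts in the $s<1$ regime. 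For $\mu_a(\sigma)>0$, the paper again only obtains a subsequence from \eqref{eq-prob-limit}, whereas your appeal to the a.s.\ limit $\tfrac{1}{k}\log\|x^{(k)}\|\to\mu_a(\sigma)$ is legitimate here because the paper's own proofs of Theorems \ref{thm-relu-log-out}--\ref{thm-log-leaky-relu} establish that the increments $\log\big(\|x^{(j+1)}\|/\|x^{(j)}\|\big)$ are i.i.d.\ with finite mean (and $x^{(k)}\neq 0$ a.s.\ when $a\in(0,1]$), so the strong law applies directly and yields full a.s.\ divergence. In short: the paper's route stays entirely inside results it has already proved, while your moment/SLLN route is shorter, avoids the Steinsaltz lemma, and shows that the theorem's $s<1$ and subsequence hedges are unnecessarily cautious.
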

If additive zero mean Gaussian noise is added to network outputs for linear activations, we can prove that the limit is non-zero and heavy tailed whereas the limit is zero without noise injection. Our results provides a theoretical support for experimental results of \cite{poole2014analyzing} where additive noise was observed to improve performance by spreading information propagation more evenly across the network. The results also show that  forward pass can make the network output (and subsequently the gradient of the training cost) heavy-tailed for Gaussian initialization.
\begin{theorem}\label{thm-heavy-tail} \textbf{(Heavy-tailed a.s. limit)} Under Gaussian initialization $\textbf{(A1)}$--$\textbf{(A2)}$ with a linear activation function $\phi_1(x)$, input $x^{(0)}\neq 0$ and $\sigma=\bar{\sigma}_1(s,d)$ for some $s\in (0,2)$ where $\bar{\sigma}_1(s,d)$ is given explicitly in Corollary \ref{coro-sigma-lin}, if additive i.i.d. mean-zero Gaussian noise is added component-wise to post-activations, then the layer outputs $x^{(k)}$ admit a non-zero almost sure limit that is heavy tailed in the sense that it has infinite variance and its moments of order $p$ are infinite for any $p>s$.
\end{theorem}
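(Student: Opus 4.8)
The plan is to recognize the noisy linear forward pass as an affine stochastic fixed-point (random difference) equation and to read off its heavy-tailed stationary law from the Kesten--Goldie theory of products of random matrices. With $a=1$ and additive noise the recursion reads $x^{(k+1)} = W^{(k+1)} x^{(k)} + \xi^{(k+1)}$, i.e.\ an affine recursion $X_{k+1}=A_{k+1}X_k+B_{k+1}$ with i.i.d.\ Gaussian matrices $A_k=W^{(k)}$ and i.i.d.\ mean-zero Gaussian vectors $B_k=\xi^{(k)}$. Writing $\Pi_k:=W^{(1)}\cdots W^{(k)}$ (with $\Pi_0=I$), the first step is existence of a stationary solution. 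Since $\sigma=\bar\sigma_1(s,d)$ with $s>0$, Theorem \ref{thm-almost-sure} gives a strictly negative top Lyapunov exponent $\mu_1(\sigma)<0$, so $\|\Pi_k\|\to 0$ exponentially fast almost surely; as the Gaussian noise grows at most sub-polynomially a.s., the backward series $Y_\infty:=\sum_{j\ge 1}\Pi_{j-1}\,\xi^{(j)}$ converges absolutely almost surely, and $x^{(k)}\Rightarrow Y_\infty$, where $Y_\infty$ is the unique solution of $Y\stackrel{d}{=}WY+\xi$ with $Y$ independent of $(W,\xi)$ on the right. This $Y_\infty$ is the claimed almost sure limit; it is nonzero, since $Y_\infty=0$ a.s.\ would force $\xi\stackrel{d}{=}0$.

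The second step identifies the candidate tail index. Set $\rho(p):=\sigma^p I_1(p,d)$, so that Theorem \ref{thm-leaky-relu} reads $\mathbb{E}\|x^{(k)}\|^p=\|x^{(0)}\|^p\rho(p)^k$; applying it with a deterministic input $v$ and using the exchangeability of the i.i.d.\ factors yields $\mathbb{E}\|\Pi_k v\|^p=\|v\|^p\rho(p)^k$. The map $p\mapsto\log\rho(p)$ is convex (H\"older), with $\log\rho(0)=0$, slope $\mu_1(\sigma)<0$ at the origin, and $\log\rho(s)=0$ by the defining identity $\sigma^s I_1(s,d)=1$; hence $s$ is the unique positive root of $\rho(p)=1$, with $\rho(p)<1$ for $p<s$ and $\rho(p)>1$ for $p>s$. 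Finiteness of moments below $s$ follows from a direct series bound: for $p\in(0,s)$, conditioning each summand on $\xi^{(j)}$ gives $\mathbb{E}\|\Pi_{j-1}\xi^{(j)}\|^p=\mathbb{E}\|\xi\|^p\,\rho(p)^{j-1}$, so that $\mathbb{E}\|Y_\infty\|^p\le\mathbb{E}\|\xi\|^p\sum_{j\ge 0}\rho(p)^j<\infty$ (subadditivity of $\|\cdot\|^p$ for $p\le 1$, Minkowski for $1\le p<s$).

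The heart of the argument is that the moments blow up at order $s$. For the variance and, more generally, any $p\ge 1$ with $p>s$, I would argue directly from the fixed point: conditioning on $(W,Y)$ and using the symmetry of $\xi$ together with convexity of $\|\cdot\|^p$ gives $\mathbb{E}_\xi\|Wy+\xi\|^p\ge\|Wy\|^p$, whence $m_p:=\mathbb{E}\|Y_\infty\|^p$ satisfies $m_p=\mathbb{E}\|WY+\xi\|^p\ge\rho(p)\,m_p$; since $\rho(p)>1$ and $m_p>0$, this forces $m_p=\infty$. Taking $p=2$ (legitimate as $s<2$) gives infinite variance, and the exact identity $m_2=\rho(2)m_2+\mathbb{E}\|\xi\|^2$ with $\rho(2)>1$ makes the contradiction transparent. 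To obtain the full power-law statement---infinite moments for every $p>s$, including $p\in(s,1)$ when $s<1$ where the convexity trick fails, together with the heavy-tail characterization---I would invoke the multivariate Kesten--Goldie theorem for affine recursions, which yields $\mathbb{P}(\|Y_\infty\|>t)\sim C t^{-\alpha}$ with $\alpha$ the unique index whose per-step moment Lyapunov exponent vanishes, i.e.\ $\alpha=s$. The main obstacle is exactly this last step: verifying that the i.i.d.\ Gaussian ensemble satisfies Kesten's regularity hypotheses (the Gaussian matrices generate a Zariski-dense, hence strongly irreducible and proximal, subgroup of $GL_d$, and the relevant log-spectral law is non-lattice) and matching the abstract index $\alpha$ to the explicitly known $s$, using convexity of $\log\rho(\cdot)$ and the coincidence of vector- and operator-norm moment growth rates under irreducibility. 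Once the tail index is pinned to $s\in(0,2)$, infinite variance and infinite moments of all orders $p>s$ follow at once, completing the proof.
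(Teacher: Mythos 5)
Your proposal is correct in outline and its first half coincides with the paper's own proof: both recast the noisy forward pass as the affine recursion $\tilde{x}^{(k+1)}=W^{(k+1)}\tilde{x}^{(k)}+\xi^{(k+1)}$, deduce $\mu_1(\sigma)<0$ from $\sigma=\bar{\sigma}_1(s,d)$ exactly as in Theorem \ref{thm-almost-sure}, verify the log-moment conditions, and obtain the limit as the perpetuity series $\sum_{j\geq 1}\Pi_{j-1}\xi^{(j)}$ via the iterated random affine map theory (the paper cites Diaconis--Freedman; you share with the paper the same backward-vs-forward nuance, which you at least flag honestly by writing $x^{(k)}\Rightarrow Y_\infty$). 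Where you genuinely diverge is the heavy-tail step. The paper never touches the stationary equation: it compares the noisy iterates with the noiseless ones, asserting $\mathbb{E}\|\tilde{x}^{(k)}\|^p\geq \mathbb{E}\|x^{(k)}\|^p$, invokes Theorem \ref{thm-leaky-relu} to get $\mathbb{E}\|x^{(k)}\|^p\to\infty$ for $p>s$, and concludes the limit has infinite $p$-th moment since otherwise $L_p$ convergence would give a contradiction. You instead exploit the distributional fixed point $Y\stackrel{d}{=}WY+\xi$: the inequality $m_p\geq \rho(p)m_p$ (Jensen conditional on $(W,Y)$, plus rotational invariance giving $\mathbb{E}\|WY\|^p=\rho(p)m_p$) is fully rigorous for $p\geq 1$ and yields infinite variance at once via $p=2>s$; this is arguably cleaner than the paper's route, whose comparison inequality is stated without proof and is delicate precisely when $p<1$ (note $\mathbb{E}\|m+g\|^p\geq\|m\|^p$ can fail for $p\in(0,1)$, e.g.\ $d=1$, $m=1$, small noise variance, since $x\mapsto |x|^p$ is concave near $m$), and whose ``otherwise $L_p$ convergence'' step tacitly needs uniform integrability. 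The price you pay is at $p\in(s,1)$, where you invoke the multivariate Kesten--Goldie theorem; the verifications you flag are indeed the real work, but they do go through for the i.i.d.\ Gaussian ensemble (a.s.\ invertibility, full support hence strong irreducibility and proximality, non-lattice log-spectral law), and the exact identity $\mathbb{E}\|\Pi_k v\|^p=\rho(p)^k\|v\|^p$ from rotational invariance pins the Kesten index to $s$. In exchange, your route proves strictly more than the theorem claims: the exact tail asymptotic $\mathbb{P}(\|Y_\infty\|>t)\sim Ct^{-s}$ and finiteness of moments of every order $p<s$, neither of which the paper's argument establishes.
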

\vspace{-7pt}
\section{Numerical Experiments}
\vspace{-7pt}
We compared our initialization method with Kaiming initialization \citep{he2015delving} and Xavier method \citep{glorot2010understanding} on fully connected networks with linear, ReLU, Leaky ReLU activation functions. For the ReLU and linear activations, we also compared our method with random walk initialization \citep{sussillo2014random}, which does not have explicit parameters for the Leaky ReLU but directly applicable to linear and ReLU activations. We only compare our method with initialization strategies that do not take additional CPU time during training for a fair comparison. We report train loss, test loss, train accuracy and test accuracy over first 30 epochs of training with SGD to focus on the impact of initialization on two benchmark problems: MNIST \citep{lecun1998gradient} and CIFAR-10 \citep{krizhevsky2009learning}.
\begin{figure}[h!]
\centering
    \subfloat[Train loss]{
    \includegraphics[width=0.4\columnwidth]{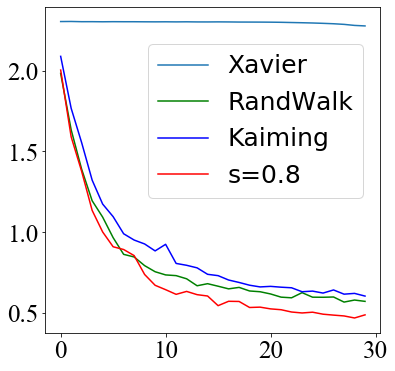}
    \label{fig:relu_1}
    }
    \hfill
    \subfloat[Test loss]{
    \includegraphics[width=0.4\columnwidth]{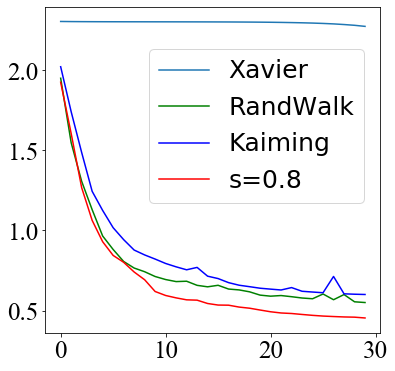}
    \label{fig:relu_2}
    }
    \hfill
    \subfloat[Train accuracy]{
    \includegraphics[width=0.4\columnwidth]{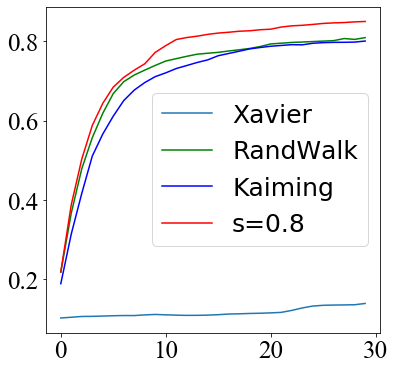}
    \label{fig:relu_3}
    }
    \hfill
    \subfloat[Test accuracy]{
    \includegraphics[width=0.4\columnwidth]{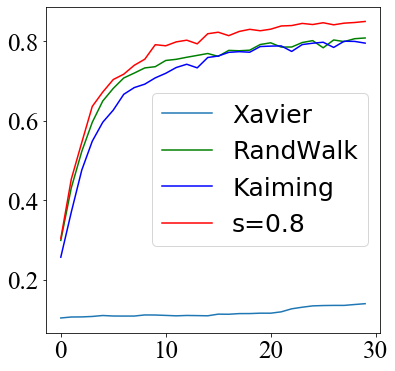}
    \label{fig:relu_4}
    }
    \\
    \hfill\\
\resizebox{\columnwidth}{!}{
\begin{tabular}{|l|l|l|l|l|}
\hline
\multirow{2}{*}{} & \multicolumn{2}{l|}{train loss} & \multicolumn{2}{l|}{test loss} \\ \cline{2-5} 
                  & mean                & std       & mean               & std       \\ \hline
Xavier            & 2.2761              & 0.0349    & 2.2723             & 0.0387    \\ \hline
Randwalk          & 0.5712              & 0.4097    & 0.5498             & 0.4211    \\ \hline
Kaiming           & 0.6039              & 0.426     & 0.6                & 0.42      \\ \hline
s=0.8             & \textbf{0.4877}     & 0.3059    & \textbf{0.4535}    & 0.314     \\ \hline
\multirow{2}{*}{} & \multicolumn{2}{l|}{train acc}  & \multicolumn{2}{l|}{test acc}  \\ \cline{2-5} 
                  & mean(\%)            & std       & mean(\%)           & std       \\ \hline
Xavier            & 13.9                & 0.0299    & 13.99              & 0.0319    \\ \hline
Randwalk          & 80.94               & 0.1649    & 80.82              & 0.1604    \\ \hline
Kaiming           & 80.08               & 0.147     & 79.53              & 0.1488    \\ \hline
s=0.8             & \textbf{85.02}      & 0.1174    & \textbf{84.98}     & 0.118     \\ \hline
\end{tabular}}
\caption{Fully connected network with width $d=64$ and depth $20$ for ReLU activation on MNIST. The plots are the average results over 20 runs, the mean and standard deviations (std) for runs are provided as a table. The $x$-axis represents the epoch number.}
\label{fig:relu}
\end{figure}

Figure \ref{fig:relu} is the summary of our results for MNIST with ReLU activation with mean and standard deviation (std) of the runs reported over 20 runs, where we see a clear improvement with our initialization for $s=0.8$. More specifically, within our initialization, we chose $\sigma^2 =\frac{2}{d} + \frac{3}{d^2}$ which preserves the moment $s\approx 0.8$ according to Corollary \ref{coro-crit-sigma-asymp} where $d=64$ over 20 layers. Further details of the experimental setup, results for ReLU and linear activations and our experiments on the CIFAR-10 dataset can be found in Section \ref{sec-numerical-appendix} of the appendix where we observed qualitatively similar results and our initialization method often improved performance.

\textbf{Heavy-tailed gradients at initialization.} The works \citep{simsekli2019tail,csimcsekli2019heavy,gurbuzbalaban2020heavy} consider training of fully-connected and convolutional neural networks with SGD and standard initialization techniques and argue that the distribution of the gradients become often more and more heavy tailed over time. To be more specific, the numerical experiments in these works suggest that with traditional initialization approaches, the stochastic gradients have often light tails in the first epochs of SGD iterations but the tails become heavier over time as the number of epochs increases while the weights are being optimized. Such observations are also consistent and inline with the earlier results of \cite{martin2019traditional}. Also, these results together with \cite{csimcsekli2020hausdorff} suggest that heavy tails often lead to better exploration and generalization properties. Our initialization technique allows this `favorable heavy-tailed phase' to kick in earlier, right at the beginning of SGD iterations as opposed to later epochs of training. This is illustrated in Figure~\ref{fig:tailindex} which displays the tail index of gradient noise over iterations with our initialization, where the tail index is defined as the value of $\alpha$ such that the pdf $p(x)$ of the gradient noise is on the order of $1/\|x\|^{\alpha+1}$ when $\|x\|$ is large enough (see \citep{simsekli2019tail,gurbuzbalaban2020heavy} for more details on the tail index). Figure~\ref{fig:tailindex} is based on a fully-connected network with 5 layers with width 64. We use ReLU activation on the MNIST dataset where we take the batch size to be 32 with $s=1$. We use the same estimator from \cite{simsekli2019tail} for the tail index. We observe in Figure~\ref{fig:tailindex} the heavy tails arise starting from the initial iterations with a tail index $\alpha$ around 1 as expected.
\begin{figure}[h!]
\centering
\includegraphics[width=0.4\columnwidth]{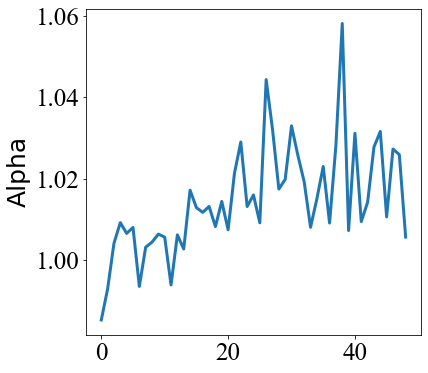}
\caption{Tail index of gradient noise over epochs.}
\label{fig:tailindex}
\end{figure}
\vspace{-7pt}

\textbf{Results on convolutional neural networks.} 
In Section $N$ of the appendix, we provide the extensions of our theoretical results to convolutional neural networks. Here, we present our numerical experiments in Figure~\ref{fig:cnn}, where we used one convolutional layer and four fully-connected layers using ReLU with width $d=64$ on MNIST and CIFAR-10 datasets. We train our networks with stochastic gradient (SGD). The stepsizes of both experiments are tuned and are same for the initializations. Figure~\ref{fig:cnn1} and ~\ref{fig:cnn2} display the first 30 and 50 epoches of the training process of MNIST. With this architechture, after 50 iterations, we achieved an accuracy of \%98.36 which is at a level of current state-of-the-art (see \url{https://benchmarks.ai/mnist} for benchmarks) on MNIST, where we see improvement compared to Kaiming initialization, especially in the first 30 epochs. Figure~\ref{fig:cnn3} shows the first 50 epochs of training processes on CIFAR-10, where we also see the improvement.
\vspace{-7pt}
\begin{figure}[h!]
\centering
    \subfloat[MNIST 30 epoches]{
    \includegraphics[width=0.42\columnwidth]{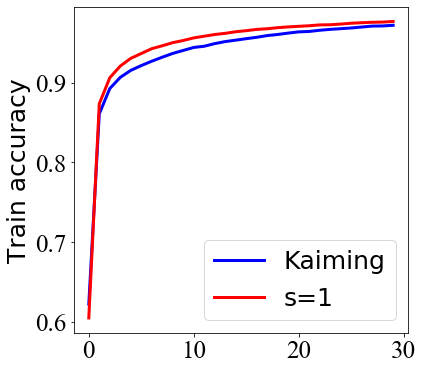}
    \label{fig:cnn1}
    }
    \hfill
    \subfloat[MNIST 50 epoches]{
    \includegraphics[width=0.42\columnwidth]{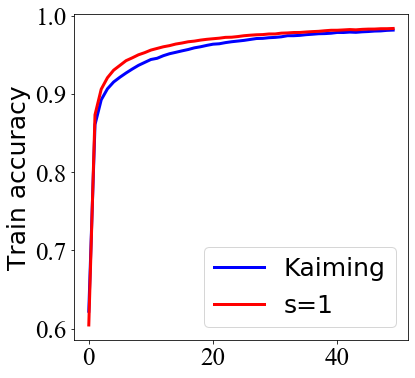}
    \label{fig:cnn2}
    }
    \hfill
    \subfloat[CIFAR-10]{
    \includegraphics[width=0.42\columnwidth]{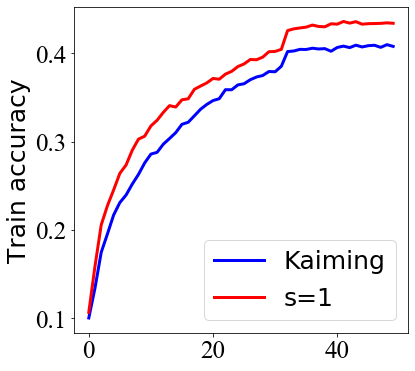}
    \label{fig:cnn3}
    }
\caption{CNN on MNIST and CIFAR-10.}
\label{fig:cnn}
\end{figure}
\vspace{-9pt}
\section{Conclusion}
\vspace{-7pt}
In this paper, we have developed a new class of initialization schemes for fully-connected neural networks with ReLU, parameteric ReLU, Leaky ReLU, Randomized Leaky ReLU or linear activations. Our schemes can preserve a fractional moment of order $s\in (0,2]$ over the layer outputs therefore generalize existing schemes which correspond to the special case $s=2$.

For all these schemes, we show that the network output admits a ﬁnite almost sure limit as the number of layers grows, and the limit is heavy-tailed in some settings. We also prove that the logarithm of the norm of the network outputs, if properly scaled, will converge to a Gaussian distribution with an explicit mean and variance we can compute. We also prove that our initialisation scheme avoids small network output values more frequently compared to traditional approaches, therefore can alleviate the dying neuron problem seen in ReLU networks that results in small network output values. We also provided numerical experiments that show that the new schemes can lead to improvement in the training process. 
\section*{Acknowledgements}
Mert
Gürbüzbalaban and Yuanhan Hu acknowledge support from the grants NSF DMS-1723085 and NSF CCF-1814888.
\bibliographystyle{apalike}
\bibliography{deep}


\appendix
\section{Notation}\label{sec-notation}
Let $\|\cdot\|$ denote the Euclidean $(L_2)$ norm. We use $x_i$ to denote the $i$-th component of a vector, and $A_{ij}$ to denote the entries of a matrix $A$. For a $d\times d$ symmetric positive semi-definite matrix $\Sigma$, the notation $\mathcal{N}(m,\Sigma)$ denotes the $d$-dimensional multi-variate normal distribution with mean $m$ and covariance matrix $\Sigma$. $I_d$ denotes the $d\times d$ identity matrix. For a real number $r\in \mathbb{R}$ and a non-negative integer $k$, we introduce the binomial coefficient 
$${r \choose k} := \frac{r  (r-1) (r-2)\cdots (r-k+1)}{k!}.$$ For $z>0$, (Euler's) Gamma function is defined as the integral $${\displaystyle \Gamma (z)=\int _{0}^{\infty }x^{z-1}e^{-x}\,dx}.$$ 
For $z>0$, the digamma function is defined as $$\psi_0(z): = \left(\frac{d}{dz}\Gamma(z)\right)/\Gamma(z)$$ and the trigamma function $$\psi_1(z):=\frac{d}{dz}\psi_0(z).$$ 
For $x,y>0$, the Beta function is defined as the integral $${\displaystyle \mathrm {B} (x,y):=\int _{0}^{1}t^{x-1}(1-t)^{y-1}\,dt}.$$ If a sequence of random variables $X_k$ converges to a random variable $X$ in distribution as $k \to \infty$, we denote this by $X_k \Rightarrow X$. The set of positive integers will be denoted by $\mathbb{Z}_+$. Let $f, g$ be real valued functions, defined on some unbounded subset of $\mathbb{R}$, and let $g(x)$ be strictly positive for all large enough values of $x$. We denote $f(x)=\mathcal{O}(g(x))$ as $x \to \infty$ if there exists a constant $M>0$ and $x_0 \in \mathbb{R}$ such that $|f(x)| \leq Mg(x)$ for all $x \geq x_0$. We denote $f(x)=o(g(x))$ as $x \to \infty$ if for all $\varepsilon >0$ there exists a constant $x_0$ such that $|f(x)| \leq \varepsilon g(x)$ for all $x\geq x_0$. 
\section{Proof of Theorem \ref{thm-moment-relu}}
\begin{proof} When the activation function is linear, $k$-th layer output $x^{(k)}$ obeys a linear recursion where the proof technique of \cite{cohen1984stability} about the moments of random Gaussian matrix products are directly applicable. Our main proof idea is to extend this proof technique to non-linear recursions obeyed by $x^{(k)}$ when ReLU activation is  used, where we exploit piecewise-linearity properties of the ReLU function. 
We first note that for $x^{(k)}\neq 0$,
\begin{equation}
    \begin{split}
        \frac{\|x^{(k+1)}\|}{\|x^{(k)}\|} =&\frac{ \|\phi_{{0}}(W^{(k+1)}x^{(k)})\|}{\|x^{(k)}\|} \\
        =& \left\| \phi_{{0}}\left(W^{(k+1)} \frac{x^{(k)}}{\|x^{(k)}\|}\right)\right\| ,
    \end{split}
    \label{eq-ratio-norm}
\end{equation} 

which we used the equality $\phi_0(cy) = c\phi_0(y)$ for any given $c>0$ and arbitrary vector $y$ (with the choice of $y = W^{(k)}x^{(k)}$ and $c= 1/\|x^{(k)}\|$). On the other hand, the entries of the $W^{(k)}$ matrix are i.i.d. Gaussians, where each row is a spherically symmetric random vector (in the sense of \cite[Ch. 4]{Fourdrinier2018}) with i.i.d. entries. From this symmetry property it follows that the distribution of $W^{(k)}z$ is independent of the choice of $z$ on the unit sphere in $\mathbb{R}^d$. Therefore, if we choose $z=e_1$, we have
$$ W^{(k)} \frac{x^{(k)}}{\|x^{(k)}\|} \sim W^{(k)} e_1 ,$$
where $e_1=[1,0,\dots,0]^T$ is the first basis vector. Therefore from \eqref{eq-ratio-norm}, we obtain
\begin{equation*}
    \frac{\|x^{(k+1)}\|}{\|x^{(k)}\|} \sim \left\| \phi_{{0}}\left(W^{(k+1)} e_1\right)\right\|,
\end{equation*}
which says that the distribution of the ratio $\frac{\|x^{(k+1)}\|}{\|x^{(k)}\|}$ is independent of $x^{(k)}$ and the history $x^{(j)}$ for $j<k$. Then, by the independence of the random variables $\frac{\|x^{(j+1)}\|}{\|x^{(j)}\|}$, we can write 

\begin{equation}
\begin{split}
    \mathbb{E} \left[\left(\frac{\|x^{(k)}\|}{\|x^{(0)}\|}\right)^s\right] =& \mathbb{E} \left[\Pi_{j=1}^{k} \frac{\| x^{(j)}\|^s}{\|x^{(j-1)}\|^s} \right]\\ 
    =&  \Pi_{j=1}^{k} \mathbb{E} \left[ \frac{\| x^{(j)}\|^s}{\|x^{(j-1)}\|^s} \right] \\
=& \Pi_{j=1}^{k} \mathbb{E} \left\| \phi_{{0}}\left(W^{(j)} e_1\right)\right\|^s  \\
=& \left(\sigma^s \mathbb{E} \left\| \phi_{{0}}\left(z\right)\right\|^s \right)^k,
\end{split}
\label{eqn-product}
\end{equation}  

where $z$ is a $d$-dimensional random vector with standard normal distribution $\mathcal{N}(0,I_{d})$.
The rest of the proof is about explicit computation of the term $\mathbb{E} \left\| \phi_0\left(z \right)\right\|^s$ which appear in the product \eqref{eqn-product} and showing that it is equal to $I_0(s,d)$ where $I_0(s,d)$ is defined by \eqref{def-sigma-relu}.  Note that 
$$ \mathbb{E} \left\| \phi_{{0}}\left(z \right)\right\|^s = \mathbb{E} \left[ \phi_{{0}}^2(z_1) + \phi_{{0}}^2(z_2) + \dots +
\phi_{{0}}^2(z_d) \right]^{s/2},
$$
where $z = (z_1, z_2, \dots, z_d)$ and $z_i$ are i.i.d. standard normal random variables. We first note that the function $\phi_0(x):\mathbb{R}^d \to \mathbb{R}$ has a piecewise linear structure on $\mathbb{R}^d$ depending on the sign of the components $x_i$ of a vector $x$. In particular, we observe that by the definition of the $\phi_0$ function,

\begin{equation}
    \begin{split}
        \|\phi_{{0}}(z)\|^2 =& \phi_{{0}}^2(z_1) + \phi_{{0}}^2(z_2) + \dots +
\phi_{{0}}^2(z_d) \\
=&\sum_{i: z_i>0}(z_i)^2 ,
    \end{split}
    \label{eq-target-dist}
\end{equation}

which depends on the orthant that the vector $z$ resides in $\mathbb{R}^{d}$. In particular, there are  $2^d$ (open) orthants in dimension $d$, where each orthant is defined by a system of inequalities:
$$\varepsilon_1 x_1 > 0, \quad       \varepsilon_2 x_2 > 0, \quad \varepsilon_3 x_3 > 0, \quad   \dots  \varepsilon_n x_n > 0,$$
where each $\varepsilon_i$ is $1$ or $-1$.  Therefore, we can identify each orthant from an element of the set $\{+,-\}^d$. For example, the non-negative (open) orthant corresponds to $\{+,+,\dots,+\}$ whereas the non-positive (open) orthant corresponds to  $\{-,-,\dots,-\}$. On every quadrant that corresponds to $n$ plus signs and $d-n$ minus signs (with an arbitrary order of the signs), the distribution of \eqref{eq-target-dist} is the same as the distribution of
 \beq Y_{n}: = \chi^2(n),
 \label{def-Yn}
 \eeq
where 
$\chi^2(n)$ denotes a chi-squared distribution with $n$ degrees of freedom as long as $n\geq1$. If we choose a random quadrant; with probability 
\beq p_d(n) = {d \choose n} \frac{1}{2^d},
\label{def-p-n-d}
\eeq
we will be in such a quadrant. \footnote{Note that 
$p_d(n) = \mathbb{P}(B_d = n)$ where $B_d \sim \mbox{Bi}(d,\frac{1}{2})$ is a random variable with a Binomial distribution, where the parameter $d$ represents the total number of Bernouilli trials and the parameter $\frac{1}{2}$ is the success probability for each trial.}  Therefore, we can interpret $\|\phi_0(z)\|^2$ as a mixture of chi-square distributions with weights from the Binomial distribution. It follows from \eqref{eq-target-dist}--\eqref{def-p-n-d} that we can write
$$ \mathbb{E} \left\| \phi_0\left(z \right)\right\|^s= \sum_{n=1}^d p_d(n) \mathbb{E}(Y_{n}^{s/2}).$$
The moments of $Y_n$ are explicitly known, and we have
$$\mathbb{E}(Y_n^s) = 2^s \frac{\Gamma(n/2 + s)}{\Gamma(n/2)} \quad \mbox{for} \quad s\geq 0, $$
(see \cite[Sec.	 8]{walck1996hand}) for any $s\geq 0$ where $\Gamma(\cdot)$ denotes the Gamma function. Therefore, we obtain
$$ \mathbb{E} \left\| \phi_{{0}}\left(z \right)\right\|^s=I_0(s,d) = \sum_{n=0}^d p_d(n) 2^{s/2} \frac{\Gamma(n/2 + s/2)}{\Gamma(n/2)}.$$
We conclude from \eqref{eqn-product} that \eqref{def-sigma-relu} holds. This also implies directly that part $(ii)$ and $(iii)$ are true. Finally, for any $p>s$, we have $\bar{\sigma}_0(s,d) > \bar{\sigma}_0(p,d)$ by Corollary \ref{coro-crit-leaky}. Therefore, if $\sigma = \bar{\sigma}_0(s,d)$, then  $\sigma> \bar{\sigma}_0(p,d)$ and by part $(iii)$, we obtain $\mathbb{E} \|x^{(k)}\|^{p} \to \infty$ exponentially fast in $k$. This completes the proof.
\end{proof}
\begin{remark}\label{coro-crit-relu-appendix} In the setting of Theorem \ref{thm-moment-relu}, in the special case when $s=2$, we obtain $I_0(2,d) = d/2$ and 
we obtain $ I_0(2,d) = 2\sum_{n=1}^d {d \choose n} \frac{1}{2^d}  \frac{\Gamma(n/2 + 1)}{\Gamma(n/2)} = \sum_{n=0}^d {d \choose n} \frac{1}{2^d}  n = \frac{d}{2} 
$ where we used the identity $\Gamma(x+1) = x\Gamma(x)$ for $x>0$ and the last equality can be obtained from the properties of the Binomial distributions, see e.g. \cite[Section 5.2]{walck1996hand}.
Therefore, from part $(i)$ of Theorem \ref{thm-moment-relu}, 
$\bar{\sigma}_0(2,d) = 1/ \sqrt{I_0(2,d)} =\sqrt{2}/\sqrt{d}$. In particular, the choice of $\bar{\sigma}_0(2,d)$ corresponds to Kaiming initialization. Theorem \ref{thm-moment-relu} is more general in the sense that it is applicable to any moment $s>0$.
\end{remark}
\section{Proof of Corollary \ref{coro-crit-sigma-asymp}}
\begin{proof} This result follows from analyzing the asymptotics of $I_0(s,d)$ for large $d$. It is known that for any real $\alpha>0$ and $z>0$, we can write the series expansion
\begin{equation*}\frac{\Gamma(z+\alpha)}{\Gamma(z)} = z^\alpha S(z,\alpha),
\end{equation*}
with
\begin{equation}
    \begin{split}
        S(z,\alpha) :=& \sum_{m=0}^\infty  A_m(\alpha)(\frac{1}{z})^m \\
        =&  1 + \frac{\alpha(\alpha-1)}{2z} + \mathcal{O}(\frac{1}{z^2}),
    \end{split}
     \label{eq-def-sz-alpha}
\end{equation}

where $A_m(\alpha)$ are coefficients of the expansion that admits an explicit representation (see \citep{tricomi1951asymptotic}). 
Therefore, choosing $z=n/2$ and $\alpha=s/2$,
\begin{equation} \frac{\Gamma(n/2+s/2)}{\Gamma(n/2)} = (\frac{n}{2})^{s/2} S(n/2,s/2), 
\label{eq-helper-gamma}
\end{equation}
so that
\begin{equation*}
    I_0(s,d) =\sum_{n=1}^d p_d(n) n^{s/2}S(n/2,s/2).
\end{equation*} 
Since the $\Gamma$ function is log-convex \citep{merkle1996logarithmic}, we also have
\begin{eqnarray*} \Gamma(\frac{n}{2} + \frac{s}{2}) &=& \Gamma\left((1-\frac{s}{2}) \frac{n}{2}  +\frac{s}{2}(\frac{n}{2} + 1)\right) \\
&\leq& \left(\Gamma(\frac{n}{2})\right)^{1-\frac{s}{2}} \left(\Gamma(\frac{n}{2}+1)\right)^{\frac{s}{2}} \\
&=& \Gamma(\frac{n}{2}) (\frac{n}{2})^{s/2},
\end{eqnarray*}
where we used the identity $\Gamma(z+1) = z\Gamma(z)$ for $z>0$. Therefore, we see from \eqref{eq-helper-gamma} that 
\beq 0 \leq S(n/2,s/2) \leq 1,
\label{eq-bdd-series}
\eeq
for every $s>0$ and $n>0$. 
Note that
\beq \frac{(\frac{d}{2})^{s/2}}{I_0(s,d)}
&=& \frac{1}{\mathbb{E}(F_{d}(B_d))},
\label{eq-inverse-target}
\eeq
where $B_d$ is a Binomial random variable, i.e.
  \beq \mathbb{P}(B_d = n) = 
  {d \choose n}\frac{1}{2^d} \quad \mbox{for} \quad n=  0,1,\dots,d,         
	\label{def-Binom-Bd}  
  \eeq
and
$$ F_{d}(X) :=  \begin{cases}
 2^{s/2} \frac{X^{s/2}}{d^{s/2}}S(X/2,s/2) & \mbox{if} \quad X>0, \\
 0 & \mbox{if} \quad X = 0,
 \end{cases} 
$$
satisfying for all $X>0$
\beq 
F_{d}(X) =  2^{s/2} \frac{X^{s/2}}{d^{s/2}} \left(1 + \frac{\frac{s}{2}(\frac{s}{2}-1)}{X} + \mathcal{O}(\frac{1}{X^2})\right)
\label{def-func-Fd}
\eeq 
where we used \eqref{eq-def-sz-alpha}. 
By the normal approximation of the binomial distribution, we also have
\begin{equation} Z_d := \frac{B_d - \mathbb{E}(B_d)}{\sqrt{\mbox{var}{B_d}}} = \frac{B_d - \frac{d}{2}}{\sqrt{d}/{2}}
\Rightarrow
\mathcal{N}(0,1)
\label{weak-conv}
\end{equation}
in distribution. We also have

\begin{eqnarray*} 
& &\mathbb{E}(F_{d}(B_d))\\
&=& \mathbb{E}\left(F_{d}(\frac{d}{2} + \frac{\sqrt{d}}{2}Z_d)\right) \\
&=&  2^{s/2} \mathbb{E} \left[\frac{(\frac{d}{2} + \frac{\sqrt{d}}{2}Z_d)^{s/2}}{d^{s/2}}S(\frac{d}{{4}} + \frac{\sqrt{d}}{{4}}Z_d,s/2)\right] \\
&=&  \mathbb{E} \left[(1 + \frac{1}{\sqrt{d}}Z_d)^{s/2}S(\frac{d}{{4}} + \frac{\sqrt{d}}{{4}}Z_d,s/2)\right].
\end{eqnarray*}
Using the Binomial expansion formula,
$$(1+x)^{s/2} = \sum_{k=0}^\infty {s/2 \choose k} x^k \quad \mbox{for} \quad |x| < 1,$$
for $Z_d < \sqrt{d}$, we can write 

\begin{small}
\begin{equation} 
\begin{split}
&(1 + \frac{1}{\sqrt{d}}Z_d)^{s/2}S(\frac{d}{4} + \frac{\sqrt{d}}{4}Z_d,s/2)\\ 
=& \left[\sum_{k=0}^\infty {s/2 \choose k} \frac{1}{(\sqrt{d})^k}Z_d^k\right]\! \left[\sum_{m=0}^M A_m(s/2)\left(\frac{2}{\frac{d}{2} + \frac{\sqrt{d}}{2}Z_d}\right)^m \right]\\
=& \left[\sum_{k=0}^\infty {s/2 \choose k} \frac{1}{(\sqrt{d})^k}Z_d^k\right]\\
& \left[ \sum_{m=0}^M A_m(s/2)\frac{4^m}{d^m}\left(
\sum_{\ell=0}^\infty \frac{{(-1)^l}}{\sqrt{d}^{\ell}} Z_d^\ell \right)^m\right] \\
=& \left(1 + {s/2 \choose 1}\frac{1}{\sqrt{d}}Z_d + {s/2 \choose 2}\frac{1}{d}Z_d^2 + \dots\right)\\
& \cdot\left(1 +{s/2 \choose 2}\frac{4}{d}\left(\sum_{\ell=0}^\infty \frac{{(-1)^l}}{\sqrt{d}^{\ell}} Z_d^\ell \right) + \dots\right)\\
=& 1+{s/2 \choose 2}\frac{4}{d}+\left[ {s/2 \choose 1}\frac{1}{\sqrt{d}}+{s/2 \choose 2}\frac{4}{d\sqrt{d}}\right] Z_d\\
& +\left[ {s/2 \choose 2}\frac{1}{d}+{s/2 \choose 2}\frac{4}{d^2}(\frac{s^2}{8}-\frac{3s}{4}+1)\right] Z_d^2 \\
&+\dots\,
\end{split}
\label{eq-prod-expansion}
\end{equation}
\end{small}
where we used the identity $A_1(s/2) = \frac{\frac{s}{2}(\frac{s}{2}-1)}{2}$.
Since $\mathbb{P}(Z_d\geq \sqrt{d}) = \mathcal{O}(e^{-d/2})$ and the function $S$ is non-negative and bounded by $1$ according to \eqref{eq-bdd-series}, we have
\begin{equation}
\begin{split}
&\mathbb{E}\left[(1 + \frac{1}{\sqrt{d}}Z_d)^{s/2}S_M(\frac{d}{2} + \frac{\sqrt{d}}{2}Z_d,s)\right]\\
=& \mathcal{O}(e^{-\frac{d}{2}}) + \mathbb{E}\left[
1+  {s/2 \choose 1}\frac{1}{\sqrt{d}}Z_d + {s/2 \choose 2}\frac{5}{d}Z_d^2 + \dots\right] \\
=& \mathcal{O}(e^{-d/2}) +1+{s/2 \choose 2}\frac{4}{d}+{s/2 \choose 2}\frac{1}{d}\\
&+{s/2 \choose 2}\frac{4}{d^2}(\frac{s^2}{8}-\frac{3s}{4}+1) + o(\frac{1}{d^2})\\
=& 1 + {s/2 \choose 2}\frac{5}{d} + o(\frac{1}{d}),
\end{split}
\label{eq-exp-H}
\end{equation}
where we used the fact that $\mathbb{E}(Z_d^k)\to \mathbb{E}(Z^k)$ as $d\to \infty$ for any fixed $k$ implied by \eqref{weak-conv} where $Z$ is a standard-normal variable in $\mathbb{R}$ which satisfies $\mathbb{E}(Z)=0$ and $\mathbb{E}(Z^2)=1$. Then, it follows from \eqref{eq-inverse-target} that 
\begin{equation}
    \begin{split}
        \frac{(\frac{d}{2})^{s/2}}{{I}_0(s,d)}
=& 1 - {s/2 \choose 2}\frac{5}{d} + o(\frac{1}{d}) \\
=& 1 + \frac{5s(2-s)}{8d} + o(\frac{1}{d}),
    \end{split}
    \label{eq-approximate-relu}
\end{equation}  

which implies 
\begin{equation*}  
\begin{split}
    \bar{\sigma}_{0}(s,d) =&\frac{1}{\sqrt[s]{{I}_0(s,d)}}\\
=& \frac{\sqrt{2}}{\sqrt{d}} \left(1 + \frac{5s(2-s)}{8d}  + o(\frac{1}{d})\right)^{1/s} \\
=& \frac{\sqrt{2}}{\sqrt{d}} \left( 1+\frac{5(2-s)}{8d} + 
o(\frac{1}{d})\right).
\end{split}
\end{equation*} 
Similarly, taking square of both sides,
\begin{equation*}
     \bar{\sigma}_{0}^2(s,d) =\frac{1}{\sqrt[s/2]{{I}_0(s,d)}}
= \frac{2}{d} \left(1 + \frac{5(2-s)}{4d}  + o(\frac{1}{d})\right)
\end{equation*} 
which completes the proof.

\end{proof}

\section{Probability of zero network output for ReLU activations}
When $X$ is a Gaussian random variable with distribution $\mathcal{N}(0,\sigma^2 I_d)$, we have 
$\mathbb{P}(\phi_0(X)= 0) = \mathbb{P}(\max(X,0) = 0)= \prod_{i=1}^n \mathbb{P}(X_i \leq 0) = \frac{1}{2^d}$ due to the symmetry of the $i$-th component $X_i$ with respect to the origin, independent of the choice of $\sigma>0$. The output of the $k$-th layer is actually not Gaussian, nevertheless exploiting its symmetry properties and piecewise linearity of the ReLU activations, the probability that the output $x^{(k)}$ will be zero can be computed with a similar calculation as follows and this probability is independent of the choice of $\sigma$. 
\begin{lemma}\label{lem-zero-proba} Under Gaussian initialization $\textbf{(A1)}$--$\textbf{(A2)}$ with ReLU activation, i.e. when $a=0$, for any $\sigma>0$ given, $ \mathbb{P}(x^{(k)}= 0) = 1- (1-\frac{1}{2^d})^k 
$.
\end{lemma}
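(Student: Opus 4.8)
The plan is to track the event $\{x^{(k)}=0\}$ recursively across layers and exploit the fact, established in the preamble to the Lemma, that a single ReLU layer applied to a mean-zero spherically symmetric Gaussian input produces a zero output with probability exactly $1/2^d$, independently of $\sigma$. The key structural observation is that the output $x^{(k)}$ becomes identically zero as soon as the pre-activation $W^{(k)}x^{(k-1)}+b^{(k)}=W^{(k)}x^{(k-1)}$ lands entirely in the non-positive orthant (so that every component is clipped to zero by ReLU), and that once $x^{(j)}=0$ for some $j$, all subsequent outputs remain zero since $\phi_0(0)=0$. Thus the event $\{x^{(k)}=0\}$ is the union of the disjoint events ``the network first dies at layer $j$'' for $j=1,\dots,k$.

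First I would condition on $x^{(k-1)}\neq 0$ and argue, exactly as in the proof of Theorem~\ref{thm-moment-relu}, that by the spherical symmetry of the i.i.d.\ Gaussian rows of $W^{(k)}$, the distribution of $W^{(k)}x^{(k-1)}/\|x^{(k-1)}\|$ equals that of $W^{(k)}e_1 \sim z$ where $z\sim\mathcal{N}(0,I_d)$, and that the event $x^{(k)}=0$ is precisely the event that $z$ lies in the non-positive orthant, which has probability $\prod_{i=1}^d \mathbb{P}(z_i\leq 0)=1/2^d$, independent of $\sigma$. This gives the one-step transition
\begin{equation*}
\mathbb{P}\big(x^{(k)}=0 \;\big|\; x^{(k-1)}\neq 0\big) = \frac{1}{2^d},
\qquad
\mathbb{P}\big(x^{(k)}=0 \;\big|\; x^{(k-1)}= 0\big) = 1.
\end{equation*}

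Next I would set $q_k := \mathbb{P}(x^{(k)}\neq 0)$ and read off the recursion from the law of total probability: since a nonzero layer stays nonzero with probability $1-1/2^d$ and a zero layer can never revive, $q_k = (1-2^{-d})\,q_{k-1}$. With the base case $q_0=1$ (as $x^{(0)}\neq 0$), this solves to $q_k=(1-2^{-d})^k$, whence $\mathbb{P}(x^{(k)}=0)=1-q_k=1-(1-2^{-d})^k$, which is the claimed formula. Equivalently, one could note that the ``survival'' increments are independent across layers by the same symmetry/independence argument used to factor the product in \eqref{eqn-product}, so the survival probability is literally a product of $k$ identical factors $(1-2^{-d})$.

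The main obstacle, and the only point requiring genuine care, is justifying that the per-layer death probability is exactly $1/2^d$ conditionally on the past \emph{uniformly in the realization of} $x^{(k-1)}$: one must invoke the spherical symmetry of $W^{(k)}$ to conclude that the conditional law of the direction $x^{(k-1)}/\|x^{(k-1)}\|$ is irrelevant, so that the conditional death probability does not depend on $x^{(k-1)}$ (nor on $\sigma$), which is what legitimizes collapsing the conditioning into the clean scalar recursion for $q_k$. Everything else is a routine telescoping of a geometric recursion.
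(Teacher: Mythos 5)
Your proof is correct and follows essentially the same route as the paper: a one-step recursion with per-layer death probability exactly $\frac{1}{2^d}$ conditional on survival, telescoped into the geometric formula $\mathbb{P}(x^{(k)}\neq 0)=(1-2^{-d})^k$. The only cosmetic difference is that you invoke the spherical-symmetry reduction to $e_1$ to justify the conditional death probability, whereas the paper reaches the same conclusion more directly by noting that each pre-activation component $\sum_j W^{(k)}_{ij}x^{(k-1)}_j$ is a mean-zero Gaussian for any fixed nonzero $x^{(k-1)}$, with independence across rows; both arguments are valid.
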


\begin{proof}
Consider the first layer
\begin{equation*}
    x^{(1)}=[x_1^{(1)},x_2^{(1)}, \dots, x_d^{(1)} ]^T,
\end{equation*}
where $x_i^{(1)}=\phi(\sum_{j=1}^d W^{(1)}_{ij}x^{(0)}_j)$.
According to the assumption, $W^{(1)}_{ij}$ is normally distributed with a zero mean. Then, $\sum_{j=1}^d W^{(1)}_{ij}x^{(0)}_j$ is also normally distributed with zero mean $\mathbb{P}(\sum_{j=1}^d W^{(1)}_{ij}x^{(0)}_j \geq 0)=\frac{1}{2}$. Therefore, $\mathbb{P}(x_i^{(1)}\neq 0)=1-\frac{1}{2}=\frac{1}{2}$ and $\mathbb{P}(x^{(1)}\neq 0)=\frac{1}{2^d}$. If consider the $k$-th layer, we can get similarly
\begin{equation*}
    \mathbb{P}(x^{(k)}\neq 0 | x^{(k-1)}\neq 0)=1-\frac{1}{2^d}.
\end{equation*}
Since 
\begin{equation*}
\begin{split}
    \mathbb{P}(x^{(k)}\neq 0)=&\mathbb{P}(x^{(k)}\neq 0 | x^{(k-1)}\neq 0) \\
    &\mathbb{P}(x^{(k-1)}\neq 0 | x^{(k-2)}\neq 0) \dots \mathbb{P}(x_i^{(1)}\neq 0)\\
    =&(1-\frac{1}{2^d})^k,
    \end{split}
\end{equation*}
we can obtain the result
\begin{equation*}
    \mathbb{P}(x^{(k)}= 0)=1-\mathbb{P}(x^{(k)}\neq 0)=1-(1-\frac{1}{2^d})^k.
\end{equation*}
\end{proof}

\section{Proof of Theorem \ref{thm-relu-log-out}}
\begin{proof} By the same argument given in the proof of Theorem \ref{thm-moment-relu}, for any $k\geq 0$, if $x^{(k)}\neq 0$, we have 
\begin{equation}
    \frac{\|x^{(k+1)}\|}{\|x^{(k)}\|} \sim \left\| \phi_{{0}}\left(z\right)\right\|,
    \label{eq-dist-post-act}
\end{equation} 
where $z \sim \mathcal{N}(0,I_d)$ is a $d$-dimensional standard normal random vector, and in particular $\frac{\|x^{(k+1)}\|}{\|x^{(k)}\|}$ is independent from the choice of $x^{(k)}$ and the past history $x^{(j)}$ for $j<k$. Let $A_k$ be the event that $x^{(k)}\neq 0$. We note that 
\beq A_k = \cap_{j=0}^k A_j,
\label{eq-set-intersect}
\eeq
that is $x^{(k)}\neq 0$ if and only if $x^{(j)}\neq 0$
for $j{\leq} k$. This fact follows simply from the piecewise linear structure of the ReLU activation function. Conditioning on the event $A_k$, we can write
\begin{small}
\begin{equation} \frac{1}{k} \left( \log \frac{\|x^{(k)}\|}{\|x^{(0)}\|} | A_k \right) = \frac{1}{k} \sum_{j=0}^{k-1}  \left(\frac{1}{2} \log \frac{\|x^{(j+1)}\|^2}{\|x^{(j)}\|^2} | A_j \right),
\label{eq-average-iid}
\end{equation}
\end{small}
where\footnote{Here, the equality is to be understood in the sense of distributions, i.e. the left-hand side and the right-hand side have the same distribution.} the logarithm is well-defined as the ratio $\frac{\|x^{(j+1)}\|}{\|x^{(j)}\|}>0$ conditional on $A_j$. Due to \eqref{eq-dist-post-act} and \eqref{eq-set-intersect}, the right-hand side of \eqref{eq-average-iid} can be viewed as an average of i.i.d. random variables with mean
\begin{equation*}
\begin{split}    
 m_1 =& \frac{1}{2}\mathbb{E}\log \left( \frac{\|x^{(1)}\|^2}{\|x^{(0)}\|^2} \big| A_0 \right)\\
 =& \frac{1}{2}\mathbb{E}\log \left(\left\| \phi_{{0}}\left(\sigma z\right)\right\|^2 \big| z\not\in {\mathbb{R}^d_{-}} \right),
 \end{split}
\end{equation*}
where ${\mathbb{R}^d_{-}} = \{ x\in\mathbb{R}^d | x_i \leq 0 \mbox{ for } i=1,2,\dots,d\}$ denotes the (closed) non-positive orthant of vectors and variance
\begin{equation} 
\begin{split}
m_2 =&  \mbox{var} \left( \frac{1}{2}
\log \left(\left\| \phi_{{0}}\left(\sigma z\right)\right\|^2 \big| z\not\in {\mathbb{R}^d_{-}} \right)
\right) \\
=& \frac{1}{4} \mbox{var}  
\left(
\log \left(\left\| \phi_{{0}}\left( z\right)\right\|^2 \big| z\not\in {\mathbb{R}^d_{-}} \right)
\right).
\end{split}
\label{def-m-2}
\end{equation}
 
In the rest of the proof, we compute $m_1$ and $m_2$ explicitly showing them that they are finite; then by the central limit theorem and the law of large numbers, the theorem will hold with
\beq
\mu_0(\sigma) = m_1 \quad \mbox{and} \quad s_0^2 = m_2.  
\label{eq-mu-s}
\eeq 

We note that
\beq m_1 
&=& \log(\sigma) +\mathbb{E}\left(\log \left\| \phi_{{0}}\left(z\right)\right\| \big| z\not\in {\mathbb{R}^d_{-}}\right) \\
&=& \log(\sigma) +\frac{1}{2}\mathbb{E}\left( \log  \|\phi_{{0}}(z)\|^2 \big| z\not\in {\mathbb{R}^d_{-}} \right).
\label{eq-mean-log-moment}
\eeq
By \eqref{eq-target-dist} and following the same proof technique in Theorem \ref{thm-moment-relu}, we can show that given that $z\not\in {\mathbb{R}^d_{-}}$,
\beq \left( \|\phi_{{0}}(z)\|^2 \big| z\not\in {\mathbb{R}^d_{-}} \right) \sim Y_n 
\label{eq-mixture}
\eeq
with probability
\begin{equation*}
    \pi_d(n) = \frac{p_d(n)}{\sum_{n=1}^d p_d(n)} = {d \choose n}\frac{1}{2^d - 1}, 
\end{equation*}
for $n\geq 1$ where $Y_n$ is a chi-square distribution with $n$ degrees of freedom 
where $p_d(n)$ is given by \eqref{def-p-n-d}. We have also
\begin{equation*}
\begin{split}
    m_1 =& \log(\sigma) + \mathbb{E}\log\left(\|\phi_{{0}}(z)\|^2 \big| z\not\in {\mathbb{R}^d_{-}} \right)\\
    =& \log(\sigma) + \sum_{n=1}^d \pi_d(n) \left[  \mathbb{E} \log[Y_n]\right].
    \end{split}
\end{equation*}

Using the mixture representation \eqref{eq-mixture} and according to Lemma \ref{lemma-var-mix}, we have
\begin{equation*}
\begin{split}
&\mbox{var} (\log\|\phi_{{0}}(z)\|^2 \big| z\not\in {\mathbb{R}^d_{-}})\\
=&  \sum_{n=0}^d \pi_d(n) \mbox{var}(\log(Y_n)) + \sum_{n=0}^d \pi_d(n) (\mathbb{E}\log(Y_n))^2 \\
&- \left( \sum_{n=0}^d \pi_d(n) \mathbb{E}\log(Y_n)\right)^2.   
\end{split}
\end{equation*}
Logarithmic moments of chi-square distributions are explicitly available as 
$$ \mathbb{E}\log[Y_n] =
\log(2) + \Psi\left(\frac{n}{2}\right)
,$$
and 
$$\mbox{var}(\log(Y_n)) = \psi_1(n/2),$$
where $\psi_1(z)$ is the tri-gamma function (see \cite[Lemma 2.3]{cohen1984stability}). Therefore, from \eqref{eq-mean-log-moment}, we obtain
\begin{equation*}
    m_1 = \log(\sigma) + \frac{1}{2}\sum_{n=1}^d \pi_d(n)
\left[
\log(2) + \Psi\left(\frac{n}{2}\right)
\right]. 
\end{equation*} 
Then, from \eqref{def-m-2} we get, 
\begin{equation*}  
\begin{split}
m_2 =& \frac{1}{4}\mbox{var} (\log\|\phi_{{0}}(z)\|^2 \big| z\not\in {\mathbb{R}^d_{-}}) \\
= &\frac{1}{4}\left(\sum_{n=1}^d \pi_d(n) \psi_1(n/2)\right) \\
&+ \frac{1}{4}\left(\sum_{n=1}^d \pi_d(n) \left[
\log(2) + \Psi\left(\frac{n}{2}\right)
\right]^2 \right)\\
&- \frac{1}{4}\left( \sum_{n=1}^d \pi_d(n) \left[
\log(2) + \Psi\left(\frac{n}{2}\right)
\right]\right)^2.
\end{split}
\end{equation*}
We conclude from \eqref{eq-mu-s}. 
\end{proof}

\section{Proof of Theorem \ref{thm-leaky-relu}}
\begin{proof} 

The approach is similar to the proof of Theorem \ref{thm-moment-relu}.
We first note that for $x^{(k)}\neq 0$,
\begin{equation*}
\begin{split}
    \frac{\|x^{(k+1)}\|}{\|x^{(k)}\|} =&\frac{ \|\phi_a(W^{(k+1)}x^{(k)})\|}{\|x^{(k)}\|}\\
    =& \left\| \phi_a\left(W^{(k+1)} \frac{x^{(k)}}{\|x^{(k)}\|}\right)\right\| ,
    \end{split}
\end{equation*} 
which we used the equality $\phi_{{a}}(cy) = c\phi_{{a}}(y)$ for any given $c>0$ and arbitrary vector $y$ (with the choice of $y = W^{(k)}x^{(k)}$ and $c= 1/\|x^{(k)}\|$). By a similar reasoning to \eqref{eq-ratio-norm}--\eqref{eqn-product}, we obtain
\beq
\mathbb{E} \left[\left(\frac{\|x^{(k)}\|}{\|x^{(0)}\|}\right)^s\right] =
(\sigma^s \mathbb{E} \left\| \phi_{{a}}\left(z \right)\right\|^s)^k,
\label{eq-moment-s-prod}
\eeq
where $z$ is a $d$-dimensional random vector with standard normal distribution $\mathcal{N}(0,I_{d})$. In the rest of the proof, we compute the term $\mathbb{E} \left\| \phi_{{a}}\left(z\right)\right\|^s$ explicitly and establish that it is equal to $I_a(s,d)$ where $I_a(s,d)$ is given by \eqref{def-iasd-theorem}. 
Consider
\begin{equation*}
    \mathbb{E} \left\| \phi_{{a}}\left(z \right)\right\|^s = \mathbb{E} \left[ \phi_{{a}}^2(z_1) + \phi_{{a}}^2(z_2) + \dots +
\phi_{{a}}^2(z_d) \right]^{s/2},
\label{eq-Iads}
\end{equation*} 
where $z = (z_1, z_2, \dots, z_d)$ is a $d$-dimensional standard normal random vector. We first note that the function $\phi_{a}(x):\mathbb{R}^d \to \mathbb{R}$ has a piecewise linear structure on $\mathbb{R}^d$ depending on the sign of the components $x_i$ of a vector $x$. In particular, we observe that by the definition of the $\phi_{a}$ function,

\begin{equation} 
\begin{split}
\left\| \phi_{{a}}\left(z\right)\right\|^2 =& \phi_{{a}}^2(z_1) + \phi_{{a}}^2(z_2) + \dots +
\phi_{{a}}^2(z_d)\\
=& \sum_{i: z_i>0}(z_i)^2 + a^2 \sum_{i: z_i<0}(z_i)^2  ,
\end{split}
\label{eq-target-dist-a}
\end{equation}

which depends on the orthant that the vector $z$ resides in $\mathbb{R}^{d}$. In particular, there are  $2^d$ (open) orthants in dimension $d$, where each orthant is defined by a system of inequalities:
$$\varepsilon_1 x_1 > 0, \quad       \varepsilon_2 x_2 > 0, \quad \varepsilon_3 x_3 > 0, \quad   \dots  \varepsilon_n x_n > 0,$$
where each $\varepsilon_i$ is $1$ or $-1$.  Therefore, we can identify each orthant from an element of the set $\{+,-\}^d$. For example, the non-negative (open) orthant corresponds to $\{+,+,\dots,+\}$ whereas the non-positive (open) orthant corresponds to  $\{-,-,\dots,-\}$. On every quadrant that corresponds to $n$ plus signs and $d-n$ minus signs (with arbitrary order of the signs), the distribution of \eqref{eq-target-dist-a} is the same as the distribution of
 \beq  X_{n}: = Y_n + Z_n,
 \label{def-Xn}
 \eeq
where 
 $$ Y_n = \chi^2(n) \quad \mbox{and} \quad Z_n = a^2 \chi^2(d-n).$$
$\chi^2(v)$ denotes a chi-squared distribution with v degrees of freedom. In this representation, $Y_n$ and $Z_n$ are independent as they are related to i.i.d. entries of the $z$ vector. If we choose a random quadrant; with probability 
\beq\mathbb{P}(B_d = n) = p_d(n) = {d \choose n} \frac{1}{2^d}
\label{eq-binom-proba}
\eeq we will be in such a quadrant where $B_d$ is a Binomial random variable defined in \eqref{def-Binom-Bd}. Therefore, we can write
\begin{equation}
     \mathbb{E} \left\| \phi_{{a}}\left(z\right)\right\|^s = \sum_{n=0}^d p_d(n) \mathbb{E}(X_{n}^{s/2}).
\label{eq-rds}
\end{equation} 

 In the special case $s=2$, we have
\begin{eqnarray*}
\mathbb{E} \left\| \phi_{{a}}\left(z\right)\right\|^2 &=& \sum_{n=0}^d p_d(n) \mathbb{E}(X_n)\\
&=& \sum_{n=0}^d p_d(n) (\mathbb{E}(Y_n) +\mathbb{E}(Z_n)) \\
&=& \sum_{n=0}^d p_d(n) (n + a^2(d-n)) \\
&=& (1+a^2) \frac{d}{2}\\
&=& I_a(2,d),
\end{eqnarray*}
where we used $\mathbb{E}(B_d) = \sum_{n=0}^d p_d(n) n = \frac{d}{2}$ and \eqref{eq-moment-s-prod} implies directly that \eqref{eq-leaky-relu-moments} holds for the $s=2$ case. Next, we consider the case $s<2$ where we compute $\mathbb{E} \left\| \phi_{{a}}\left(z\right)\right\|^s$ through moment generating function techniques. We will show that it is equal to $I_a(s,d)$ defined by \eqref{def-iasd-theorem}.

Let $M_{X}(t) = \mathbb{E}(e^{tX})$ denote the moment generating function (MGF) of a random variable $X$. If we consider arbitrary moments $\alpha>0$ (where $\alpha$ is not necessarily a positive integer) of a non-negative random variable $X$; we have 
\beq \mathbb{E} \left[X^\alpha\right] = D^\alpha M_X(0),
\label{eq-frac-deriv-mgf}
\eeq
where $D^\alpha$ denotes the fractional derivative of order $\alpha$ in the Riemann-Louiville sense \citep{cressie1986moment}.
\footnote{In the special case when $\alpha$ is a positive integer, the fractional derivative reduces to the ordinary derivative and we obtain
$
    \mathbb{E} \left[X^\alpha\right]  = D^\alpha M_x(0) = \frac{d^\alpha M_X(t)}{dt^\alpha}|_{t=0}.$}
It is well-known that
$$ M_{Y_n}(t) = \frac{1}{(1-2t)^{{n/2}}}, \quad M_{Z_n}(t)= \frac{1}{(1-2a^2 t)^{{(d-n)/2}}},$$
(see e.g. \citep{bulmer1979principles}). By independence of $Y_n$ and $Z_n$, we have also
\begin{equation} 
\begin{split}
M_{X_n}(t) =& M_{Y_n}(t) M_{Z_n}(t) \\
             =&\frac{1}{(1-2t)^{n/2}} \cdot \frac{1}{(1-2a^2 t)^{(d-n)/2}}.
\end{split}
\label{eq-deriv-mgf}
\end{equation}
Hence,
\begin{equation}
\begin{split}
    \frac{d}{dt} M_{X_n}(t) =& \frac{n}{(1-2t)^{\frac{n}{2}+1}} \cdot \frac{1}{(1-2a^2 t)^{\frac{d-n}{2}}}\\
    &+ \frac{1}{(1-2t)^\frac{n}{2}} \cdot \frac{a^2(d-n)}{(1-2a^2 t)^{\frac{d-n}{2}+1}}, 
    \end{split}
    \label{eq-mgf-derivative}
\end{equation}
and
 by \cite[eqn. (7)]{cressie1986moment}, for $\alpha\in (0,1)$, we have also 
\begin{equation} 
\begin{split}
D^\alpha M_{X_n}(0) =\frac{1}{\Gamma(1-\alpha)}\int_{-\infty}^0 (-z)^{-\alpha} \frac{d M_{X_n}(z)}{dz} dz \\ 
= \frac{1}{\Gamma(1-\alpha)}\int_{0}^\infty (z)^{-\alpha} \frac{d M_{X_n}(-z)}{dz} dz \label{eq-int-target}.
\end{split}
\end{equation}

Evaluating this integral requires computing integrals of the form
\begin{equation*}
    J_{m,\ell}(\alpha) = \int_0^\infty z^{-\alpha}\frac{1}{(1+2z)^{m/2}} \cdot \frac{1}{(1+2a^2z)^{\frac{\ell}{2}}} dz\,,
\end{equation*} 

for integer values of $m$ and $\ell$ satisfying $m+\ell = d+2$. If we substitute $u = 1-  \frac{1}{2z+1}$, then 
$dz = \frac{1}{2(1-u)^2}du$ which leads to
\begin{small}
\begin{equation}
\begin{split}
    &J_{m,l}(\alpha)=\\
    & \frac{1}{2^{-\alpha+1}} \int_{0}^1 u^{-\alpha} (1-u)^{\frac{m+\ell}{2}+\alpha-2} \big(1-(1-a^2)u\big)^{-\frac{\ell}{2}} du. \label{eq-jml}
\end{split}
\end{equation}
\end{small}
Using the binomial series
\beq  (1+x)^{-n} = \sum_{k=0}^\infty (-1)^k {n+k-1 \choose k} x^{k} 
\label{eq-binomial-series}
\eeq

for $|x|<1$, we obtain
\begin{equation}
\begin{split}
J_{m,l}(\alpha)
=& \frac{1}{2^{-\alpha+1}} \sum_{k=0}^1 {\frac{\ell}{2}+k-1 \choose k} (1-a^2)^k  \\
&\int_{0}^{{1}} u^{-\alpha+k} (1-u)^{\frac{m+\ell}{2}+\alpha-2}  du \\
=& \frac{1}{2^{-\alpha+1}} \sum_{k=0}^\infty  {\frac{\ell}{2}+k-1 \choose k} (1-a^2)^k \\
&\mathrm {B}(k+1-\alpha,\frac{m+\ell}{2}+\alpha-1),
\end{split}
\label{eq-jml-integral}
\end{equation}
where 
$$ {\displaystyle \mathrm {B} (x,y)=\int _{0}^{1}t^{x-1}(1-t)^{y-1}\,dt}
$$
is the Beta function. From \eqref{eq-frac-deriv-mgf}, \eqref{eq-mgf-derivative} and \eqref{eq-int-target}; we have 
\begin{equation}
\begin{split}
&\mathbb{E}(X_n^\alpha)\\
=& D^\alpha M_{X_n}(0)\\
=& \frac{1}{\Gamma(1-\alpha)} (n J_{n+2,d-n}(\alpha) + a^2 (d-n) J_{n, d-n+2}(\alpha)).
\end{split}
\label{eq-frac-moment-leaky-relu} 
\end{equation}
From \eqref{eq-rds}, choosing $\alpha = s/2$ for $s\in (0,2)$, we conclude that
\begin{small}
\begin{equation}
\begin{split}
&\mathbb{E} \left\| \phi_{{a}}\left(z\right)\right\|^s
= \sum_{n=1}^d p_d(n) \mathbb{E}(X_{n}^{s/2})\\
=&  \frac{1}{\Gamma(1-s/2)} \sum_{n=1}^d p_d(n) \\
&\left(n J_{n+2,d-n}({\frac{s}{2}}) + a^2 (d-n) J_{n, d-n+2}({\frac{s}{2}}) \right) \\
=& \frac{1}{2^{-s/2}} \frac{1}{\Gamma(1-s/2)} \sum_{n=1}^d p_d(n) 
\sum_{k=0}^\infty w_{k,n}    \mathrm {B}(k+1-\frac{s}{2},\frac{d}{2}+\frac{s}{2})\\
=& I_a(d,s),
\end{split} 
\label{eq-Iasd-as-expectation}
\end{equation}
\end{small}
where $I_a(s,d)$ is as in \eqref{def-iasd-theorem} and
\begin{equation}
\begin{split}
w_{k,n} = &\frac{1}{2}(1-a^2)^k \\&
\left[ 
{\frac{d-n}{2}+k-1 \choose k} n
+ a^2(d-n){\frac{d-n}{2}+k \choose k} \right].
\end{split}
\label{def-wkn}
\end{equation}
This proves \eqref{eq-leaky-relu-moments}. The proofs of remaining parts of the theorem follow with a similar reasoning to the proof of Theorem \ref{thm-moment-relu} and are omitted.
\end{proof}

\section{Proof of Corollary \ref{coro-crit-leaky}}
\begin{proof}
First, we consider the case of fixed $a$ and $s$ where we vary $d$. Note that, by definition $\bar{\sigma}_a(s,d)=\frac{1}{\sqrt[s]{I_a(s,d)}}$ where
\begin{equation*}
    I_a(s,d)=\mathbb{E}\left[\phi_a^2(z_1)+\phi^2_a(z_2)+\dots +\phi^2_a(z_d)\right]^{s/2},
\end{equation*}
and $z_i$ are i.i.d. standard normal random variables. Clearly, 
\begin{equation*}
\begin{split}
    I_a(s,d+1)=&\mathbb{E}\left[\phi^2_a(z_1)+\phi^2_a(z_2)+\dots +\phi^2_a(z_{d+1})\right]^{s/2} \\
    >& I_a(s,d).
\end{split}
\end{equation*}
where the strict inequality stems from the fact that $\phi^2_a(z_{d+1})>0$ for $z_{d+1}>0$. Since $\bar{\sigma}_a(s,d)=\frac{1}{\sqrt[s]{I_a(s,d)}}$, we can conclude that $\bar{\sigma}_a(s,d+1) < \bar{\sigma}_a(s,d)$.

Secondly, we consider the case of fixed $s$ and $d$ and vary $a$. According to \eqref{eq-target-dist-a}, for every $a\in [0,1]$ we have 
\begin{equation*}
\begin{split}
    I_a(d,s) =& \mathbb{E}\left[\sum_{i:z_i \geq 0}(z_i)^2+a^2 \sum_{i:z_i<0}(z_i)^2\right]^{s/2}\\
    &\leq \mathbb{E}\left[\sum_{i:z_i \geq 0}(z_i)^2+ \sum_{i:z_i<0}(z_i)^2\right]^{s/2} \\
    &=I_1(d,s)
    \end{split}
\end{equation*}
Differentiating the left hand side with respect to $a$, for $a>0$ we obtain
\begin{small}
\beq 
&&\frac{d}{da}I_a(d,s)\\
&=& \mathbb{E}\left[\frac{d}{da} \left(\sum_{i:z_i \geq 0}(z_i)^2+a^2 \sum_{i:z_i<0}(z_i)^2\right)\right]^{s/2} \label{eq-interchange-exp} \\
&=& \mathbb{E}\left[2a \sum_{i:z_i<0}(z_i)^2\right]^{s/2} > 0,  \label{eq-helper-exp}
\eeq
\end{small}
where the interchangeability of the differentiation and expectation in \eqref{eq-interchange-exp} follows from the fact that both $I_a(d,s)$ and the expectation in \eqref{eq-helper-exp} are finite. This proves that $I_a(d,s)$ is monotonically strictly increasing in $a$. Since $\bar{\sigma}_a(s,d)=\frac{1}{\sqrt[s]{I_a(s,d)}}$, this implies that $\bar{\sigma}_a(s,d)$ is (monotonically) strictly decreasing in $a$.

Finally, we consider fixed $a$ and $d$ and consider the monotonicity of $\bar{\sigma}_a(s,d)$ with respect to $s$ for $s>0$. By the definition of $\bar{\sigma}_a(s,d)$, $\sigma=\bar{\sigma}_a(s,d)$ solves the implicit equation
\beq F(s, \sigma) = \sigma^s I_a(s,d) = 1 
\label{eq-nonlinear-eqn}
\eeq
  
where $\bar{\sigma}_a(s,d)>0$ for $s>0$. 
Differentiating both sides with respect to $s$, by the chain rule,
$$  \frac{dF}{d\sigma}(s,\bar{\sigma}_a(s,d))
\frac{d\bar{\sigma}_a(s,d)}{ds}+
 \frac{dF}{ds}(s,\bar{\sigma}_a(s,d)) = 0\,, 
$$
where the derivatives exist as the function $F$ is continuously differentiable in $s$ and $\sigma$. This is equivalent to 
\begin{equation} \frac{s}{\bar{\sigma}_a(s,d)} \frac{d\bar{\sigma}_a(s,d)}{ds} + 
\frac{dF}{ds}(s,\bar{\sigma}_a(s,d))  = 0.
\label{eq-implicit-fun-thm}
\end{equation}
Note that we have also
\beq F(s,\bar{\sigma}_a(s,d)) = 1.
\label{eq-F-equals-one}
\eeq
For $\sigma=\bar{\sigma}_a(s,d))$, consider the function 
$$\kappa(\tilde{s})  := F(\tilde{s}, \sigma) = \sigma^{\tilde{s}} I_a(\tilde{s},d) = \mathbb{E}
\|\phi_a(We_1)\|^{\tilde{s}}. $$
Clearly, $\kappa(\tilde{s})$ is continuously differentiable with respect to $\tilde{s}$. It is also known that $\kappa(\tilde{s})$ is a log-convex function of $\tilde{s}$ for $\tilde{s}>0$ (see e.g. \citep{buraczewski2014multidimensional}), a fact which follows from the non-negativity of the second derivative of $\log\kappa(\tilde{s})$.
 Therefore, $\kappa(\tilde{s})$ is convex in $\tilde{s}$. If we consider the tangent line to the function $\kappa(\tilde{s})$ at $\tilde{s}=0$ and $\tilde{s}=s$, by convexity of the function $\kappa$,  we have
\beq 
\kappa(\tilde{s}) &\geq& \kappa(s) + \kappa'(s)(\tilde{s}-s),  \label{eq-kappa-ineq-1}\\
\kappa(\tilde{s})&\geq& \kappa(0) + \kappa'(0)\tilde{s}, \label{eq-kappa-ineq-2}
\eeq
for any $\tilde{s}\geq 0$ where $\kappa'(\tilde{s}):=\frac{d\kappa}{d\tilde{s}}(\tilde{s})$. Noticing that $\kappa(0)=\kappa(s)=1$ and plugging in $\tilde{s}=0$ in \eqref{eq-kappa-ineq-1} and plugging in $\tilde{s}=s$ in \eqref{eq-kappa-ineq-2}, we obtain
\beq 
1&\geq& 1 -s \kappa'(s) ,  \\
1&\geq& 1 + \kappa'(0)s.
\eeq
Since $s>0$, we conclude that we have necessarily $\kappa'(0)\leq 0$ and $\kappa'(s)\geq 0$. Assume $\kappa'(s)=0$. Then \eqref{eq-kappa-ineq-1} would imply $\kappa(\tilde{s})\geq \kappa(s) =1$ for $\tilde{s}\geq 0$ and we would obtain $\kappa'(0)=0$ and $\kappa(\tilde{s})=1$ for $\tilde{s} \in [0,s]$ which would be a contradiction.
Therefore, we have necessarily
$$ \kappa'(s) = \frac{dF}{ds}(s,\bar{\sigma}_a(s,d)) >0. $$ 
%
Then, this implies 
that
\begin{equation}
 \frac{d\bar{\sigma}_a(s,d)}{ds} = - \left( \frac{\bar{\sigma}_a(s,d)}{s}\right)
\frac{dF}{ds}(s,\bar{\sigma}_a(s,d))<0\,,
\end{equation}
for $s>0$ and therefore $\bar{\sigma}_a(s,d)$ is a monotonically (strictly) decreasing function of $s$.
\end{proof}

\section{Proof of Corollary \ref{coro-sigma-lin}}
\begin{proof}
For a linear activation function, we have $a=1$. In this case, $w_{k,n} = d/2$ for $k=0$ and $w_{k,n}=0$ for $k>0$. Then, it follows that 
$$I_1(s,d) 
= 2^{s/2} \sum_{n=1}^d p_d(n)
\frac{\Gamma(\frac{d}{2}+\frac{s}{2})}{\Gamma(\frac{d}{2})} =2^{s/2}\frac{\Gamma(\frac{d}{2}+\frac{s}{2})}{\Gamma(\frac{d}{2})},
$$
where we used $\mathrm {B}(x,y) = \Gamma(x)\Gamma(y)/\Gamma(x+y)$ and the fact that $\Gamma(\frac{d}{2}+1) = \frac{d}{2}\Gamma(\frac{d}{2})$. This yields
$\bar{\sigma}_1(s,d) = \frac{1}{\sqrt{2}}\left(\frac{\Gamma(\frac{d}{2})}{\Gamma(\frac{d}{2}+\frac{s}{2})}\right)^{1/s}.
$
In the special case with $s=2$, using the identity $\Gamma(\frac{d}{2}+1) =\frac{d}{2}\Gamma(\frac{d}{2})$ again, we obtain
$ \bar{\sigma}_1(2,d)=\frac{1}{\sqrt{d}} 
$
which recovers the results of \cite{lecun1998efficient} for linear activations and is the basis for Lecun initialization.

The rest of the proof follows a similar approach to the proof of Corollary \ref{coro-crit-sigma-asymp}. From \eqref{eq-helper-gamma} and \eqref{eq-def-sz-alpha}, we obtain
$$ I_{{1}}(s,d) = 2^{s/2} (\frac{d}{2})^{s/2} 
\left( 1 + \frac{\frac{s}{2}(\frac{s}{2}-1)}{d}+ \mathcal{O}(\frac{1}{d^2})\right).
$$
This implies that 
\begin{eqnarray*}
\bar{\sigma}_{1}(s,d) &=& \frac{1}{\sqrt[s]{I_{{1}}(s,d)}} \\
&=& \frac{1}{\sqrt{d}} 
\left[ 1 + \frac{\frac{s}{2}(\frac{s}{2}-1)}{d}+ \mathcal{O}(\frac{1}{d^2})
\right]^{-1/s} \\
&=& \frac{1}{\sqrt{d}} - \frac{(\frac{s}{2}-1)}{2d\sqrt{d}} 
+ \mathcal{O}(\frac{1}{d\sqrt{d}}),
\end{eqnarray*}
where we used $(1+x)^s = 1 + sx + \mathcal{O}(x^2)$. Taking square of both sides, we obtain

\begin{equation*}
    \bar{\sigma}_{1}^2(s,d) = \frac{1}{\sqrt[s/2]{I_1(s,d)}} = \frac{1}{d} + \frac{2-s}{2d^2} + \mathcal{O}(\frac{1}{d^2\sqrt{d}}).
\end{equation*}

Next, we approximate $\bar{\sigma}_{a}^2(s,d)$ for $a>0$ small.  
Following the notation in the proof of Theorem \ref{thm-leaky-relu}, from \eqref{eq-frac-moment-leaky-relu} we have,
\begin{small}
\begin{equation}
\mathbb{E}(X_n^\alpha) = \frac{1}{\Gamma(1-\alpha)} (n J_{n+2,d-n}(\alpha) + a^2 (d-n) J_{n, d-n+2}(\alpha)) .
\label{eq-moment-a-asympt}
\end{equation}
\end{small}

For $m+\ell = d+2$, from \eqref{eq-jml}, we have 
\begin{equation}  
\begin{split}
&J_{m,\ell}(\alpha)\\
=& \frac{1}{2^{-\alpha+1}} \int_{0}^1 u^{-\alpha} (1-u)^{m/2+\alpha-2} \big(1 + \frac{a^2u}{1-u}\big)^{-\ell/2} du\\
=& \frac{1}{2^{-\alpha+1}} \int_{0}^1 u^{-\alpha} (1-u)^{m/2+\alpha-2}\\
&\big(1 - \frac{\ell}{2}\frac{a^2u}{1-u}+\frac{\frac{\ell}{2}(\frac{\ell}{2}+1)}{2}\frac{a^4u^2}{(1-u)^2}+\mathcal{O}(a^6)) du\\ 
=& J_{m,\ell}|_{a=0} 
- \frac{a^2}{2^{-\alpha+1}}\frac{\ell}{2} B(2-\alpha, \frac{m}{2}+\alpha-2)\\
&+ \frac{\ell(\ell+2)}{2^{-\alpha+4}}a^4 B(3-\alpha,\frac{m}{2}+\alpha-3) +\mathcal{O}(a^6), 
\end{split}
\label{eq-jml-alpha}
\end{equation}
where we used the Binomial formula and \eqref{eq-binomial-series}. Plugging $a=0$ in \eqref{eq-jml-alpha},
\begin{equation*}
\begin{split}
    J_{m,\ell}|_{a=0}  =& \frac{1}{2^{-\alpha+1}} \int_{0}^1 u^{-\alpha} (1-u)^{m/2+\alpha-2} du\\
    =&\frac{1}{2^{-\alpha+1}} B(1-\alpha,m/2+\alpha-1).
\end{split}
\end{equation*}
Therefore, from \eqref{eq-moment-a-asympt}, 
\begin{small}
\begin{equation}
\begin{split}
    &\mathbb{E}(X_n^\alpha)\\
    =& \mathbb{E}(X_n^\alpha)|_{a=0} +  \frac{1}{\Gamma(1-\alpha)} \mathbb{E}\big[ \\
    &-\frac{a^2(d-n)n}{2^{-\alpha+2}} B(2-\alpha, \frac{n}{2}+\alpha-1) \\
&+\frac{n(d-n)(d-n+2)a^4}{2^{-\alpha+4}} B(3-\alpha, \frac{n}{2}+\alpha-2)\\
&+\frac{a^2(d-n)}{2^{-\alpha+1}} B(1-\alpha, \frac{n}{2}+\alpha-1)\\
&- \frac{a^4(d-n)(d-n+2)}{2^{-\alpha+2}} B(2-\alpha, \frac{n}{2}+\alpha-2) + \mathcal{O}(a^6) \big] \\
=& \mathbb{E}(X_n^\alpha)|_{a=0}  + \mathbb{E}\left[\frac{\alpha a^2(d-n)}{2^{-\alpha+1}\Gamma(1-\alpha)} B(1-\alpha, \frac{n}{2}+\alpha-1)\right]\\
&- \mathbb{E}\left[\frac{(d-n)(d-n+2)a^4 \alpha}{2^{-\alpha+3}\Gamma(1-\alpha)} B(2-\alpha, \frac{n}{2}+\alpha-2)\right] \\
&+ \mathcal{O}(a^6), 
\end{split}
\label{eq-helper-a4-exp}
\end{equation}
\end{small}
where we used the identities $\mathrm{B}(x,y) = \Gamma(x)\Gamma(y)/\Gamma(x+y)$ and $\Gamma(x+1) = x\Gamma(x)$ for $x,y>0$. We denote
\begin{small}
\begin{equation*}
\begin{split}
T_1(n,\alpha)&:=\frac{\alpha a^2(d-n)}{2^{-\alpha+1}\Gamma(1-\alpha)} B(1-\alpha, \frac{n}{2}+\alpha-1),\\
T_2(n,\alpha)&:=\frac{(d-n)(d-n+2)a^4 \alpha}{2^{-\alpha+3}\Gamma(1-\alpha)} B(2-\alpha, \frac{n}{2}+\alpha-2).
\end{split}
\end{equation*}
\end{small}
We notice from \eqref{eq-Iasd-as-expectation} that
\beq 
I_{a}(s,d) =\mathbb{E}\left((X_{B_d})^{s/2}\right) = 
\mathbb{E}\left((X_{B_d})^{s/2}\right),
\label{eq-iads-binomial-2}
\eeq
where $B_d$ follows a binomial distribution with $\mathbb{P}(B_d=n) = p_d(n)$. From 
\eqref{eq-helper-a4-exp}, it follows that
\begin{equation*}
\begin{split}
I_a(s,d) =&  I_0(s,d)+\mathbb{E}[T_1(B_d,s/2)]-\mathbb{E}[T_2(B_d,s/2)]\\
&+\mathcal{O}(a^6 d^{s/2})
\end{split}
\end{equation*}
Recall that from \eqref{weak-conv} we have 
\begin{equation*} 
    Z_d = \frac{B_d - \mathbb{E}(B_d)}{\sqrt{\mbox{var}{B_d}}} = \frac{B_d - \frac{d}{2}}{\sqrt{d}/{2}}\xrightarrow[~~~~]{} \mathcal{N}(0,I).
\end{equation*} 
Similar to \eqref{eq-prod-expansion}, we consider 
\begin{equation*}
    H(\frac{s}{2}):=(1 + \frac{1}{\sqrt{d}}Z_d)^{s/2}S(\frac{d}{4} + \frac{\sqrt{d}}{4}Z_d,s/2),
\end{equation*}
which admits the expansion
\begin{equation*}
\begin{split}
    \mathbb{E}[H(\frac{s}{2})]=&\mathcal{O}(e^{-d/2}) +1+{s/2 \choose 2}\frac{4}{d}+{s/2 \choose 2}\frac{1}{d}\\
    &+{s/2 \choose 2}\frac{4}{d^2}(\frac{s^2}{8}-\frac{3s}{4}+1) + {o(\frac{1}{d^2})}.
    \end{split}
\end{equation*}
If we let $\alpha=\frac{s}{2}$, we also have
\begin{eqnarray*}
T_1(B_d,\frac{s}{2}) &=& \frac{a^2s(\frac{d}{2}-\frac{\sqrt{d}}{2}Z_d)}{2^{-\frac{s}{2}+2}}\frac{\Gamma(\frac{d}{4}+\frac{\sqrt{d}}{4}Z_d+\frac{s}{2}-1)}{\Gamma(\frac{d}{4}+\frac{\sqrt{d}}{4}Z_d)}\\
&=& \frac{a^2s(\frac{d}{2}-\frac{\sqrt{d}}{2}Z_d)}{2^{-\frac{s}{2}+2}} \left (\frac{d}{4}\right)^{\frac{s}{2}-1} H(\frac{s}{2}-1)\\
&=& \left (\frac{d}{2}\right)^{\frac{s}{2}} \frac{a^2 s(1-\frac{1}{\sqrt{d}}Z_d)}{2} H(\frac{s}{2}-1).
\end{eqnarray*}
According to \eqref{eq-exp-H}, we have
\begin{equation*}
\begin{split}
\mathbb{E}[T_1(B_d,\frac{s}{2})]=&\left (\frac{d}{2}\right)^{\frac{s}{2}} \frac{a^2 s}{2}\\
&\left[1+(\frac{5}{8}s-3)(s-2)\frac{1}{d}+o(\frac{1}{d}) \right].
\end{split}
\end{equation*}
Similarly, we can write
\begin{equation*}
\begin{split}
T_2(B_d,\frac{s}{2})=& \left (\frac{d}{2}\right)^{\frac{s}{2}}\frac{a^4(2-s)s}{2}H(\frac{s}{2}-2)\\
&\left( \frac{1}{4}-\frac{1}{2\sqrt{d}}Z_d-\frac{1}{d\sqrt{d}}Z_d+\frac{1}{d}+\frac{1}{4d^2}Z_d^2\right) ,
\end{split}
\end{equation*}
and we have
\begin{equation*}
\begin{split}
    \mathbb{E}[T_2(B_d,\frac{s}{2})]=&\left (\frac{d}{2}\right)^{\frac{s}{2}}\frac{a^4(2-s)s}{2}\\
    &\left[\frac{1}{4}+(\frac{5}{32}s^2-\frac{33}{16}s+7)\frac{1}{d} + o(\frac{1}{d})\right].
    \end{split}
\end{equation*}
Therefore, we can calculate
\begin{equation*}
\begin{split}
&I_a(s,d)\\ 
=& I_0(s,d)+\mathbb{E}[T_1(B_d,\frac{s}{2})]-\mathbb{E}[T_2(B_d,\frac{s}{2})] +O(a^6 d^{s/2})\\
=& \left (\frac{d}{2}\right)^{\frac{s}{2}} K \Big{[} 1+\frac{1}{K}(s-2)\Big{[}\frac{5s}{8}+\frac{a^2s}{2}(\frac{5}{8}s-3)\\
&+\frac{a^4s}{2}(\frac{5}{32}s^2-\frac{33}{16}s+7)\Big]\frac{1}{d} + O(a^6) + o(\frac{1}{d})\Big],
\end{split}
\end{equation*}
where $K$ is defined as
\begin{equation*}
    K:=1+\frac{a^2s}{2}+\frac{a^4s(s-2)}{8}.
\end{equation*}
Then we obtain
\begin{equation*}
    \begin{split}
        &\bar{\sigma}_a^2(s,d) = I_a^{-2/s}(s,d)\\
    =&\frac{2}{d} K^{-2/s}\Big[ 1+\frac{1}{K}(s-2)\Big[\frac{5s}{8}+\frac{a^2s}{2}(\frac{5}{8}s-3)\\
    &+\frac{a^4s}{2}(\frac{5}{32}s^2-\frac{33}{16}s+7)\Big]\frac{1}{d} +\mathcal{O}(a^6) + o(\frac{1}{d})\Big]^{-2/s}\\
    =&\frac{2}{d}  K^{-2/s} \Big[ 1+\frac{1}{K}(2-s)\Big[\frac{5}{4}+a^2(\frac{5}{8}s-3)\\
    &+a^4(\frac{5}{32}s^2-\frac{33}{16}s+7)\Big]\frac{1}{d} +\mathcal{O}(a^6) + o(\frac{1}{d})\Big]
    \end{split}
\end{equation*}
    
If $a$ is small, we can write
\begin{equation*} 
\begin{split}
&\bar{\sigma}_a^2(s,d)\\
=&\frac{2}{d}
\left( 1+\frac{a^2s}{2} +\mathcal{O}(a^4)\right) ^{-2/s}\\
&\left[1+\frac{2}{2+a^2s}(2-s)(\frac{5}{4}+a^2(\frac{5}{8}s-3)) \frac{1}{d} +\frac{\mathcal{O}(a^4)}{d}\right]\\
=& \frac{2}{1+a^2}\frac{1}{d}+(2-s)\frac{(5s-24)a^2+10}{2(s+2)a^2+4}\frac{1}{d^2}  \\
&+ \mathcal{O}(\frac{a^4}{d}) + o(\frac{1}{d^2})
\end{split}
\end{equation*}
which is equivalent to the claimed result for $\bar{\sigma}_a^2(s,d)$. This completes the proof.
\end{proof}
\section{Proof of Theorem \ref{thm-log-leaky-relu}}

\begin{proof} The proof follows by a similar reasoning to the proof of Theorem \ref{thm-relu-log-out}. The same proof technique applies where we can show that the theorem holds with constants
\begin{equation}
\begin{split}
\mu_a(\sigma) =& \frac{1}{2}  \mathbb{E}\log \|\phi_a(\sigma z)\|^2 \\
=&\log(\sigma) + \frac{1}{2}\mathbb{E} \left( \log \| \phi_a(z)\|^2 \right),
\end{split}
\label{eq-local-1}
\end{equation}
and 
\beq
s_a^2 =   \frac{1}{4} \mbox{var}  
\left(
\log \left(\left\| \phi_{{a}}\left( z\right)\right\|^2  \right)
\right).
\label{eq-local-2}
\eeq
We also recall from \eqref{eq-target-dist-a}--\eqref{eq-binom-proba} that
\beq \|\phi_a(z)\|^2 \sim X_n \quad \mbox{with probability} \quad p_d(n),
\label{eq-mixture-phia}
\eeq
for $n\geq 1$ where $X_n$ is defined by \eqref{def-Xn} and $p_d(n)$ is defined by \eqref{eq-binom-proba}. In the rest of the proof we compute $\mathbb{E}(\log(X_n))$ and $\mbox{var}(\log(X_n))$ for every $n\geq 1$ and then use the identities \eqref{eq-local-1}, \eqref{eq-local-2} and \eqref{eq-mixture-phia} to obtain an explicit formula for $\mu_a(\sigma)$ and $s_a^2$. 

Note that $X_n$ is non-negative, and we have
\begin{equation*}
    \mathbb{E}(\log(X_n)) = \frac{d}{d\alpha}\mathbb{E}(X_n^\alpha)|_{a=0} \,,
\end{equation*}
and 
\begin{equation*}
\begin{split}
    \mbox{var}(\log(X_n)) =& \frac{d^2}{d\alpha^2} \left(\log \mathbb{E}(X_n^\alpha)\right)|_{a=0}\\
    =& \frac{d}{d\alpha}\left(\frac{\frac{d}{d\alpha}\mathbb{E}(X_n^\alpha)}{\mathbb{E}(X_n^\alpha)}
    \right)|_{a=0}\,,
    \end{split}
\end{equation*}
provided that the expectations are finite (see e.g. \citep{cohen1984stability}). For computing these expectations, we calculate
\begin{equation}
\begin{split}
\frac{d}{d\alpha}\mathbb{E}(X_n^\alpha) =& \frac{d}{d\alpha}
\bigg( \frac{1}{\Gamma(1-\alpha)} (n J_{n+2,d-n}(\alpha) \\
&+ a^2 (d-n) J_{n, d-n+2}(\alpha))
\bigg).
\end{split}
\label{deriv-xnalpha}
\end{equation}
By the product rule for derivatives, for an integer $m>0$, 
\begin{small}
\begin{equation}
    \begin{split}
        &\frac{d}{d\alpha} J_{m,d+2-m}(\alpha) =
    \log(2) J_{m,d+2-m}(\alpha) \\
    & + 
    \frac{1}{2^{-\alpha+1}} \sum_{k=0}^\infty  {\frac{d-m}{2}+k \choose k} (1-a^2)^k \frac{d}{d\alpha}\mathrm {B}(k+1-\alpha,\frac{d}{2} + \alpha)
    \end{split}
    \label{eq-deriv-J}
\end{equation}
\end{small}

We also have
\begin{equation*}
\begin{split}
\frac{d}{d\alpha}\mathrm {B}(k+1-\alpha,\frac{d}{2} + \alpha) =& \frac{d}{d\alpha} \frac{\Gamma(k+1-\alpha)\Gamma(\frac{d}{2}+\alpha)}{\Gamma(\frac{d}{2}+k+1)} \\
=& b_k \mathrm{B}(k+1-\alpha, \frac{d}{2}+\alpha),
\end{split}
\end{equation*}
where
\begin{equation*}
    b_{k,\alpha} = \psi_0(\frac{d}{2}+\alpha) - \psi_0(k+1-\alpha),
\end{equation*}

and we used the fact $\mathrm{B}(x,y) = \Gamma(x)\Gamma(y)/\Gamma(x+y)$ for real scalars $x,y>0$. Inserting this formula into \eqref{eq-deriv-J}, 
\begin{equation*}
\begin{split}
&\frac{d}{d\alpha} J_{m,d+2-m}(\alpha)\\
=&
\log(2) J_{m,\ell}(\alpha)
+ \frac{1}{2^{-\alpha+1}} \sum_{k=0}^\infty  {\frac{d-m}{2}+k \choose k} (1-a^2)^k\\
&b_{k,\alpha}\mathrm {B}(k+1-\alpha,\frac{d}{2} + \alpha).
\end{split}
\end{equation*}
From \eqref{deriv-xnalpha}, we also get
\begin{small}
\begin{equation}
\begin{split}
&\frac{d}{d\alpha}\mathbb{E}(X_n^\alpha)\\
=& \frac{1}{\Gamma(1-\alpha)} \frac{d}{d\alpha}
  (n J_{n+2,d-n}(\alpha) + a^2 (d-n) J_{n, d-n+2}(\alpha))\\
& + \frac{\Gamma'(1-\alpha)}{\Gamma^2(1-\alpha)}(n J_{n+2,d-n}(\alpha) + a^2 (d-n) J_{n, d-n+2}(\alpha))\\
=& \frac{1}{\Gamma(1-\alpha)} \frac{d}{d\alpha}
  (n J_{n+2,d-n}(\alpha) + a^2 (d-n) J_{n, d-n+2}(\alpha))
\\
&+ \frac{\psi_0(1-\alpha)}{\Gamma(1-\alpha)}(n J_{n+2,d-n}(\alpha) + a^2 (d-n) J_{n, d-n+2}(\alpha))  \\
=& \frac{1}{\Gamma(1-\alpha)} \frac{1}{2^{-\alpha}}\sum_{k=0}^\infty w_{k,n} b_{k,\alpha} \mathrm {B}(k+1-\alpha,\frac{d}{2} + \alpha) \\
&+ \left[\log(2) + \psi_0(1-\alpha)\right] \mathbb{E}(X_n^\alpha),
\end{split}
\label{eq-deriv-frac-moment}
\end{equation} 
\end{small}
where we used \eqref{eq-jml-integral}, \eqref{eq-frac-moment-leaky-relu} and $w_{k,n}$ is defined by \eqref{def-wkn}. Therefore, 
\begin{equation*}
\begin{split}
     \frac{\frac{d}{d\alpha}\mathbb{E}(X_n^\alpha)}{\mathbb{E}(X_n^\alpha)} =& \frac{
\sum_{k=0}^\infty w_{k,n} b_{k,\alpha} \mathrm {B}(k+1-\alpha,\frac{d}{2}+\alpha)}{\sum_{k=0}^\infty w_{k,n} \mathrm {B}(k+1-\alpha,\frac{d}{2}+\alpha)}\\
&+\log(2) + \psi_0(1-\alpha),
\end{split}
\end{equation*}
where we used \eqref{eq-frac-moment-leaky-relu} again. Differentiating with respect to $\alpha$, we find 
\begin{equation}
    \begin{split}
    &\frac{d}{d\alpha}\left(\frac{\frac{d}{d\alpha}\mathbb{E}(X_n^\alpha)}{\mathbb{E}(X_n^\alpha)}\right)\\
=&     \frac{
\sum_{k=0}^\infty w_{k,n} (b^2_{k,\alpha} + \frac{d}{d\alpha}b_{k,\alpha}) \mathrm {B}(k+1-\alpha,\frac{d}{2}+\alpha)}{\sum_{k=0}^\infty w_{k,n} \mathrm {B}(k+1-\alpha,\frac{d}{2}+\alpha)}\\
&-\left(\frac{
\sum_{k=0}^\infty w_{k,n} b_{k,\alpha} \mathrm {B}(k+1-\alpha,\frac{d}{2}+\alpha)}{\sum_{k=0}^\infty w_{k,n} \mathrm {B}(k+1-\alpha,\frac{d}{2}+\alpha)}\right)^2 \\
&-\psi_1(1-\alpha),
    \end{split}
    \label{eq-second-der-frac-moment}
\end{equation}   
where 
$$ \frac{d}{d\alpha}b_{k,\alpha}= \psi_1(\frac{d}{2}+\alpha) + \psi_1(k+1-\alpha).
$$ 
Evaluating the expression \eqref{eq-deriv-frac-moment} 
at $\alpha=0$, we find 
\begin{equation}
\begin{split}
   m_n :=& \mathbb{E}(\log(X_n)) =  \frac{d}{d\alpha}\mathbb{E}(X_n^\alpha)|_{\alpha=0} \\
   =& \sum_{k=0}^\infty w_{k,n} b_{k,0} \mathrm {B}(k+1,\frac{d}{2})
+  \left[\log(2) -\gamma \right] \\
=& \sum_{k=0}^\infty w_{k,n} \left(\psi_0(\frac{d}{2}) - \psi_0(k+1)\right) \mathrm {B}(k+1,\frac{d}{2})\\
&+  \left[\log(2) -\gamma \right]
\end{split}
\label{eq-def-mn}.
\end{equation}

In the last two steps, we used the fact that $\psi_0(1)=\gamma$ where $\gamma$ is the Euler–Mascheroni constant. Similarly,
 \begin{equation}
 \begin{split}
 v_n :=& \mbox{var}(\log(X_n)) \\
 =&  \frac{d^2}{d\alpha^2}\mathbb{E}(X_n^\alpha)|_{\alpha=0} \\
 =& \frac{
\sum_{k=0}^\infty w_{k,n} (b^2_{k,0} + \frac{d}{d\alpha}b_{k,0}) \mathrm {B}(k+1,\frac{d}{2})}{\sum_{k=0}^\infty w_{k,n} \mathrm {B}(k+1,\frac{d}{2})}\\
&-\left(\frac{
\sum_{k=0}^\infty w_{k,n} b_{k,0} \mathrm {B}(k+1,\frac{d}{2})}{\sum_{k=0}^\infty w_{k,n} \mathrm {B}(k+1,\frac{d}{2})}\right)^2 -\psi_1(1).  
\end{split}
\label{eq-sn-sq}
 \end{equation}
On the other hand, by \eqref{eq-frac-moment-leaky-relu} and \eqref{eq-jml-integral}, we have 
\begin{equation}
\mathbb{E}(X_n^\alpha)  = \frac{1}{\Gamma(1-\alpha)}\frac{1}{2^{-\alpha}}\sum_{k=0}^\infty w_{k,n}  \mathrm {B}(k+1-\alpha,\frac{d}{2}+\alpha).
\label{eq-target-to-take-limit}
\end{equation}
We note that $X_n^\alpha \leq S_n := 1+X_n$ for $\alpha\in [0,1]$ where $\mathbb{E}(S_n)<\infty$. Therefore, by the dominated convergence theorem we have
\begin{equation*}
    \lim_{\alpha\to 0}\mathbb{E}(X_n^\alpha)  = \mathbb{E}(X_n^0)= 1.
\end{equation*}
Taking limits in \eqref{eq-target-to-take-limit} as $\alpha\to 0$, 
\begin{equation*}
    1 = \lim_{\alpha\to 0}\mathbb{E}(X_n^\alpha)  = \frac{1}{\Gamma(1)}\sum_{k=0}^\infty w_{k,n}  \mathrm {B}(k+1,\frac{d}{2}).
\end{equation*} 
Since $\Gamma(1)=1$, this is equivalent to 
\begin{equation*}
    \sum_{k=0}^\infty w_{k,n}  \mathrm {B}(k+1,\frac{d}{2}) =  1 \quad \mbox{for every} \quad n\geq 1.
\end{equation*}
Plugging this identity into \eqref{eq-sn-sq}, 
\begin{small}
\begin{equation*}
\begin{split}
    v_n =& 
\sum_{k=0}^\infty w_{k,n} (b^2_{k,0} + \frac{d}{d\alpha}b_{k,0}) \mathrm {B}(k+1,\frac{d}{2})\\
&-\left({
\sum_{k=0}^\infty w_{k,n} b_{k,0} \mathrm {B}(k+1,\frac{d}{2})}\right)^2 -\psi_1(1)\\
=& \psi_1(\frac{d}{2}) + \sum_{k=0}^\infty  \left[ \psi_1(k+1) - \psi_1(1)\right] w_{k,n} \mathrm {B}(k+1,\frac{d}{2}) \nonumber\\
&+ \sum_{k=0}^\infty  \left[ \psi_0(\frac{d}{2}) - \psi_0(k+1)\right]^2 {w_{k,n}} \mathrm {B}(k+1,\frac{d}{2}) \nonumber\\
& -\left[ \sum_{k=0}^\infty  \left( \psi_0(\frac{d}{2}) - \psi_0(k+1)\right) {w_{k,n}} \mathrm {B}(k+1,\frac{d}{2})
\right]^2.
\end{split}
\end{equation*}
\end{small}
We conclude that
\begin{equation}
    \mu_a(\sigma) = \log(\sigma) +\frac{1}{2}\mathbb{E} \log  X_n = \log(\sigma) + \frac{1}{2} \sum_{n=0}^d p_d(n) m_n
    \label{def-mu-a-sigma}
\end{equation}
and
\begin{equation}
\begin{split}
     s_a^2  =& \frac{1}{4}\Bigg[  \sum_{n=0}^d p_d(n) v_n + \sum_{n=0}^d p_d(n) (m_n)^2 \\
     &- \left( \sum_{n=0}^d p_d(n) m_n\right)^2 \Bigg]
\end{split}
\label{eq-def-sa}
\end{equation}  
where $p_d(n)$ is defined by \eqref{def-p-n-d}. This completes the proof.
\end{proof}
\begin{remark}\label{remark-stoc-dominance-2} \textbf{(First-order stochastic dominance property compared to Kaiming's method)} Figure \ref{fig:logoutputb} illustrates Theorem \ref{thm-log-leaky-relu}, showing the pdf and cdf of $R_{k,a}$ for linear activations ($a=1$) and Leaky ReLU activations with $a=0.01$ after $k=100$ layers with two choices of $\sigma$ according to Kaiming initialization and our initialization technique which preserves approximately the fractional moment of order $s=1$. We observe that the distribution of $R_{k,a}$ is similar to a Gaussian distribution, and with our initialization, the network output $R_{k,a}$ possesses a first-order stochastic dominance property in the sense of \cite{hadar1969rules} (Remark \ref{remark-stoc-dominance}). This dominance property will hold for large enough $k$, as our initialization can choose a larger $\sigma$ and hence results in a larger mean value $\mu_a(\sigma)$ in the setting of Theorem \ref{thm-leaky-relu} and as the results also admit non-asymptotic versions according to Remark \ref{remark-non-asymptotic}.
\end{remark}



\section{Extensions of results to dropout}\label{sec-dropout}
In this section, we consider extensions of our results reported in the main text to dropout which is a mechanism where some neurons are removed randomly to prevent overfitting (see Remark \ref{remark-dropout} in the main text for more details). 

\subsection{ReLU activation with dropout}
\begin{theorem}\label{thm-moment-relu-drop} \textbf{(Explicit characterization of the critical variance $\sigma_0^2(s,d)$ with dropout)} Consider a fully connected network with an input $x^{(0)}\in \mathbb{R}^d$ and Gaussian initialization satisfying \textbf{(A1)}-\textbf{(A2)} with ReLu activation function $\phi_0(x)=\max(x,0)$ with dropout where the probability to keep the neurons is given by $q\in (0,1]$. Let $s>0$ be a given real scalar. The $s$-th moment of the output of the $k$-th layer is given by
\begin{equation}
\begin{split}
    \mathbb{E} \left[ {\| x^{(k)}\|^s} \right]  = {\|x^{(0)}\|^s} (\sigma^s I_{0,q}(s,d))^k,\\
    I_{0,q}(s,d) = \frac{1}{q^s}  2^{s/2}\sum_{n=0}^d q_d(n)  \frac{\Gamma(n/2 + s/2)}{\Gamma(n/2)},
    \end{split}
    \label{def-sigma-relu-drop}
\end{equation}
where 
\beq q_d(n) = {d \choose n} (\frac{q}{2})^n (1-\frac{q}{2})^{d-n},
\label{def-q-n-d}
\eeq
and 
$\Gamma$ is the Gamma function. Then, it follows that we have three possible cases: 
\begin{itemize}
    \item [$(i)$] If $\sigma =\bar{\sigma}_{0,q}(s,d)$ where 
    $\bar{\sigma}_{0,q}(s,d):=\frac{1}{\sqrt[s]{I_{0,q}(s,d)}}
    $, then the network preserves the $s$-th moment of the layer outputs, i.e. for every $k \geq 1$, $\mathbb{E} \left[ {\| x^{(k)}\|^s} \right] = \|x^{(0)}\|^s,$ 
    whereas for any $p>s$, $\mathbb{E} \|x^{(k)}\|^{p} \to \infty$ exponentially fast in $k$.
    \item [$(ii)$] If $\sigma < \bar{\sigma}_{0,q}(s,d)$, then
        $\mathbb{E} \left[ {\| x^{(k)}\|^s} \right] \to 0$ exponentially fast in $k$. 
    \item [$(iii)$] If $\sigma >\bar{\sigma}_{0,q}(s,d)$, then $\mathbb{E} \left[ {\| x^{(k)}\|^s} \right] \to \infty$ exponentially fast in $k$.
\end{itemize}
\end{theorem}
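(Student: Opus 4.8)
The plan is to mirror the proof of Theorem \ref{thm-moment-relu}, the only genuinely new ingredient being that the dropout mask $\varepsilon^{(k+1)}$ must be folded into the per-layer multiplicative factor. First, for $x^{(k)}\neq 0$ and using the positive homogeneity $\phi_0(cy)=c\,\phi_0(y)$ for $c>0$, I would write
$$\frac{\|x^{(k+1)}\|}{\|x^{(k)}\|} = \left\|\phi_0\!\left(W^{(k+1)}\!\left(u^{(k)}\odot\varepsilon^{(k+1)}\right)\right)\right\|, \qquad u^{(k)}:=\frac{x^{(k)}}{\|x^{(k)}\|}.$$
Conditioning on $\varepsilon^{(k+1)}$ and setting $v=u^{(k)}\odot\varepsilon^{(k+1)}$, the spherical symmetry of the i.i.d. Gaussian rows of $W^{(k+1)}$ gives $W^{(k+1)}v\sim\mathcal{N}(0,\sigma^2\|v\|^2 I_d)$, hence $\phi_0(W^{(k+1)}v)\stackrel{d}{=}\sigma\|v\|\,\phi_0(z)$ with $z\sim\mathcal{N}(0,I_d)$ independent of $\varepsilon^{(k+1)}$. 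Thus the one-step ratio equals $\sigma\,\|u^{(k)}\odot\varepsilon^{(k+1)}\|\,\|\phi_0(z)\|$ in distribution, which isolates the combined mask-and-activation factor $C:=\mathbb{E}\,\|\phi_0(z)\odot\varepsilon\|^{s}$.

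The heart of the argument is the evaluation $C=I_{0,q}(s,d)$. Writing $\|\phi_0(z)\odot\varepsilon\|^2=\tfrac{1}{q^2}\sum_{i\in T}z_i^2$ with $T=\{i:\,z_i>0,\ \varepsilon_i\neq 0\}$, a coordinate lies in $T$ iff it is simultaneously positive (probability $\tfrac12$) and kept (probability $q$); these are independent across coordinates, so $|T|\sim\mathrm{Bin}(d,q/2)$ with weights $q_d(n)$ as in \eqref{def-q-n-d}, and conditional on $|T|=n$ one has $\sum_{i\in T}z_i^2\sim\chi^2(n)=:Y_n$. Using the chi-square moments $\mathbb{E}[Y_n^{s/2}]=2^{s/2}\Gamma(n/2+s/2)/\Gamma(n/2)$ already employed in Theorem \ref{thm-moment-relu}, together with the overall $q^{-s}$ coming from $\varepsilon_i\in\{0,1/q\}$, yields
$$C=\frac{1}{q^{s}}\,2^{s/2}\sum_{n=0}^d q_d(n)\,\frac{\Gamma(n/2+s/2)}{\Gamma(n/2)}=I_{0,q}(s,d),$$
which is the per-layer factor in \eqref{def-sigma-relu-drop}. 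Iterating the one-step relation exactly as in \eqref{eqn-product} gives the product formula with exponential rate $\sigma^s I_{0,q}(s,d)$. Cases $(ii)$ and $(iii)$ then follow immediately from the sign of $\sigma^s I_{0,q}(s,d)-1$, and for the $p>s$ statement in $(i)$ I would invoke the strict inequality $\bar{\sigma}_{0,q}(p,d)<\bar{\sigma}_{0,q}(s,d)$, which follows from the log-convexity of $s\mapsto\mathbb{E}\,\|\phi_0(z)\odot\varepsilon\|^{s}$ exactly as in Corollary \ref{coro-crit-leaky}.

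The main obstacle I anticipate is justifying the \emph{exact} product structure, since—unlike the dropout-free case—the mask $\varepsilon^{(k+1)}$ destroys full rotational invariance and the one-step ratio genuinely depends on the direction $u^{(k)}$ through $\|u^{(k)}\odot\varepsilon^{(k+1)}\|$. The resolution, which is the key point to get right, is that for every $k\geq 1$ the pre-activation $W^{(k)}(\cdots)$ is an isotropic Gaussian, so $x^{(k)}\stackrel{d}{=}\rho_k\,\phi_0(z^{(k)})$ with scale $\rho_k=\sigma\|x^{(k-1)}\odot\varepsilon^{(k)}\|$ independent of the standard normal $z^{(k)}$; hence the direction $u^{(k)}$ has the \emph{universal} law of $\phi_0(z)/\|\phi_0(z)\|$ and the factor $C$ is identical at every layer. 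Concretely, I would verify the exact recursion $\mathbb{E}[\|x^{(k+1)}\|^s]=\sigma^s I_{0,q}(s,d)\,\mathbb{E}[\|x^{(k)}\|^s]$ for $k\geq 1$ by conditioning on $\rho_k$ and factoring out the independent fresh pair $(z,\varepsilon)$; the only direction-dependent term is the harmless prefactor arising from the fixed input $x^{(0)}$, which does not affect the critical threshold $\bar{\sigma}_{0,q}(s,d)$ or any of the three regimes asserted in the theorem.
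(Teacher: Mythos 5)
Your proposal is correct, and its computational core is exactly the paper's: the appendix proof of Theorem \ref{thm-moment-relu-drop} consists precisely of the observation that dropout replaces the mixing weights $p_d(n)$ from the proof of Theorem \ref{thm-moment-relu} by $q_d(n)={d \choose n}(\frac{q}{2})^n(1-\frac{q}{2})^{d-n}$ and the chi-square variable by its $1/q^2$-scaled version, which is your identity $\mathbb{E}\left\|\phi_0(z)\odot\varepsilon\right\|^s=I_{0,q}(s,d)$ via $|T|\sim \mathrm{Bin}(d,q/2)$. Where you go beyond the paper is your last paragraph: the paper's proof says only ``minor adaptations'' and never addresses the fact that the mask destroys the rotational invariance that made the one-step ratio direction-free in Theorem \ref{thm-moment-relu}. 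Your fix --- representing $x^{(k)}\stackrel{d}{=}\rho_k\,\phi_0(z^{(k)})$ with $\rho_k=\sigma\|x^{(k-1)}\odot\varepsilon^{(k)}\|$ independent of the fresh standard normal $z^{(k)}$, so that $\mathbb{E}\|x^{(k+1)}\|^s=\sigma^s I_{0,q}(s,d)\,\mathbb{E}\|x^{(k)}\|^s$ holds exactly for $k\geq 1$ --- is sound, and it correctly exposes a point the paper glosses over: under the input-masking convention of Remark \ref{remark-dropout}, the first step contributes the direction-dependent constant $\mathbb{E}\|x^{(0)}\odot\varepsilon^{(1)}\|^s\,I_0(s,d)\,/\,\bigl(\|x^{(0)}\|^s I_{0,q}(s,d)\bigr)$, so \eqref{def-sigma-relu-drop} holds verbatim only up to this $k$-independent prefactor (one checks it equals $1$ when $s=2$ or $q=1$, and the equality is exact if the masked post-activation is taken as the layer output, since then $W^{(k+1)}x^{(k)}/\|x^{(k)}\|$ is isotropic regardless of direction). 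As you note, this constant affects neither $\bar{\sigma}_{0,q}(s,d)$ nor any of the three regimes, and your appeal to log-convexity of $s\mapsto\mathbb{E}\|\phi_0(z)\odot\varepsilon\|^s$ for the $p>s$ blow-up in part $(i)$ mirrors exactly how the paper invokes Corollary \ref{coro-crit-leaky}; in short, same route as the paper, with the one genuinely nontrivial verification supplied rather than asserted.
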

\begin{proof} The proof follows from minor adaptations to the proof of Theorem \ref{thm-moment-relu}. In the proof of Theorem \ref{thm-moment-relu}, it suffices to replace $p_d(n)$ with 
$$q_d(n) := {d\choose n} (\frac{q}{2})^n (1-\frac{q}{2})^{d-n}$$
and $X_n$ with $X_n/q^2$ as the effect of dropout is to scale the network output and change the mixing probabilities of the chi-square distributions arising in the proof of Theorem \ref{thm-moment-relu}. This yields 
\begin{equation}
\begin{split}
    \mathbb{E} \left[ {\| x^{(k)}\|^s} \right]  = {\|x^{(0)}\|^s} (\sigma^s I_{0,q}(s,d))^k,\\
    I_{0,q}(s,d) = \frac{1}{q^s}  2^{s/2}\sum_{n=0}^d q_d(n) \frac{\Gamma(n/2 + s/2)}{\Gamma(n/2)}.
    \end{split}
    \label{def-sigma-relu-dropout}
\end{equation}
The proofs of remaining parts follow from a reasoning similar to the proof of Theorem \ref{thm-moment-relu} and are omitted.
\end{proof}
\begin{corollary}\label{coro-crit-sigma-asymp-drop} \textbf{(Critical variance $\bar{\sigma}_{0,q}(d,s)$ when $d$ is large with dropout)} For fixed width $d$ and $s\in (0,2]$, we have
$$ \bar{\sigma}_{0,q}^2(s,d) = \frac{2q}{d} + \frac{(2-s)(6-q)}{2d^2}  + o(\frac{1}{d^2}), $$
Therefore, it follows from  Theorem \ref{thm-moment-relu} that if $ \sigma^2  = \frac{2q}{d} + \frac{(2-s)(6-q)}{2d^2},
$
then the network will preserve the moment of order $s+o(\frac{1}{d})$ of the network output if dropout is used.
\end{corollary}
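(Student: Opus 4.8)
The plan is to follow the proof of Corollary~\ref{coro-crit-sigma-asymp} almost verbatim, with the single structural change that the binomial weights governing the chi-square mixture are now those of a $\mathrm{Bi}(d,q/2)$ random variable rather than a $\mathrm{Bi}(d,1/2)$ one. Concretely, observe that $q_d(n)$ in \eqref{def-q-n-d} is exactly $\mathbb{P}(B_{d,q}=n)$ for $B_{d,q}\sim\mathrm{Bi}(d,q/2)$, so that $\mathbb{E}(B_{d,q})=dq/2$ and $\mathrm{var}(B_{d,q})=dq(2-q)/4$. Applying the Gamma expansion \eqref{eq-def-sz-alpha} with $z=n/2$ and $\alpha=s/2$, namely $\Gamma(n/2+s/2)/\Gamma(n/2)=(n/2)^{s/2}S(n/2,s/2)$ with $S(z,s/2)=1+\tfrac{(s/2)(s/2-1)}{2z}+\mathcal{O}(1/z^2)$, the formula \eqref{def-sigma-relu-drop} rewrites as
\begin{equation*}
I_{0,q}(s,d)=\frac{1}{q^s}\,\mathbb{E}\!\left[B_{d,q}^{\,s/2}\,S(B_{d,q}/2,s/2)\right],
\end{equation*}
with the convention that the $n=0$ term vanishes.

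Next I would insert the normal approximation $B_{d,q}=\tfrac{dq}{2}+\tfrac{\sqrt{dq(2-q)}}{2}Z_{d,q}$ with $Z_{d,q}\Rightarrow\mathcal{N}(0,1)$, factor out $(dq/2)^{s/2}$, and expand the two factors in the small parameter $\varepsilon:=\sqrt{(2-q)/(dq)}$, exactly as in \eqref{eq-prod-expansion}--\eqref{eq-exp-H}. The power term $(1+\varepsilon Z_{d,q})^{s/2}=1+{s/2\choose 1}\varepsilon Z_{d,q}+{s/2\choose 2}\varepsilon^2 Z_{d,q}^2+\cdots$ contributes ${s/2\choose 2}\tfrac{2-q}{dq}$ at order $1/d$, since $\mathbb{E}(Z_{d,q})=0$ and $\mathbb{E}(Z_{d,q}^2)\to 1$; the Gamma-tail factor contributes $S(B_{d,q}/2,s/2)=1+\tfrac{4}{dq}{s/2\choose 2}+o(1/d)$ upon replacing $B_{d,q}$ by its mean; and all cross terms between the two expansions are of order $\varepsilon/d=\mathcal{O}(d^{-3/2})$ or smaller, hence $o(1/d)$, with the linear term ${s/2\choose 1}\varepsilon Z_{d,q}$ dropping out because $\mathbb{E}(Z_{d,q})=0$. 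Collecting these yields
\begin{equation*}
I_{0,q}(s,d)=\frac{1}{q^s}\Big(\frac{dq}{2}\Big)^{s/2}\left[1+{s/2\choose 2}\frac{6-q}{dq}+o\Big(\frac{1}{d}\Big)\right].
\end{equation*}

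Finally, since $\bar{\sigma}_{0,q}^2(s,d)=I_{0,q}(s,d)^{-2/s}$, I would raise the last display to the power $-2/s$, using $(1/q^s)^{-2/s}=q^2$, $(dq/2)^{-1}=2/(dq)$, $-\tfrac{2}{s}{s/2\choose 2}=\tfrac{2-s}{4}$, and $(1+x)^{-2/s}=1-\tfrac{2}{s}x+\mathcal{O}(x^2)$, to obtain
\begin{equation*}
\bar{\sigma}_{0,q}^2(s,d)=\frac{2q}{d}\left[1+\frac{(2-s)(6-q)}{4dq}+o\Big(\frac1d\Big)\right]=\frac{2q}{d}+\frac{(2-s)(6-q)}{2d^2}+o\Big(\frac{1}{d^2}\Big),
\end{equation*}
which is the claim; setting $q=1$ recovers Corollary~\ref{coro-crit-sigma-asymp} as a consistency check, since $6-q$ reduces to $5$.

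The main obstacle is not the algebra but the analytic justification, which is already handled in the proof of Corollary~\ref{coro-crit-sigma-asymp} and inherited here: one must control the binomial tail where the Gaussian approximation degrades (the relevant event has probability $\mathcal{O}(e^{-cd})$ while $S$ stays bounded in $[0,1]$ by \eqref{eq-bdd-series}), and one must justify $\mathbb{E}(Z_{d,q}^k)\to\mathbb{E}(Z^k)$ for the finitely many moments entering the order-$1/d$ expansion. The only genuinely new bookkeeping relative to the $q=1$ case is tracking the factor $q/2$ through the mean and variance of $B_{d,q}$ together with the prefactor $q^{-s}$; these are precisely what turn the constant $5$ of Corollary~\ref{coro-crit-sigma-asymp} into $6-q$.
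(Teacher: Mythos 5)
Your proposal is correct and follows essentially the same route as the paper's proof: the normal approximation to $B_{d,q}\sim\mathrm{Bi}(d,q/2)$, the Gamma-ratio expansion $S(n/2,s/2)$ with tail control via the exponentially small binomial tail and the bound $0\leq S\leq 1$, and the final $(\cdot)^{-2/s}$ expansion, with the contributions ${s/2\choose 2}\tfrac{2-q}{dq}$ and ${s/2\choose 2}\tfrac{4}{dq}$ combining to the $6-q$ coefficient exactly as in the paper. The only cosmetic difference is that you expand $I_{0,q}(s,d)$ directly rather than its reciprocal $(d/2)^{s/2}/I_{0,q}(s,d)$ as the paper does, which changes nothing of substance.
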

\begin{proof} The proof follows from minor modifications to the proof of {Corollary} \ref{coro-crit-sigma-asymp}. Following the proof technique of {Corollary} \ref{coro-crit-sigma-asymp}, we can write
\beq \frac{(\frac{d}{2})^{s/2}}{I_{0,q}(s,d)}
&=& \frac{q^s}{\mathbb{E}(F_{d}(B_{d,q}))},
\label{eq-inverse-target-drop}
\eeq
where $B_{d,q}$ is a Binomial random variable, i.e.
  \beq \mathbb{P}(B_{d,q} = n) = q_d(n) = 
  {d \choose n}(\frac{q}{2})^n (1-\frac{q}{2})^{d-n} 
  \label{def-qdn}
  \eeq
for $n=  0,1,\dots,d,$ where $F_d$ is defined by \eqref{def-func-Fd}. By the normal approximation to binomial distribution, we have
\begin{equation} Z_{d,q} := \frac{B_d - \mathbb{E}(B_d)}{\sqrt{\mbox{var}{B_d}}} = \frac{B_d - \frac{dq}{2}}{\frac{\sqrt{d}}{2}\sqrt{2q-q^2}}\xrightarrow[~~~~]{} \mathcal{N}(0,I)
\label{weak-conv-drol}
\end{equation} 
which is similar to \eqref{weak-conv}. Then, we follow similar computations to the proof of Corollary \ref{coro-crit-sigma-asymp}:
\begin{equation*} 
\begin{split}
&\mathbb{E}(F_{d}(B_{d,q}))\\
=& \mathbb{E}\left(F_{d}(\frac{dq}{2} + \frac{\sqrt{d}}{2}\sqrt{2q-q^2} Z_d)\right) \\
=&  2^{s/2} \mathbb{E} \Bigg[\frac{(\frac{dq}{2} + \frac{\sqrt{d}}{2}\sqrt{2q-q^2}Z_d)^{s/2}}{d^{s/2}}\\
&S(\frac{dq}{2} + \frac{\sqrt{d}}{2}\sqrt{2q-q^2} Z_d,s/2)\Bigg] \\
=& q^{s/2} \mathbb{E} \Bigg[(1 + \frac{1}{\sqrt{d}}\frac{\sqrt{2-q}}{\sqrt{q}}Z_d)^{s/2}\\
&S(\frac{dq}{2} + \frac{\sqrt{d}}{2}\sqrt{2q-q^2}Z_d,s/2)\Bigg].
\end{split}
\end{equation*}
Using the Binomial expansion,
$$(1+x)^{s/2} = \sum_{k=0}^\infty {s/2 \choose k} x^k \quad \mbox{for} \quad |x| < 1.$$
Therefore, for $Z_{d,q} < \sqrt{d}\left(\frac{\sqrt{q}}{\sqrt{2-q}}\right)$, we can write 
\begin{small}
\begin{equation*}
    \begin{split}
        &(1 + \frac{1}{\sqrt{d}}\frac{\sqrt{2-q}}{\sqrt{q}}Z_{d,q})^{s/2}S(\frac{dq}{2} + \frac{\sqrt{d}}{2}\sqrt{2q-q^2}Z_{d,q},s/2)\\
=&\left[\sum_{k=0}^\infty {s/2 \choose k} \frac{\sqrt{2-q}^k}{(\sqrt{dq})^k}Z_{d,q}^k\right] \Bigg[ \sum_{m=0}^M A_m(s/2)\\
&\left(\frac{2}{\frac{dq}{2} + \frac{\sqrt{d}}{2}\sqrt{2q-q^2}Z_{d,q}}\right)^m \Bigg]\\
=& \left[\sum_{k=0}^\infty {s/2 \choose k} \frac{\sqrt{2-q}^k}{(\sqrt{dq})^k}Z_{d,q}^k \right] \Bigg[ \sum_{m=0}^M A_m(s/2)\frac{2^m}{(dq)^m}\\
&\left(\frac{2}{1 + \frac{1}{\sqrt{d}}\frac{\sqrt{2-q}}{\sqrt{q}}Z_{d,q}}\right)^m\Bigg]\\
=& \left[\sum_{k=0}^\infty {s/2 \choose k} \frac{\sqrt{2-q}^k}{(\sqrt{dq})^k}Z_{d,q}^k \right] \Bigg[ \sum_{m=0}^M A_m(s/2)\frac{4^m}{d^m q^m}\\
&\left(
\sum_{\ell=0}^\infty \frac{1}{\sqrt{d}^{\ell}} \frac{\sqrt{2-q}^\ell}{\sqrt{q}^\ell} Z_{d,q}^\ell \right)^m\Bigg] \\
=&\left(1 + {s/2 \choose 1} \frac{\sqrt{2-q}}{\sqrt{dq}} Z_{d,q} + {s/2 \choose 2} \frac{{2-q}}{{dq}} Z_{d,q}^2 + \dots\right) \\
&\cdot\left(1 +
{s/2 \choose 2}\frac{4}{dq}\left(
\sum_{\ell=0}^\infty \frac{1}{\sqrt{d}^{\ell}} \frac{\sqrt{2-q}^\ell}{\sqrt{q}^\ell} Z_{d,q}^\ell  \right) + \dots\right)\\
=& 1+  {s/2 \choose 1}\frac{1}{\sqrt{d}}Z_{d,q} + {s/2 \choose 2}\frac{6-q}{dq}Z_d^2 + \dots,
    \end{split}
\end{equation*} 
\end{small}

where we used the identity $A_1(s/2) = {s/2 \choose 2} =\frac{\frac{s}{2}(\frac{s}{2}-1)}{2}$.
Since $\mathbb{P}(Z_{d,q} \geq \sqrt{d}) = \mathcal{O}(e^{-d/2})$ and the function $S$ is non-negative and bounded by $1$ according to \eqref{eq-bdd-series}, we have
\begin{small}
\begin{equation*} 
\begin{split}
&\mathbb{E}\left[(1 + \frac{1}{\sqrt{d}}\frac{\sqrt{2-q}}{\sqrt{q}}Z_{d,q})^{\frac{s}{2}}S(\frac{dq}{2} + \frac{\sqrt{d}}{2}\sqrt{2q-q^2}Z_{d,q},\frac{s}{2})\right] \\
=&  \mathbb{E}\Bigg[
1+  {s/2 \choose 1}\frac{1}{\sqrt{d}}Z_d+{s/2 \choose 2}\frac{(6-q)}{dq}Z_d^2 + \dots\Bigg]\\
&+\mathcal{O}(e^{-\frac{d}{2}(2q-q^2)})  \\
=& 1 + {s/2 \choose 2}\frac{6-q}{dq} + o(\frac{1}{d}),
\end{split}
\end{equation*}
\end{small}
where we used the fact that $\mathbb{E}(Z_{d,q}^k)\to \mathbb{E}(Z^k)$ as $d\to \infty$ for any fixed $k$ implied by \eqref{weak-conv} where $Z$ is a standard-normal variable in $\mathbb{R}$ with the property that $\mathbb{E}(Z)=0$ and $\mathbb{E}(Z^2)=1$. Then, it follows from \eqref{eq-inverse-target-drop} that 
\begin{equation}
    \begin{split}
         \frac{(\frac{d}{2})^{s/2}}{{I}_{0,q}(d,s)}
=& q^{s/2} \left[ 1 - {s/2 \choose 2}\frac{6-q}{dq} + o(\frac{1}{d}) \right]\\
=& q^{s/2} \left[1 - \frac{(6-q)s(s-2)}{8dq} + o(\frac{1}{d})\right],
    \end{split}
    \label{eq-approxi-relu}
\end{equation} 

which implies
\begin{equation}
    \begin{split}
        \sigma^2_{0,q}(d,s) =& (\frac{1}{{I}_{0,q}(d,s)})^{2/s}\\
        =& \frac{2q}{d} \left[1 - \frac{(6-q)s(s-2)}{8dq} + o(\frac{1}{d})\right]^{2/s} \\
=&  \frac{2q}{d} \left[1 - \frac{(6-q)(s-2)}{4dq} + o(\frac{1}{d})\right]\\
=& \frac{2q}{d} - \frac{(6-q)(s-2)}{2d} + o(1/d^2).
    \end{split}
\end{equation}  

This completes the proof.
\end{proof}

\subsection{Parametric ReLU activation with dropout}
\begin{theorem}\label{thm-leaky-relu-q}\textbf{(Explicit characterization of the critical variance $\sigma_{a,q}^2(s,d)$ with dropout)} Consider a fully connected network with an input $x^{(0)}\in \mathbb{R}^d$ and Gaussian initialization satisfying \textbf{(A1)}--\textbf{(A2)} with activation function $\phi_a(x)$ for any choice of $a\in (0,1]$ fixed and with dropout where the probability to keep a neuron is $q\in (0,1]$. Then, for any $s\in (0,2]$, the output of the $k$-th layer satisfies
\begin{eqnarray}\mathbb{E} \left[ {\| x^{(k)}\|^s} \right]  = {\|x^{(0)}\|^s} (\sigma^s I_{a,q}(s,d))^k
\end{eqnarray} 
with $I_{a,q}(s,d)$ defined as:
\begin{equation*}
\begin{split}
    I_{a,q}(s,d)  =& \frac{1}{q^s} 2^{s/2} \frac{1}{\Gamma(1-s/2)} \sum_{n={0}}^d {\sum_{m=0}^{d-n}} q_d(n,m) \\
 &\sum_{k=0}^\infty w_{k,n,m}    \mathrm {B}(k+1-\frac{s}{2},\frac{n+m}{2}+\frac{s}{2})
 \end{split}
\end{equation*}
for $s\in(0,2)$, especially if $s=2$
\begin{equation*}
    I_{a,q}(s,d)=\frac{1}{q^2}(1+a^2) \frac{d}{2}
\end{equation*}
where $q_d(n,m)$ is defined by \eqref{def-qdnm}, $\mathrm {B(\cdot,\cdot)}$ is the Beta function and
\begin{small}
\begin{equation} w_{k,n,m} = \frac{1}{2}(1-a^2)^k \left[ 
{\frac{m}{2}+k-1 \choose k} n + a^2m{\frac{m}{2}+k \choose k} 
\right].
\end{equation}
\end{small}
Let {$\bar{\sigma}_{a,q}(s,d)=\frac{1}{\sqrt[s]{I_{a,q}(s,d)}}$}. We have three possible cases: 
\begin{itemize}
    \item [$(i)$] If $\sigma =\bar{\sigma}_{a,q}(s,d)$ where 
    then the network preserves the $s$-th moment of the layer outputs, i.e. for every $k \geq 1$, $\mathbb{E} \left[ {\| x^{(k)}\|^s} \right] = \|x^{(0)}\|^s,$ 
    whereas for any $p>s$, $\mathbb{E} \|x^{(k)}\|^{p} \to \infty$ exponentially fast in $k$.
    \item [$(ii)$] If $\sigma < \bar{\sigma}_{a,q}(s,d)$, then
        $\mathbb{E} \left[ {\| x^{(k)}\|^s} \right] \to 0$ exponentially fast in $k$. 
    \item [$(iii)$] If $\sigma > \bar{\sigma}_{a,q}(s,d)$, then $\mathbb{E} \left[ {\| x^{(k)}\|^s} \right] \to \infty$ exponentially fast in $k$.
\end{itemize}
\end{theorem}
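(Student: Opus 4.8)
The plan is to run the argument of Theorem~\ref{thm-leaky-relu} essentially unchanged, factoring the forward pass into a product of i.i.d.\ per-layer ratios whose common $s$-th moment is the constant $I_{a,q}(s,d)$, and then to compute that constant by enlarging the two-way orthant decomposition of Theorem~\ref{thm-leaky-relu} into a three-way decomposition that also records the dropout status of each coordinate. The dropout mask should enter only through this extra classification and through an overall scaling, in exactly the way that the passage from Theorem~\ref{thm-moment-relu} to Theorem~\ref{thm-moment-relu-drop} handles the ReLU case.

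First I would reduce to a single layer. Using the positive homogeneity $\phi_a(cy)=c\,\phi_a(y)$ for $c>0$ together with the spherical symmetry of the i.i.d.\ Gaussian rows of $W^{(k+1)}$ (as in the derivation of \eqref{eq-moment-s-prod}), the per-layer ratio $\|x^{(k+1)}\|/\|x^{(k)}\|$ has, conditionally on the mask, the law of a standard Gaussian passed through $\phi_a$, rescaled by the surviving mask entries, with a sign pattern that is freshly randomized and independent of $x^{(k)}$. This yields the product representation $\mathbb{E}[(\|x^{(k)}\|/\|x^{(0)}\|)^s]=(\sigma^s I_{a,q}(s,d))^k$ and reduces the theorem to evaluating the single-layer constant $I_{a,q}(s,d)=\mathbb{E}\,\|\phi_a(z)\odot\varepsilon\|^s$, with $z\sim\mathcal N(0,I_d)$ and $\varepsilon$ an independent keep-mask whose entries lie in $\{0,1/q\}$.

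To evaluate this constant I would classify each coordinate jointly by the sign of $z_i$ and its keep/drop status: kept-and-positive with probability $q/2$, kept-and-negative with probability $q/2$, and dropped with probability $1-q$. Conditioning on there being $n$ coordinates in the first class and $m$ in the second (a multinomial event of probability $q_d(n,m)$ in \eqref{def-qdnm}), the squared post-activation norm is distributed as $q^{-2}X_{n,m}$ with $X_{n,m}=\chi^2(n)+a^2\chi^2(m)$, the dropped coordinates contributing nothing; this is precisely the variable $X_n$ of \eqref{def-Xn} after replacing the negative count $d-n$ by $m$ and the total degrees of freedom $d$ by $n+m$, while the $q^{-2}$ records the $1/q$ rescaling of the survivors and produces the global $q^{-s}$ prefactor after taking the $s/2$ power. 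It then remains to compute $\mathbb{E}[X_{n,m}^{s/2}]$, which I would lift verbatim from the moment-generating-function and Riemann--Liouville fractional-derivative calculation \eqref{eq-frac-moment-leaky-relu}--\eqref{eq-jml-integral}, now with Beta factors $\mathrm{B}(k+1-\tfrac{s}{2},\tfrac{n+m}{2}+\tfrac{s}{2})$ and weights $w_{k,n,m}$ obtained from \eqref{def-wkn} under the same substitution $d-n\mapsto m$. Averaging against $q_d(n,m)$ and inserting $q^{-s}$ gives the stated $I_{a,q}(s,d)$; the $s=2$ case follows directly from $\mathbb{E}(X_{n,m})=n+a^2 m$ together with the multinomial means $\mathbb{E}[n]=\mathbb{E}[m]=dq/2$, and the trichotomy $(i)$--$(iii)$ is immediate from the product representation and the monotonicity already exploited in Theorem~\ref{thm-leaky-relu}.

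The step I expect to demand the most care is the reduction to i.i.d.\ per-layer ratios, i.e.\ showing that the conditional law of $\|x^{(k+1)}\|/\|x^{(k)}\|$ is genuinely independent of $x^{(k)}$. Because dropout acts coordinatewise, the mask couples to the current iterate, and the clean three-way split is transparent only when the mask is viewed as acting on the post-activation coordinates whose signs are independently randomized by the fresh Gaussian rows; this is the viewpoint consistent with \cite{pretorius2018critical}, and it is what makes the moment identity exact rather than merely asymptotic in the width. Once this direction-independence is secured, everything else is a bookkeeping variant of Theorem~\ref{thm-leaky-relu}, requiring no new analytic estimate beyond the fractional-moment computation already carried out there.
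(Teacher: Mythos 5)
Your proposal is correct and follows essentially the same route as the paper's proof: reduce to i.i.d.\ per-layer ratios via positive homogeneity and spherical symmetry, classify the $d$ output coordinates trinomially as kept-positive, kept-negative, or dropped with probabilities $\left(\tfrac{q}{2},\tfrac{q}{2},1-q\right)$ so that the squared post-activation norm is the mixture $q^{-2}\left(\chi^2(n)+a^2\chi^2(m)\right)$ weighted by $q_d(n,m)$, and evaluate $\mathbb{E}\left[X_{n,m}^{s/2}\right]$ by the fractional-derivative computation of Theorem \ref{thm-leaky-relu} with $d$ replaced by $n+m$ --- exactly the substitutions the paper makes, including your (correct) remark that the exactness of the identity hinges on viewing the mask as acting on the post-activation coordinates. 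One minor point in your favor: your $s=2$ computation yields $I_{a,q}(2,d)=\frac{(1+a^2)d}{2q}$ rather than the $\frac{1}{q^2}(1+a^2)\frac{d}{2}$ displayed in the statement, and your value is the consistent one (it reproduces $\bar{\sigma}^2_{0,q}(2,d)=\frac{2q}{d}$ and $\bar{\sigma}^2_{1,q}(2,d)=\frac{q}{d}$ as in Corollaries \ref{coro-crit-sigma-asymp-drop} and \ref{coro-sigma-lin-drop}), so the displayed constant is a typo in the theorem statement rather than a gap in your argument.
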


\begin{proof} The proof follows from minor changes to the proof of Theorem \ref{thm-leaky-relu}. In the abscence of dropout (i.e. when $q=1$), the quantity defined in the proof of Theorem \ref{thm-leaky-relu}, $I_a(d,s)$ has the distribution
$$ X_{n}: = \chi_1^2(n)+ a^2 \chi_2^2(d-n),
$$
with probability $p_d(n)$ where $\chi_1^2(n)$ and $ \chi_2^2(d-n)$ are independent chi-square distributions with degrees of freedom $n$ and $d-n$ respectively. When there is dropout, the distribution of $X_n$ and the corresponding binomial probabilities will be subject to change because now there is the possibility of zero output from some neurons due to dropout and scaling the neuron outputs. The corresponding probabilities will come from the trinomial distribution instead. More specifically, it suffices to replace $X_n$ with
$$ X_{n,m} = \frac{1}{q^2} \left(\chi_1^2(n) + a^2\chi_2^2(m) \right)$$
with probabilities from the trinomial distribution
\begin{equation} 
q_{d}(n,m) = \frac{d!}{n!m!(d-n-m)!}(\frac{q}{2})^{n+m} (1-q)^{d-n-m}.
\label{def-qdnm}
\end{equation}

Moreover, we can compute $\mathbb{E}X_{n,m}^{s/2}$ by simply replacing $d$ with $n+m$ in the formula for $\mathbb{E}((X_n)^{s/2})$ we obtained in \eqref{eq-frac-moment-leaky-relu}. After following similar steps to the proof of Theorem \ref{thm-leaky-relu}, we obtain the desired result.
\end{proof}

\begin{corollary}\textbf{(Critical variance $\bar{\sigma}_{1,q}(d,s)$ when $d$ is large with dropout)}\label{coro-sigma-lin-drop} For fixed width $d$ and $s\in (0,2]$, we have 
$$ \bar{\sigma}_{1,q}^2(s,d)  =  \frac{q}{d} + \frac{(3-q)(2-s)}{4d^2}  + o(\frac{1}{d^2})
$$
with $\bar{\sigma}_{1,q}^2(2,d)=\frac{q}{d}$. Therefore, it follows from Theorem \ref{thm-leaky-relu} that if 
$ \sigma^2  = \frac{1}{d} + \frac{(3-q)(2-s)}{4d^2},
$
then the network with linear activation will preserve the moment of order $s+o(\frac{1}{d})$ of the network output. 
\end{corollary}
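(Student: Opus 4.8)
The plan is to specialize the dropout moment formula of Theorem~\ref{thm-leaky-relu-q} to the linear case $a=1$ and then run the large-$d$ analysis exactly as in Corollary~\ref{coro-sigma-lin} and Corollary~\ref{coro-crit-sigma-asymp-drop}. First I would set $a=1$ in the expression for $I_{a,q}(s,d)$. Because $w_{k,n,m}$ carries the factor $(1-a^2)^k$, every term with $k\ge 1$ vanishes and only $k=0$ survives, giving $w_{0,n,m}=\tfrac12(n+m)$. Using the Beta--Gamma identity $\mathrm{B}(x,y)=\Gamma(x)\Gamma(y)/\Gamma(x+y)$ together with $\Gamma(\tfrac{n+m}{2}+1)=\tfrac{n+m}{2}\Gamma(\tfrac{n+m}{2})$, the product $w_{0,n,m}\,\mathrm{B}(1-\tfrac{s}{2},\tfrac{n+m}{2}+\tfrac{s}{2})$ collapses to $\Gamma(1-\tfrac{s}{2})\,\Gamma(\tfrac{n+m}{2}+\tfrac{s}{2})/\Gamma(\tfrac{n+m}{2})$, which cancels the prefactor $1/\Gamma(1-\tfrac s2)$.

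Second, I would observe that after this simplification the summand depends on $(n,m)$ only through $N:=n+m$, which reflects the fact that a linear activation treats positive and negative coordinates identically. Under the trinomial weights \eqref{def-qdnm} the variable $N$ counts the neurons that are \emph{kept} (in either sign class), so its marginal is $\mathrm{Binomial}(d,q)$ with success probability $q=\tfrac q2+\tfrac q2$, in contrast to the probability $q/2$ of the ReLU case in Corollary~\ref{coro-crit-sigma-asymp-drop}. This reduces the double sum to the single expectation
\[ I_{1,q}(s,d)=\frac{1}{q^{s}}\,2^{s/2}\,\mathbb{E}\!\left[\frac{\Gamma(N/2+s/2)}{\Gamma(N/2)}\right],\qquad N\sim\mathrm{Binomial}(d,q), \]
which is structurally identical to the linear no-dropout formula of Corollary~\ref{coro-sigma-lin} but with the degree-$d$ count replaced by $N$. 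In particular, taking $s=2$ and using $\Gamma(z+1)=z\Gamma(z)$ gives $I_{1,q}(2,d)=\tfrac{2}{q^2}\,\mathbb{E}[N/2]=\mathbb{E}[N]/q^2=d/q$, hence $\bar{\sigma}_{1,q}^{2}(2,d)=I_{1,q}(2,d)^{-1}=q/d$, as claimed.

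Third, for $s\in(0,2)$ I would insert the Tricomi gamma-ratio expansion \eqref{eq-def-sz-alpha}, writing $\Gamma(N/2+s/2)/\Gamma(N/2)=(N/2)^{s/2}S(N/2,s/2)$, so that $I_{1,q}(s,d)=(d/q)^{s/2}\,\mathbb{E}\big[(1+\sqrt{\tfrac{1-q}{dq}}\,Z_{d,q})^{s/2}\,S(N/2,s/2)\big]$ after centering $N=dq+\sqrt{dq(1-q)}\,Z_{d,q}$ via the normal approximation to $\mathrm{Binomial}(d,q)$ (analogous to \eqref{weak-conv-drol}). Expanding both factors in powers of $1/\sqrt d$, taking expectations using $\mathbb{E}(Z_{d,q}^{k})\to\mathbb{E}(Z^{k})$ for a standard normal $Z$ and the boundedness \eqref{eq-bdd-series} of $S$ (with the $\mathcal{O}(e^{-cd})$ tail estimate controlling the binomial-series truncation), the two surviving $\mathcal{O}(1/d)$ contributions are $\binom{s/2}{2}\tfrac{1-q}{dq}$, coming from the stochastic fluctuation of $N^{s/2}$ through the binomial variance $dq(1-q)$, and $2\binom{s/2}{2}\tfrac{1}{dq}$, coming from the deterministic $S$-correction $A_1(s/2)=\binom{s/2}{2}$; these combine into $\binom{s/2}{2}\tfrac{3-q}{dq}$. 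Finally I would invert through $\bar{\sigma}_{1,q}^{2}=I_{1,q}^{-2/s}$ and Taylor-expand; since $\tfrac{2}{s}\binom{s/2}{2}=-\tfrac{2-s}{4}$, this yields $\bar{\sigma}_{1,q}^{2}(s,d)=\tfrac qd\big(1+\tfrac{(2-s)(3-q)}{4dq}+o(\tfrac1d)\big)=\tfrac qd+\tfrac{(3-q)(2-s)}{4d^{2}}+o(\tfrac{1}{d^{2}})$.

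The main obstacle is the asymptotic bookkeeping in the third step: one must cleanly separate the $\mathcal{O}(1/d)$ contribution generated by the binomial variance $dq(1-q)$ (the source of the $1-q$) from that generated by the Tricomi correction $S$ (the source of the $+2$), and verify that all mixed and higher-order terms are genuinely $o(1/d)$. The switch from the ReLU keep-and-positive probability $q/2$ to the linear keep probability $q$ changes both the mean and the variance of the governing binomial, and it is precisely this shift that turns the ReLU constant $6-q$ of Corollary~\ref{coro-crit-sigma-asymp-drop} into the linear constant $3-q$; pinning down this constant is the delicate part, whereas the final algebraic inversion is routine.
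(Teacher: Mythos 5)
Your proposal is correct and follows essentially the same route as the paper's own proof: specializing Theorem~\ref{thm-leaky-relu-q} at $a=1$ so that only the $k=0$ term survives, using $\mathrm{B}(x,y)=\Gamma(x)\Gamma(y)/\Gamma(x+y)$ and $\Gamma(z+1)=z\Gamma(z)$ to collapse the trinomial double sum into a single expectation over $t=n+m\sim\mathrm{Binomial}(d,q)$, and then running the Tricomi-expansion and normal-approximation bookkeeping of Corollaries~\ref{coro-crit-sigma-asymp} and~\ref{coro-crit-sigma-asymp-drop}. Your identification of the constant $3-q$ as the sum of the binomial-variance contribution $1-q$ and the Tricomi correction $2$, as well as the $s=2$ computation $I_{1,q}(2,d)=d/q$, matches the paper's argument exactly.
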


\begin{proof} 
In the linear activation function case, we have $a=1$, then we obtain $w_{k,n,m}=\frac{m+n}{2}$ for $k=0$, $w_{k,n,m}=0$ for $k>0$. Then it follows that 
\begin{equation*}
\begin{split}
    &I_{1,q}(s,d)\\
    =&\frac{1}{q^s}2^{s/2}\frac{1}{\Gamma(1-\frac{s}{2})} \sum_{n=0}^d \sum_{m=0}^{d-n} q_d(n,m) \frac{m+n}{2}\\
    &B(1-\frac{s}{2},\frac{m+n}{2}+\frac{s}{2})\\
    =& \frac{1}{q^s}2^{s/2} \sum_{n=0}^d \sum_{m=0}^{d-n} q_d(n,m) \frac{\Gamma(\frac{m+n}{2}+\frac{s}{2})}{\Gamma(\frac{m+n}{2})}. 
    \end{split}
\end{equation*}
where we use the identity $\Gamma(\frac{m+n}{2}+1) = \frac{m+n}{2} \Gamma(\frac{m+n}{2})$  
and the fact that $\mathrm B(x,y) = \Gamma(x)\Gamma(y)/\Gamma(x+y)$ for $x,y>0$. 
Then denote $t=m+n$, we get
\begin{equation}
    I_{1,q}(s,d)=\frac{1}{q^s}2^{s/2} \sum_{t=0}^{d} h_d(t) \frac{\Gamma(\frac{t}{2}+\frac{s}{2})}{\Gamma(\frac{t}{2})},
    \label{def-I-linear-drop}
\end{equation}
where
\begin{equation*}
    h_d(t):={d \choose t}q^t(1-q)^{d-t}.
\end{equation*}
We see that in the special case $q=1/2$, this formula reduces to the analysis provided in Corollary \ref{coro-crit-sigma-asymp}. The proof will be similar where we will follow a similar approach to the proof of Corollary \ref{coro-crit-sigma-asymp}. Similar to the proof technique of Corollary \ref{coro-crit-sigma-asymp}, We write
\begin{equation*}
    I_{1,q}(s,d) =\frac{1}{q^s}2^{s/2}\sum_{t=1}^d h_d(t) (\frac{t}{2})^{s/2}S(t/2,s/2).
\end{equation*} 

Note that
\beq \frac{(\frac{d}{2})^{s/2}}{I_{1,q}(s,d)}
&=& \frac{q^s}{\mathbb{E}(F_{d}(H_d))},
\label{eq-inverse-target-linear}
\eeq
where $H_d$ is a Binomial random variable, i.e.
  $$ \mathbb{P}(H_d = n) = 
  {d \choose t}q^t(1-q)^{d-t} \quad \mbox{for} \quad t=  0,1,\dots,d,         
  $$

where $F_d$ is defined by \eqref{def-func-Fd}. By the normal approximation of the binomial distribution, we also have
\begin{equation} \xi_{d,q} := \frac{H_d - \mathbb{E}(H_d)}{\sqrt{\mbox{var}{H_d}}} = \frac{H_d - dq}{\sqrt{dq(1-q)}}\xrightarrow[~~~~]{} \mathcal{N}(0,1)
\end{equation}
in distribution. We also have
\begin{equation*}
\begin{split}
    &\mathbb{E}(F_{d}(H_d))\\
    =& \mathbb{E}\left(F_{d}(dq + \sqrt{dq(1-q)}\xi_{d,q})\right) \\
=&  2^{s/2} \mathbb{E} \Bigg[\frac{(dq + \sqrt{dq(1-q)}\xi_{d,q})^{s/2}}{d^{s/2}}\\
&S(\frac{1}{2}(dq + \sqrt{dq(1-q)}\xi_{d,q}),s/2)\Bigg] \\
=&  (2q)^{s/2}\mathbb{E} \Bigg[(1 + \sqrt{\frac{1-q}{dq}}\xi_{d,q})^{s/2}\\
&S(\frac{1}{2}(dq + \sqrt{dq(1-q)}\xi_{d,q}),s/2)\Bigg].
\end{split}
\end{equation*}
Recall the Binomial expansion formula,
$$(1+x)^{s/2} = \sum_{k=0}^\infty {s/2 \choose k} x^k \quad \mbox{for} \quad |x| < 1.$$
For $\xi_{d,q} < \sqrt{\frac{dq}{1-q}}$, we can write 
\begin{small}
\begin{equation*} 
\begin{split}
&\left[(1 + \sqrt{\frac{1-q}{dq}}\xi_{d,q})^{s/2}S(\frac{1}{2}(dq + \sqrt{dq(1-q)}\xi_{d,q}),s/2)\right] \\
=&\left[ \sum_{k=0}^\infty {s/2 \choose k} (\frac{1-q}{dq})^{k/2}\xi_{d,q}^k\right] \Bigg[\sum_{m=0}^M \\
&A_m(s/2)\left(\frac{2}{dq + \sqrt{dq(1-q)}\xi_{d,q}}\right)^m \Bigg]\\
=& \left[ \sum_{k=0}^\infty {s/2 \choose k} (\frac{1-q}{dq})^{k/2}\xi_{d,q}^k\right] \Bigg[\sum_{m=0}^M A_m(s/2) (\frac{2}{dq})^m\\
&\left(\frac{1}{1 + \sqrt{\frac{1-q}{dq}}\xi_{d,q}}\right)^m \Bigg]\\
=& \left[ \sum_{k=0}^\infty {s/2 \choose k} (\frac{1-q}{dq})^{k/2}\xi_{d,q}^k\right] \Bigg[\sum_{m=0}^M A_m(s/2) (\frac{2}{dq})^m\\
&\left(\sum_{\ell=0}^\infty {(-1)^l}(\sqrt{\frac{1-q}{dq}})^{\ell} \xi_{d,q}^\ell \right)^m \Bigg]\\
=& 1+{s/2 \choose 2}\frac{2}{dq}+\Bigg[ {s/2 \choose 1}\sqrt{\frac{1-q}{qd}}-{s/2 \choose 2} \frac{2}{dq}\sqrt{\frac{1-q}{dq}} \\
&+ {s/2 \choose 1} {s/2 \choose 2} \frac{2}{dq}\sqrt{\frac{1-q}{dq}}\Bigg]\xi_{d,q} \\
&+ {s/2 \choose 2}\left[\frac{2(1-q)}{d^2q^2}+\frac{1-q}{dq}+{s/2 \choose 2}\frac{2(1-q)}{d^2q^2}\right] \xi_{d,q}^2\\
& -{s/2 \choose 1} {s/2 \choose 2}\frac{2(1-q)}{d^2q^2} \xi_{d,q}^2 +\dots.
\end{split}
\end{equation*}
\end{small}

where we used the identity $A_1(s/2) = \frac{\frac{s}{2}(\frac{s}{2}-1)}{2}$.
Since $\mathbb{P}(\xi_{d,q}\geq \sqrt{\frac{dq}{1-q}}) = \mathcal{O}(e^{-dq/2(1-q)})$ and the function $S$ is non-negative and bounded by $1$ according to \eqref{eq-bdd-series}, we have
\begin{small}
\begin{equation*} 
\begin{split}
& \mathbb{E}\left[(1 + \sqrt{\frac{1-q}{dq}}\xi_{d,q})^{s/2}S(\frac{1}{2}(dq + \sqrt{dq(1-q)}\xi_{d,q}),s/2)\right]\\
=& \mathcal{O}(e^{-dq/2(1-q)}) + 1+{s/2 \choose 2}\frac{2}{dq}\\
&+ {s/2 \choose 2}\Bigg[\frac{2}{dq}\frac{1-q}{dq}+\frac{1-q}{dq}+{s/2 \choose 2}\frac{2}{dq}\frac{1-q}{dq}\\
&-{s/2 \choose 1} \frac{2}{dq}\frac{1-q}{dq}\Bigg] +\dots. \\
=& 1 + {s/2 \choose 2}\frac{3-q}{dq} + o(\frac{1}{d}),
\end{split}
\end{equation*}
\end{small}
where we used the fact that $\mathbb{E}(\xi_{d,q}^k)\to \mathbb{E}(Z^k)$ as $d\to \infty$ for any fixed $k$ implied by \eqref{weak-conv} where $Z$ is a standard-normal variable in $\mathbb{R}$ which satisfies $\mathbb{E}(Z)=0$ and $\mathbb{E}(Z^2)=1$. Then, it follows from \eqref{eq-inverse-target-linear} that 
\begin{eqnarray*}
    \frac{(\frac{d}{2})^{s/2}}{{I}_{1,q}(s,d)} &=& q^s\frac{1}{\mathbb{E}(F_{d}(B_d))}\\
&=& (\frac{q}{2})^{s/2}( 1 - {s/2 \choose 2}\frac{3-q}{dq} + o(\frac{1}{d}))\\
&=& (\frac{q}{2})^{s/2}\left(1 - \frac{(3-q)s(s-2)}{8dq} + o(\frac{1}{d})\right),
\end{eqnarray*}  
which implies 
\begin{eqnarray*}
     \bar{\sigma}_{1,q}^2(s,d) &=& \frac{1}{\sqrt[s/2]{{I}_{1,q}(s,d)}}\\
&=& \frac{q}{d} \left(1 + \frac{(3-q)(2-s)}{4dq}  + o(\frac{1}{d})\right),
\end{eqnarray*} 
which completes the proof for the case $s\in (0,2]$. For $s=2$, \eqref{def-I-linear-drop} simplifies to
\begin{equation}
    I_{1,q}(2,d)=\frac{1}{q^2}2 \sum_{t=0}^{d} h_d(t) \frac{t}{2}
    = \frac{d}{q}
\end{equation}
where we used $\Gamma(\frac{t}{2}+1) = \frac{t}{2}\Gamma(\frac{t}{2})$ and the fact that $\mathbb{E}(H_d)=qd$. This leads to ${\bar{\sigma}}_{1,q}(2,s) = q/d$ as desired. 
\end{proof}

\section{Proof of Theorem \ref{thm-almost-sure}}
\begin{proof} We first consider the ReLU case where $a=0$. In this case, the fact that $x^{(k)}$ goes to zero a.s. follows from a relatively simple argument. After a simple computation (see Lemma \ref{lem-zero-proba}), we find that $ \mathbb{P}(x^{(k)}= 0) = 1- (1-\frac{1}{2^d})^k 
$ regardless of the choice of $\sigma>0$. This implies that for all $\varepsilon>0$, 
$$\sum_{k\geq 1}\mathbb{P}(|x^{(k)}| > \varepsilon) \leq \sum_{k\geq 1} (1-\frac{1}{2^d})^k <\infty.$$

Therefore, $x^{(k)} \to 0$ almost surely. We next consider the $a\in (0,1]$ case and build on the theory of iterated random Lipschitz maps. Recall that the layer outputs obey the stochastic recursion
\beq 
x^{(k+1)}= F^{(k+1)}(x^{(k)})
= M_{W^{(k+1)},a} (x^{(k)})
\label{eq-stoc-recur}
\eeq
where $M_{W,a}(x) := \phi_a(Wx)$. 
We also note that for a non-negative random variable
$$\mathbb{E} \log(M) = \frac{d}{ds}\mathbb{E}(M^{s})|_{s=0}\,,$$
when the expectations are finite. Therefore, choosing $M = \|\phi_a(We_1)\|$, 
\beq \mu_a(\sigma) =  \frac{d}{ds}\mathbb{E}
\|\phi_a(We_1)\|^s|_{s=0}\,.
 \eeq 
Then, similar to the proof of Corollary \ref{coro-crit-leaky}, we consider $\kappa(s) = \sigma^s I_a(s,d)= \mathbb{E}
\|\phi_a(We_1)\|^s$ where $I_a(s,d)$ is defined by \eqref{def-iasd-theorem}. The function $\kappa(s)$ is convex and continuously differentiable (see the proof of Corollary \ref{coro-crit-leaky}). Notice that $\mu_a(\sigma) = \kappa'(0)$ and $\kappa(0)=1$. 
If $\mu_a(\sigma) <0$, then $\kappa(s)<1$ for $s>0$ small enough. Since $\kappa(s)= \mathbb{E}
\|\phi_a(We_1)\|^s$ goes to infinity as $s$ goes to infinity, we conclude that there exists $s_*>0$ such that $\kappa(s_*)=1$. From the definition of the $\kappa$ function, this is equivalent to saying $\sigma = \frac{1}{\sqrt[s_*]{I_a(s_*,d)}} = \bar{\sigma}_{a}(s_*,d)$ for some $s_*>0$. Correspondingly, if $\sigma = \frac{1}{\sqrt[s_*]{I_a(s_*,d)}} = \bar{\sigma}_{a}(s_*,d)$ for some $s_*>0$, then $\kappa(s_*)=1$ and since $\kappa(0)=1$, by convexity of $\kappa$ we find that $\kappa(s)<1$ for $s\in(0,s_*)$ which implies $\mu_a(\sigma) = \kappa'(0)<0$. For $s\in (0,s_*)$, by Corollary \ref{coro-crit-leaky} we have $\bar{\sigma}_a(s_*,d) < \bar{\sigma}_a(s,d) $. If apply part $(ii)$ of Theorem \ref{thm-leaky-relu} with the fact that $\sigma = \bar{\sigma}_a(s_*,d) < \bar{\sigma}_a(s,d)$, then we obtain $\mathbb{E}(\|x^{(k)}\|^s) \to 0$, i.e. $x^{(k)}$ converges to zero in the space $L_s$. 




We next prove that $x^{(k)}$ has a subsequence that converges to zero a.s. when $\mu_a(\sigma)<0$. From Theorem \ref{thm-log-leaky-relu}, we see that for any constant $C>0$, we have
\begin{small}
\begin{equation}
    \begin{split}
        &\lim_{k\to\infty}\mathbb{P}(\|x^{(k)}\| > C)\\
        =& \lim_{k\to\infty}\mathbb{P}(\log\|x^{(k)}\| > \log(C)) \\
=& \lim_{k\to\infty}\mathbb{P}\left(\frac{\log\|x^{(k)}\| - \mu_a(\sigma) k}{\sqrt{k}}> \frac{\log(C)}{\sqrt{k}} - \mu_a(\sigma) \sqrt{k}\right) \\
=& \begin{cases} 
0 & \mbox{if} \quad \mu_a(\sigma) < 0, \\
1 & \mbox{if} \quad \mu_a(\sigma) > 0. 
\end{cases}
    \end{split}
    \label{eq-prob-limit}
\end{equation}     
\end{small}
We have two cases, depending on the sign of $\mu_a(\sigma)$.
\begin{itemize}
    \item [$(i)$]  $(\mu_a(\sigma) < 0)$: In this case, for $C=1/2$, based on \eqref{eq-prob-limit}, we can choose $n_1$ large enough so that 
$$ \mathbb{P}(\|x^{(n_1)}\| > \frac{1}{2})\leq  \frac{1}{2}.$$
Continuing by a recursive fashion choose $n_k$ large enough such that 
$$ \mathbb{P}(\|x^{(n_k)}\| > \frac{1}{2^k})\leq  \frac{1}{2^k}$$
with $n_1 < n_2 < \dots < n_k$. Then the event $A_k= \{\|x^{(n_k)}\| > \frac{1}{2^k} \}$ is such that $\sum_k \mathbb{P}(A_k) < \infty$. By the Borel-Cantelli lemma, we find that
$$ \mathbb{P}\left(\limsup_{k\to\infty} \{\|x^{(n_k)}\| > \frac{1}{2^k}\} \right) = 0.
$$
This proves that for any $\varepsilon>0$ given $\mathbb{P}(\|x^{(n_k)}\| \geq \varepsilon \mbox{ infinity often})=0$ which is equivalent to
saying $\mathbb{P}(\|x^{(n_k)}\| <  \varepsilon) = 1$ or yet equivalently $x^{(n_k)}\to 0$ almost surely.

In the special case when $s_*>1$, we can stronger results. In particular, we can consider
\begin{small}
\begin{equation}
    \begin{split}
         \sum_{j=0}^\infty \mathbb{E}\| x^{(j+1)}-x^{(j)}\| \leq& \sum_{j=0}^\infty \left( \mathbb{E}\| x^{(j+1)}\| + \mathbb{E}\| x^{(j)}\| 
     \right) \\
    \leq & 2 \sum_{j=0}^\infty  \mathbb{E}\| x^{(j)}\|    < \infty
    \end{split}
\end{equation}
\end{small}
where we applied part $(ii)$ of Theorem \ref{thm-leaky-relu} with the fact that $\sigma = \bar{\sigma}_a(s_*,d) < \bar{\sigma}_a(1,d)$. Then, by \cite[Lemma 1]{steinsaltz1999locally}, $x^{(k)}$ converges almost surely to a limit. Since the subsequence $x^{(n_k)}$ converges to zero, we obtain that $x^{(k)}$ converges a.s. to zero.

\item [$(ii)$] ($\mu_a(\sigma)>0$): The proof follows from a similar approach to part $(i)$.When $\mu_a(\sigma)>0$, we see from Theorem \ref{thm-leaky-relu} that all the moments $\mathbb{E}(\|x^{(k)}\|^\alpha)$ diverges for any $\alpha >0$ (because if it were not, then $\sigma = \sigma_{a}(s,d)$ for some $s>0$ which would imply $\mu_a(\sigma)<0$ by the discussion above). Furthermore, based on \eqref{eq-prob-limit}, $x^{(k)}$ diverges to infinity in probability and we can choose a subsequence $\bar{n}_k$ such that $$ \mathbb{P}(\|x^{(\bar{n}_k)}\| > 2^k)\geq 1- \frac{1}{2^k}$$
with $\bar{n}_1 < \bar{n}_2 < \dots < \bar{n}_k$.
Then the event $\bar{A}_k= \{\|x^{(n_k)}\| < 2^k \}$ is such that $\sum_k \mathbb{P}(\bar{A}_k) < \infty$. 
By the Borel-Cantelli lemma, we find that
$$ \mathbb{P}\left(\limsup_{k\to\infty} \{\|x^{(n_k)}\| < 2^k\} \right) = 0.
$$
This proves that for any $\varepsilon>0$ given $\mathbb{P}(\|x^{(n_k)}\| \leq \varepsilon \mbox{ infinity often})=0$ which is equivalent to
saying $\mathbb{P}(\|x^{(n_k)}\| >  \varepsilon) = 1$ or yet equivalently $x^{(n_k)}\to \infty$ almost surely.
\end{itemize}


\end{proof}


\section{Proof of Theorem \ref{thm-heavy-tail}}
\begin{proof} Due to the addition of Gaussian noise to post-activations, we have the recursion over the layers
\begin{equation}
\tilde{x}^{(k+1)}=  M_{W^{(k+1)},\xi^{(k+1)}}(\tilde{x}^{(k)}):=W^{(k+1)}\tilde{x}^{(k)} +\xi^{(k+1)}
\end{equation}
where $\tilde{x}^{(k)}$ denotes the input to the $(k+1)$-st layer and $\xi^{(k)}$ is a random vector with components $\xi^{(k)}_i$ that are i.i.d. mean zero random variables. The map $M_{W^{(k)},\xi^{(k)}}$ is a random Lipschitz (linear) map whose convergence behavior has been studied in the literature. If the following conditions hold
\beq 
\mathbb{E}\left[ \max\left(0,\log(\|W^{(k+1)}\|)\right)\right] < \infty,\\ 
\mathbb{E} \left[\max\left(0,\log(\|\xi^{(k+1)}\|)\right)\right]<\infty, \label{eq-as-cond-1}\\
c_1 =\inf_k \frac{1}{k}\mathbb{E}\log\|W^{(k)}W^{(k-1)}\dots W^{(1)}\| < \infty,
\label{eq-as-cond-2}
\eeq
then it is known that $\tilde{x}^{(k)}$ admits an almost sure limit $\tilde{x}^{(\infty)}$
in which case the limit is given by 
the formula 
\beq x^{(\infty)}=\sum_{j=1}^\infty \left( \prod_{i=1}^{j-1}W^{(i)}\right) \xi^{(j)},
\label{eq-series-sum-as}
\eeq
(see e.g. \cite[Thm. 2.1]{diaconis1999iterated}). We check the conditions in \eqref{eq-as-cond-1} and \eqref{eq-as-cond-2}. The second condition in \eqref{eq-as-cond-1} is satisfied by the assumption on the noise $\xi^{(k)}$, and the first condition in \eqref{eq-as-cond-1} is satisfied as
\begin{equation}
    \begin{split}
        &\mathbb{E}\left[ \max\left(0,\log(\|W^{(k+1)}\|)\right)\right]\\
        \leq& \mathbb{E}\left[ \max\left(0, \frac{1}{2}\log\left( \sum_{i,j=1}^d (W_{ij}^{(k+1)})^2\right)\right)\right] < \infty\,,
    \end{split}
\end{equation}

where we used the fact that $ \sum_{i,j=1}^d (W_{ij}^{(k+1)})^2$ is a chi-square distribution with $d^2$ degrees of freedom. Finally, the condition \eqref{eq-as-cond-2} is equivalent to
\beq 
c_1 =  \inf_k \frac{1}{k} \mathbb{E}\log \frac{\|x^{(k)}\|}{\|x^{(0)}\|}, 
\eeq
where $x^{(k)}$ are the iterations without noise, i.e. $x^{(k)}$ satisfies $x^{(k+1)} = W^{(k+1)}x^{(k)}$ starting from $x^{(0)}$. It follows from the analysis of Theorem \ref{thm-log-leaky-relu} that we have also
\beq 
c_1 = \mu_1. 
\label{eq-c1-vs-mu1}
\eeq
Due to the choice of $\sigma=\bar{\sigma}_1(s,d)$, by Theorem \ref{thm-almost-sure}, we have also $\mu_1 < 0$. We conclude from \eqref{eq-c1-vs-mu1} that $c_1 <0$ and \eqref{eq-as-cond-2} is also satisfied. Hence, having checked that assumptions \eqref{eq-as-cond-1}--\eqref{eq-as-cond-2} hold, we conclude that the limit $\tilde{x}^{(\infty)}$ exists, it is non-zero and is given by the series sum \eqref{eq-series-sum-as}. With the addition of i.i.d. noise to activations, moments cannot grow slower; i.e. it is not hard to show that 
$$\mathbb{E}(\|\tilde{x}^{(k)}\|^p)\geq  \mathbb{E}(\|{x}^{(k)}\|^p)$$
with the same initialization i.e. $ x^{(0)} = \tilde{x}^{(0)}$. By Theorem \ref{thm-leaky-relu}, we also know that 
$
\mathbb{E}(\|x^{(k)}\|^p)\to\infty$ for any $p>s$ as $k \to \infty$. Therefore we conclude that 
$$\mathbb{E}(\|\tilde{x}^{(k)}\|^p)\to \infty, \quad \mbox{for any } p>s, $$
as $k\to \infty$. Then, we have necessarily $\mathbb{E}(\|\tilde{x}^{(\infty)}\|^p)=\infty$ because otherwise $\tilde{x}^{(k)}$ would converge to $\tilde{x}^{(\infty)}$ in $L_p$ which would be a contradiction as $\mathbb{E}(\|\tilde{x}^{(k)}\|^p)\to \infty$. This proves that the limit $\tilde{x}^{(\infty)}$ is heavy tailed in the sense that its moments of order $p$ are infinite for any $p>s$. In particular for $s<2$, this implies that the variance of the limit $\tilde{x}^{(\infty)}$ is infinite. This completes the proof. 
\end{proof}

\section{A Supporting Lemma}
\begin{lemma}\label{lemma-var-mix}
Let $M_i$ be the random variables and $p_i$ be the constant weights. Let $M$ be the mixture distribution $M:= \sum_{i} p_iM_i$. We have
\begin{small}
\begin{equation*}
    \mathrm{var}(M)=\sum_i p_i \mathrm{var}(M_i) + \sum_i p_i (\mathbb{E}[M_i])^2 - \left( \sum_i p_i \mathbb{E}[M_i]\right)^2
\end{equation*}
\end{small}
\end{lemma}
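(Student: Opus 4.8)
The plan is to read the formal expression $M := \sum_i p_i M_i$ in the sense in which it is actually used in the proof of Theorem~\ref{thm-relu-log-out}, namely as a \emph{mixture} distribution: the $p_i$ are nonnegative weights summing to one, and $M$ is the random variable that, with probability $p_i$, is distributed as $M_i$. To make this precise I would introduce an auxiliary discrete selector $I$ with $\mathbb{P}(I=i)=p_i$, chosen independently of the family $\{M_i\}$, so that the conditional law of $M$ given $\{I=i\}$ equals the law of $M_i$. The whole statement then reduces to an instance of the law of total variance applied to the pair $(M,I)$, under the standing assumption that the first two moments of each $M_i$ are finite so that every quantity below is well defined.

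First I would record the two conditional quantities $\mathbb{E}[M \mid I=i] = \mathbb{E}[M_i]$ and $\mathrm{var}(M \mid I=i) = \mathrm{var}(M_i)$, which are immediate from the construction of the selector. The law of total variance gives $\mathrm{var}(M) = \mathbb{E}[\mathrm{var}(M\mid I)] + \mathrm{var}(\mathbb{E}[M\mid I])$. The first (within-component) term evaluates to $\sum_i p_i \,\mathrm{var}(M_i)$, producing the leading term of the claimed identity. The second (between-component) term is the variance of the discrete random variable $i \mapsto \mathbb{E}[M_i]$ under the weights $p_i$; expanding it as the second moment minus the squared mean yields exactly $\sum_i p_i (\mathbb{E}[M_i])^2 - \left(\sum_i p_i \mathbb{E}[M_i]\right)^2$, which are the remaining two terms.

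Alternatively, and perhaps more transparently, I could bypass conditioning entirely and compute directly from the mixture law: $\mathbb{E}[M] = \sum_i p_i \mathbb{E}[M_i]$ and $\mathbb{E}[M^2] = \sum_i p_i \mathbb{E}[M_i^2]$, then substitute $\mathbb{E}[M_i^2] = \mathrm{var}(M_i) + (\mathbb{E}[M_i])^2$ and subtract $(\mathbb{E}[M])^2$. Both routes are short. There is no genuine obstacle here: the only point requiring care is the interpretation of the symbol $M := \sum_i p_i M_i$ as a mixture rather than a pointwise weighted sum (the two give genuinely different variances), together with the conditions $\sum_i p_i = 1$ and finiteness of $\mathbb{E}[M_i^2]$, which guarantee that the rearrangements and the exchange of summation with expectation are legitimate.
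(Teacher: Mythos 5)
Your proposal is correct, and your ``alternative'' route is in fact the paper's own proof verbatim in spirit: the paper computes the raw moments $\mu^{(r)} = \sum_i p_i \mu_i^{(r)}$, writes $\mathrm{var}(M) = \mu^{(2)} - (\mu^{(1)})^2$, and substitutes $\mu_i^{(2)} = \mathrm{var}(M_i) + (\mu_i^{(1)})^2$ --- exactly your direct computation. Your primary route via the law of total variance with an explicit selector $I$ satisfying $\mathbb{P}(I=i)=p_i$ is a genuinely different (though equivalent) decomposition: it buys conceptual clarity, since the three terms of the identity appear immediately as the within-component variance $\mathbb{E}[\mathrm{var}(M\mid I)]$ and the between-component variance $\mathrm{var}(\mathbb{E}[M\mid I])$, and it makes the mixture structure explicit rather than implicit; the paper's moment computation is more elementary and avoids introducing conditional expectations, at the cost of the identity looking like an algebraic accident. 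You also deserve credit for flagging the notational point the paper glosses over: as literally written, $M := \sum_i p_i M_i$ is a pointwise weighted sum, whose variance is \emph{not} given by the stated formula; the lemma is true only under the mixture reading, which is how it is actually invoked in the proofs of Theorems \ref{thm-relu-log-out} and \ref{thm-log-leaky-relu} (where $\|\phi_a(z)\|^2$ is a mixture of chi-square laws with binomial weights), and your standing assumptions ($p_i \geq 0$, $\sum_i p_i = 1$, finite second moments) are exactly what is needed and left implicit in the paper.
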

\begin{proof}
Let $\mu^{(r)}$ denote the $r$-th (raw) moment of $M$, and $\mu^{(r)}_i$ the $r$-th moment of $M_i$. Then we obtain
\begin{equation*}
    \mu^{(r)}=\sum_i p_i \mathbb{E} [M_i^r]=\sum_i p_i \mu^{(r)}_i.
\end{equation*}
The variance of $M$ can be written as
\begin{equation*}
    \mbox{var}(M)=\mu^{(2)} -\left( \mu^{(1)} \right)^2=\sum_i p_i \mu^{(2)}_i-\left(\sum_i p_i \mu^{(1)}_i\right)^2.
\end{equation*}
Since $\mu^{(2)}_i=\mbox{var}(M_i)+(\mu^{(1)}_i)^2$, we have
\begin{small}
\begin{equation*}
\begin{split}
    \mbox{var}(M) =& \sum_i p_i (\mbox{var}(M_i)+(\mu^{(1)}_i)^2)-\left(\sum_i p_i \mu^{(1)}_i\right)^2\\
    =& \sum_i p_i \mbox{var}(M_i) + \sum_i p_i (\mathbb{E}[M_i])^2 \\
    &- \left( \sum_i p_i \mathbb{E}[M_i]\right)^2.
    \end{split}
\end{equation*}
\end{small}
\end{proof}

\section{Extensions of results to Convolutional networks}
For a convolutional layer, we can write the process as
\begin{equation*}
    x^{(k+1)}=\phi_a(W^kx^k+b^k),
\end{equation*}
where $x^k$ is a $m_k^2c_k\times 1$ vector which represents co-located $m_k\times m_k$ pixels in $c_k$ input channels, where $m_k$ here is the spatial filter size of the layer $k$. If we introduce the quantities $d_k=m_k^2c_k$, and $n_k$ as the number of filters in layer $k$ then $W^k$ is a $n_k\times d_k$ matrix and each row of $W^k$ represents the weights of one filter. Moreover, we also have $c_{k+1}=n_k$ by the definition. Therefore, we can use our method to initialize the convolutional neural networks where we take $d_k=m_k^2 c_k$.

\section{Further Numerical Experiments and Illustrations}\label{sec-numerical-appendix}
\subsection{Numerical Illustrations}
In this section, we present additional figures and numerical experiments that were not part of the main text due to space considerations.



\begin{figure}[h!]
\centering
    \subfloat[Linear]{
    \includegraphics[width=0.45\columnwidth]{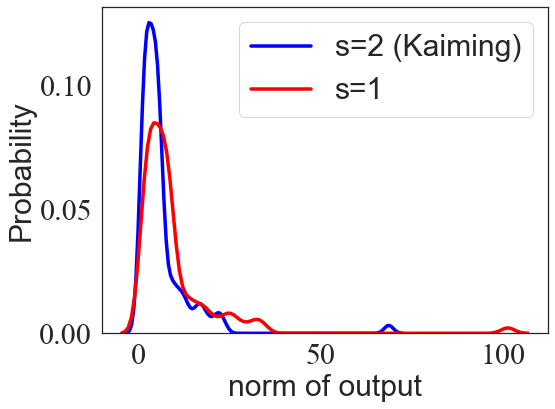}
    \label{fig:output1}
    }
    \hfill
    \subfloat[ReLU]{
    \includegraphics[width=0.45\columnwidth]{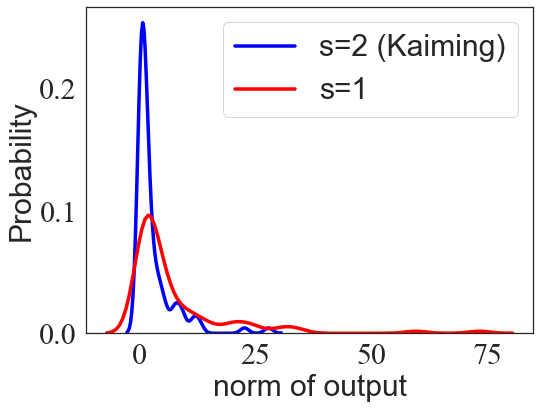}
    \label{fig:output2}
    }
    \hfill
    \subfloat[Leaky-ReLU]{
    \includegraphics[width=0.45\columnwidth]{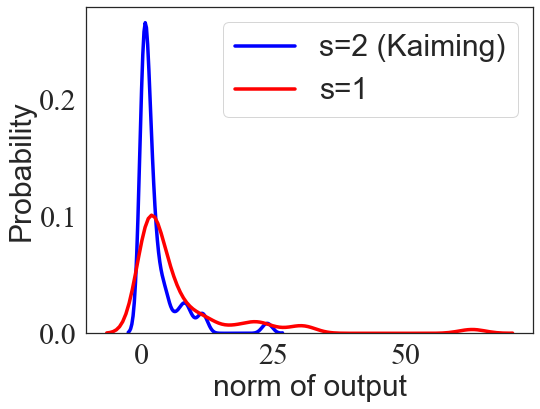}
    \label{fig:output3}
    }
    \hfill
\caption{Distribution of norm of the output $\|x^{(k)}\|$ through $k=100$ layers. \ref{fig:output1}: Probability density of $\|x^{(k)}\|$ for linear activation, where we set $\sigma^2=\frac{1}{d}+\frac{1}{2d^2}\approx \bar{\sigma}_a^2(s,d)$ with $a=0$ and $s=1$ with our initialization. \ref{fig:output2}: Probability density of $\|x^{(k)}\|$ for ReLU activation, where we set $\sigma^2=\frac{2}{d}+\frac{5}{2d^2}\approx \bar{\sigma}_a^2(s,d)$ with $a=0$ and $s=1$ with our initialization. \ref{fig:output3}: Probability density of $\|x^{(k)}\|$, where we set $\sigma^2\approx \bar{\sigma}_a^2(s,d)$ with $a=0.01$ and $s=1$ with our initialization. Kaiming initialization corresponds to $\sigma^2 =\bar{\sigma}_a^2(s,d)$ for $s=2$.}
\label{fig:output}
\end{figure}
\textbf{Distribution of the network output.} 
The distribution of the natural logarithm of the norm of the output $R_{k,0}$ is plotted in Figure \ref{fig:relu-log-out} in the main text. Figure \ref{fig:output} illustrates the distribution the norm of the $k$-th layer output for linear, ReLU and Leaky ReLU activations which supplements Figures \ref{fig:relu-log-out} and \ref{fig:logoutputb}.
The distribution is obtained from the samples by standard kernel density estimation methods provided in the Python package \verb+seaborn+.\protect\footnotemark[1] We observe that our initialization leads to heavier tails compared to Kaiming initialization, where the frequency of small outputs is less frequent in our method compared to Kaiming initialization. 

\footnotetext[1]{This package is publicly available at \url{https://seaborn.pydata.org/}.}

\subsection{Numerical Experiments}
\begin{figure}[h!]
\centering
    \subfloat[Train loss]{
    \includegraphics[width=0.45\columnwidth]{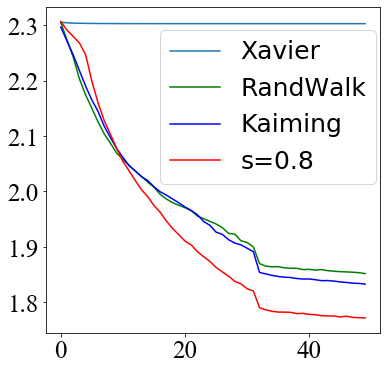}
    \label{fig:image_relu_1}
    }
    \hfill
    \subfloat[Test loss]{
    \includegraphics[width=0.45\columnwidth]{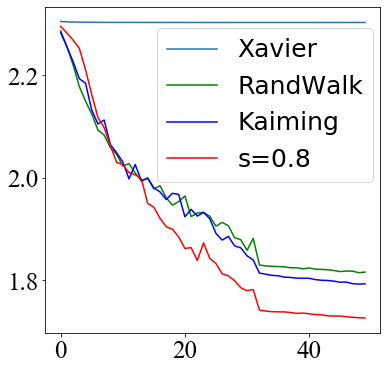}
    \label{fig:image_relu_2}
    }
    \hfill
    \subfloat[Train accuracy]{
    \includegraphics[width=0.45\columnwidth]{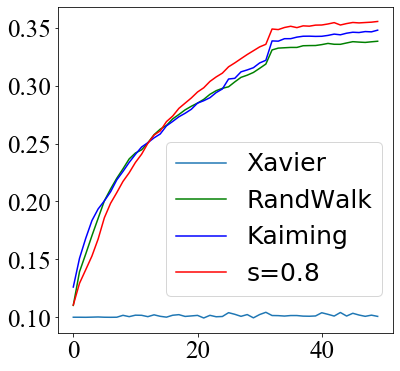}
    \label{fig:image_relu_3}
    }
    \hfill
    \subfloat[Test accuracy]{
    \includegraphics[width=0.45\columnwidth]{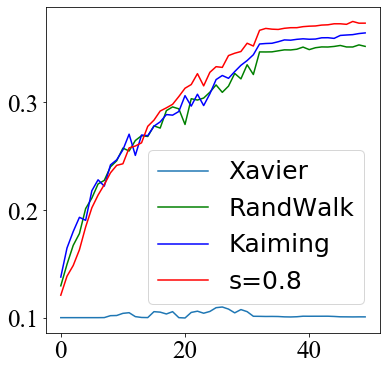}
    \label{fig:image_relu_4}
    }\\
    \hfill\\
    \begin{tabular}{|l|l|l|l|l|}
\hline
\multirow{2}{*}{} & train loss     &        & test loss       &        \\ \cline{2-5} 
                  & mean           & std    & mean            & std    \\ \hline
Xavier            & 2.3026         & 1.1405 & 2.3026          & 1.1985 \\ \hline
Randwalk          & 1.8519         & 0.1231 & 1.8158          & 0.1367 \\ \hline
Kaiming           & 1.833          & 0.098  & 1.793           & 0.109  \\ \hline
s=0.8             & \textbf{1.772} & 0.1294 & \textbf{1.7264} & 0.1469 \\ \hline
                  & train acc      &        & test acc        &        \\ \hline
                  & mean(\%)       & std    & mean(\%)        & std    \\ \hline
Xavier            & 10.07          & 0.0006 & 10.07           & 0.0014 \\ \hline
Randwalk          & 33.84          & 0.0346 & 35.21           & 0.0386 \\ \hline
Kaiming           & 34.8           & 0.0304 & 36.46           & 0.0329 \\ \hline
s=0.8             & \textbf{35.55} & 0.0431 & \textbf{37.37}  & 0.05   \\ \hline
\end{tabular}
\caption{Fully connected network with ReLU activation on CIFAR-10. The results are the $average$ over 10 samples. The x-axis is epoch number.}
\label{fig:image_relu}
\end{figure}

\begin{figure}[ht!]
\centering
    \subfloat[Train loss]{
    \includegraphics[width=0.45\columnwidth]{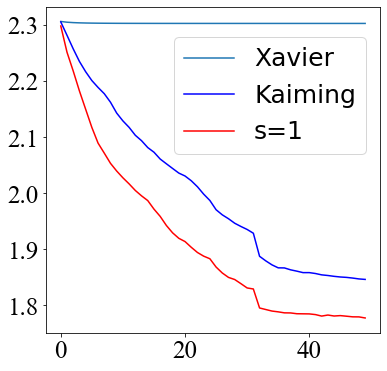}
    \label{fig:image_leaky_1}
    }
    \hfill
    \subfloat[Test loss]{
    \includegraphics[width=0.45\columnwidth]{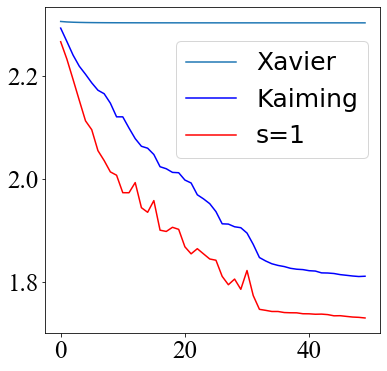}
    \label{fig:image_leaky_2}
    }
    \hfill
    \subfloat[Train accuracy]{
    \includegraphics[width=0.45\columnwidth]{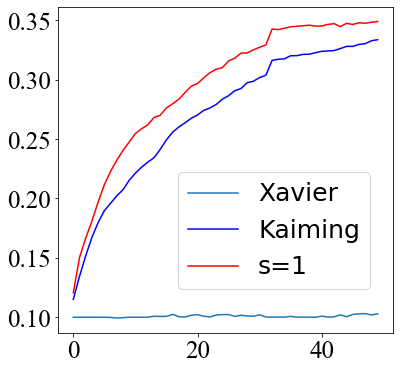}
    \label{fig:image_leaky_3}
    }
    \hfill
    \subfloat[Test accuracy]{
    \includegraphics[width=0.45\columnwidth]{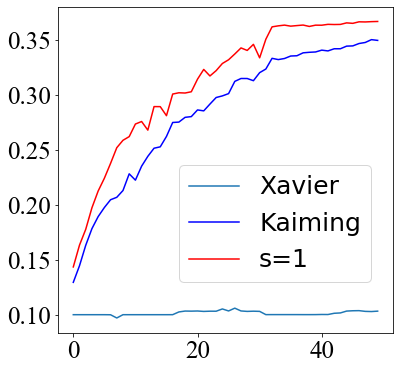}
    \label{fig:image_leaky_4}
    }\\
    \hfill\\  
    \hfill\\
\begin{tabular}{|l|l|l|l|l|}
\hline
\multirow{2}{*}{} & train loss      &        & test loss       &        \\ \cline{2-5} 
                  & mean            & std    & mean            & std    \\ \hline
Xavier            & 2.3016          & 0.0001 & 2.3026          & 0.0001 \\ \hline
Kaiming           & 1.8459          & 0.1266 & 1.8108          & 0.1383 \\ \hline
s=1               & \textbf{1.7771} & 0.1668 & \textbf{1.7298} & 0.1796 \\ \hline
\multirow{2}{*}{} & train acc       &        & test acc        &        \\ \cline{2-5} 
                  & mean(\%)        & std    & mean(\%)        & std    \\ \hline
Xavier            & 10.29           & 0.0147 & 10.32           & 0.0163 \\ \hline
Kaiming           & 33.36           & 0.0376 & 34.93           & 0.0433 \\ \hline
s=1               & \textbf{34.9}   & 0.0512 & \textbf{36.65}  & 0.0568 \\ \hline
\end{tabular}
\caption{Fully connected network with Leaky ReLU on CIFAR-10. The plots are averages over 10 runs, where mean and standard deviation (std) are also reported. The $x$-axis is the epoch number.}
\label{fig:image_leaky}
\end{figure}
\begin{figure}[ht!]
\centering
    \subfloat[Train loss]{
    \includegraphics[width=0.45\columnwidth]{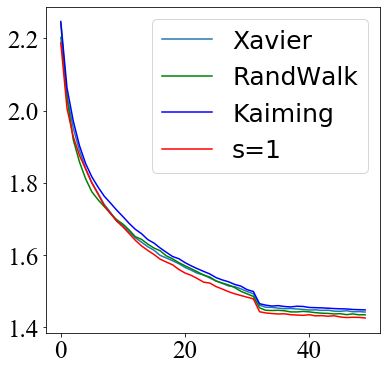}
    \label{fig:image_linear_1}
    }
    \hfill
    \subfloat[Test loss]{
    \includegraphics[width=0.45\columnwidth]{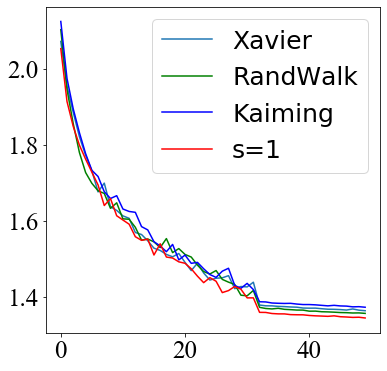}
    \label{fig:image_linear_2}
    }
    \hfill
    \subfloat[Train accuracy]{
    \includegraphics[width=0.45\columnwidth]{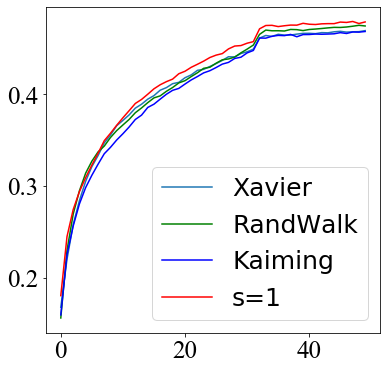}
    \label{fig:image_linear_3}
    }
    \hfill
    \subfloat[Test accuracy]{
    \includegraphics[width=0.45\columnwidth]{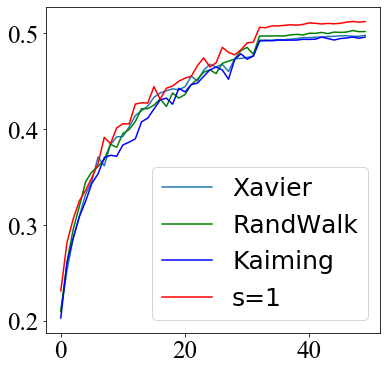}
    \label{fig:image_linear_4}
    }\\
    \hfill\\
\begin{tabular}{|l|l|l|l|l|}
\hline
\multirow{2}{*}{} & train loss      &        & test loss       &        \\ \cline{2-5} 
                  & mean            & std    & mean            & std    \\ \hline
Xavier            & 1.442           & 0.0169 & 1.3636          & 0.0103 \\ \hline
Randwalk          & 1.4344          & 0.0128 & 1.3565          & 0.0104 \\ \hline
Kaiming           & 1.4479          & 0.0124 & 1.3727          & 0.0088 \\ \hline
s=1               & \textbf{1.4255} & 0.0236 & \textbf{1.3445} & 0.0239 \\ \hline
\multirow{2}{*}{} & train acc       &        & test acc        &        \\ \cline{2-5} 
                  & mean(\%)        & std    & mean(\%)        & std    \\ \hline
Xavier            & 46.92           & 0.0059 & 49.79           & 0.0043 \\ \hline
Randwalk          & 47.43           & 0.0033 & 50.18           & 0.0031 \\ \hline
Kaiming           & 46.82           & 0.0053 & 49.61           & 0.0066 \\ \hline
s=1               & \textbf{47.87}  & 0.0096 & \textbf{51.22}  & 0.0114 \\ \hline
\end{tabular}
\caption{Linear fully connected network on CIFAR-10. The results are the $average$ over 10 samples. The x-axis is epoch number.}
\label{fig:image_linear}
\end{figure}

\begin{figure}[ht!]
\centering
    \subfloat[Train loss]{
    \includegraphics[width=0.45\columnwidth]{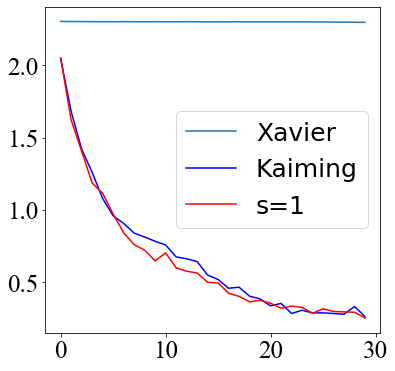}
    \label{fig:leaky_1}
    }
    \hfill
    \subfloat[Test loss]{
    \includegraphics[width=0.45\columnwidth]{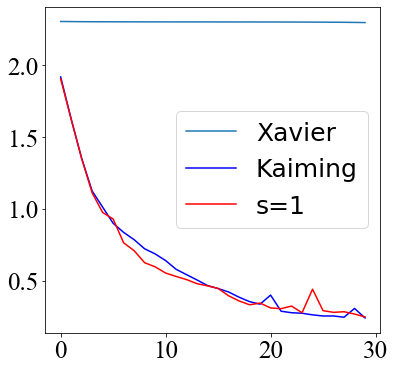}
    \label{fig:leaky_2}
    }
    \hfill
    \subfloat[Train accuracy]{
    \includegraphics[width=0.45\columnwidth]{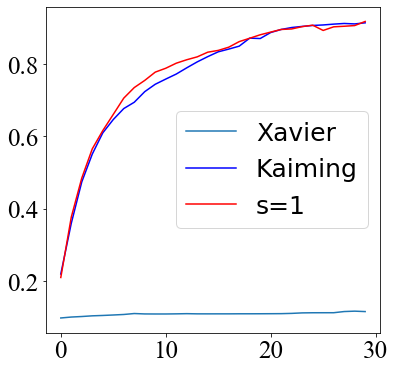}
    \label{fig:leaky_3}
    }
    \hfill
    \subfloat[Test accuracy]{
    \includegraphics[width=0.45\columnwidth]{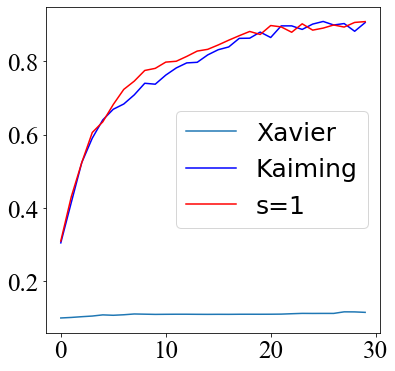}
    \label{fig:leaky_4}
    }\\
    \hfill\\
    
\begin{tabular}{|l|l|l|l|l|}
\hline
\multirow{2}{*}{} & \multicolumn{2}{l|}{train loss} & \multicolumn{2}{l|}{test loss} \\ \cline{2-5} 
                  & mean                & std       & mean               & std       \\ \hline
Xavier            & 2.2975              & 0.0157    & 2.2966             & 0.0196    \\ \hline
Kaiming           & 0.2613              & 0.1027    & 0.2394             & 0.1096    \\ \hline
s=1               & \textbf{0.2518}     & 0.1459    & \textbf{0.2464}    & 0.1496    \\ \hline
                  & \multicolumn{2}{l|}{train acc}  & \multicolumn{2}{l|}{test acc}  \\ \hline
                  & mean(\%)            & std       & mean(\%)           & std       \\ \hline
Xavier            & 11.63               & 0.0502    & 11.57              & 0.0452    \\ \hline
Kaiming           & 91.4                & 0.0446    & 90.55              & 0.0479    \\ \hline
s=1               & \textbf{91.78}      & 0.0575    & \textbf{90.82}     & 0.0578    \\ \hline
\end{tabular}
\caption{Fully connected network with Leaky ReLU on MNIST. The results are the $average$ over 20 samples. The x-axis is epoch number.}
\label{fig:leaky}
\end{figure}

\begin{figure}[ht!]
\centering
    \subfloat[Train loss]{
    \includegraphics[width=0.45\columnwidth]{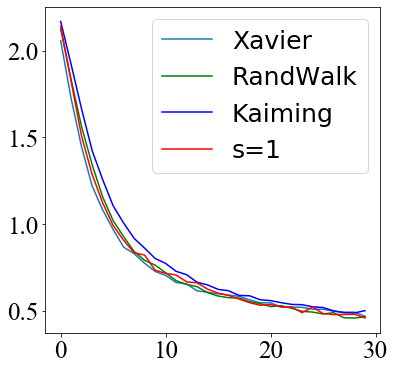}
    \label{fig:linear_1}
    }
    \hfill
    \subfloat[Test loss]{
    \includegraphics[width=0.45\columnwidth]{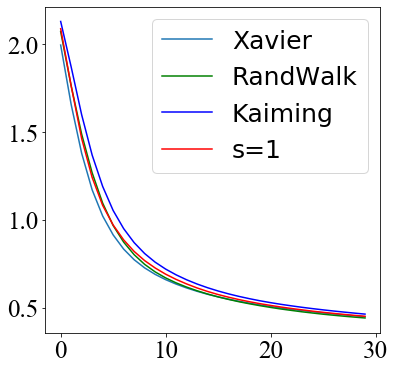}
    \label{fig:linear_2}
    }
    \hfill
    \subfloat[Train accuracy]{
    \includegraphics[width=0.45\columnwidth]{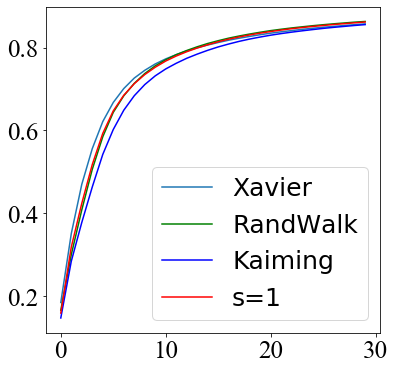}
    \label{fig:linear_3}
    }
    \hfill
    \subfloat[Test accuracy]{
    \includegraphics[width=0.45\columnwidth]{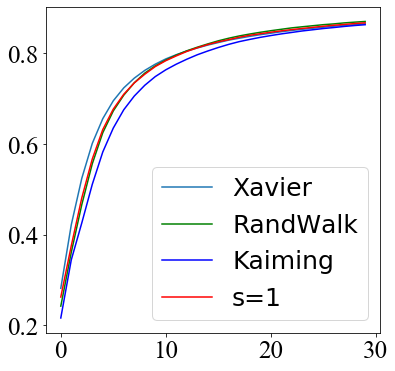}
    \label{fig:linear_4}
    }
    \\
    \hfill\\
\begin{tabular}{|l|l|l|l|l|}
\hline
\multirow{2}{*}{} & \multicolumn{2}{l|}{train loss} & \multicolumn{2}{l|}{test loss}     \\ \cline{2-5} 
                  & mean                & std       & mean                 & std         \\ \hline
Xavier            & 0.4686              & 0.0567    & 0.4485               & 0.0311      \\ \hline
Randwalk          & 0.4652              & 0.042     & \textbf{0.4414}      & 0.0317      \\ \hline
Kaiming           & 0.5011              & 0.0519    & 0.4634               & 0.0399      \\ \hline
s=1               & \textbf{0.4586}     & 0.0371    & 0.451                & 0.0348      \\ \hline
                  & \multicolumn{2}{l|}{train acc}  & \multicolumn{2}{l|}{test acc (\%)} \\ \hline
                  & mean(\%)            & std       & mean(\%)             & std         \\ \hline
Xavier            & \textbf{85.71}      & 0.0084    & 86.52                & 0.0084      \\ \hline
Randwalk          & 86.29               & 0.0096    & \textbf{87.22}       & 0.0104      \\ \hline
Kaiming           & 85.53               & 0.0121    & 86.33                & 0.12        \\ \hline
s=1               & 86.16               & 0.008     & 86.74                & 0.0089      \\ \hline
\end{tabular}
\caption{Linear fully connected network on MNIST. The results are the $average$ loss over 20 runs. The x-axis is epoch number.}
\label{fig:linear}
\end{figure}

\textbf{CIFAR-10 dataset.\protect\footnotemark[2]} 
CIFAR-10 dataset consists of 60000 32x32 colour images in 10 classes, with 6000 images per class. There are 50000 training images and 10000 test images. For ReLU activations, we compare our initializaion method with Kaiming and Xavier method as well as with the random walk initialization. However for the Leaky ReLU activation, we compare our new method with Kaiming and Xavier method only as the parameters of random walk initialization are not available for Leaky ReLU initialization. For understanding the effect of initialization on training, we report the first 50 epochs in the training process where we train our networks with stochastic gradient descent (SGD) using a constant stepsize. We tuned the SGD stepsize and used the same stepsize for each method.


Figure~\ref{fig:image_relu} shows the results of a fully connected network with ReLU activation. 
For our method in the ReLU case, we set $\sigma^2=\frac{2}{d}+\frac{3}{d^2}$, which preserves the moment $s\approx 0.8$ according to Corollary \ref{coro-crit-sigma-asymp}. 
Figure~\ref{fig:image_linear} displays the results of network with linear activation on CIFAR-10 with a similar setup where 
we set $\sigma^2=\frac{1}{d}+\frac{1}{2d^2}$ which corresponds to the choice of $s\approx 1$. Similarly, Figure \ref{fig:leaky} reports the corresponding results for Leaky ReLU. In all cases (linear, ReLU and Leaky ReLU activations), we use two convolutional layers,  20 fully-connected layers and $d=64$ for all hidden layers in this network. We consider four criteria for comparison: train loss, test loss, train accuracy, and test accuracy. We observe that our method performs no worse than other methods (Xavier initialization, Kaiming initialization, and Random walk initialization) and in many cases leads to an improvement.
\footnotetext[2]{This dataset can be downloaded from \url{https://www.cs.toronto.edu/~kriz/cifar.html}.}

\textbf{MNIST dataset.\protect\footnotemark[3]} 
\footnotetext[3]{This dataset can be downloaded from \url{http://yann.lecun.com/exdb/mnist/}.}
MNIST database is a database of handwritten digits with a training set of 60,000 examples, and a test set of 10,000 examples. The setup is similar to our experiments for CIFAR-10. In our results, we consider 20 runs. Figure \ref{fig:relu} (reported in the main text), Figure \ref{fig:leaky} and Figure \ref{fig:linear} show the performance of our method of the fully connected network with ReLU, Leaky ReLU, and linear activations respectively in the first 30 epochs. 
For the ReLU and Leaky ReLU case, we use 20 layers with $d=64$. For the linear case, we use 30 layers with $d=64$. Similar to the CIFAR-10 experiments, we set $s\approx 0.8$ for ReLU, $s=1$ for Leaky ReLU and linear case. 

Both MNIST and CIFAR-10 experiments are implemented by the Python package \verb+torch+.\protect\footnotemark[4] Our experiments are trained on Nvidia GTX 1080Ti GPU. Each experiment of MNIST takes around 3-4 hours, and each experiment of CIFAR-10 takes around 6-7 hours. 

\footnotetext[4]{This package is publicly available at \url{https://pytorch.org/}.}

\end{document}